\documentclass[11pt,a4paper]{article}

\usepackage[a4paper,margin=1in]{geometry}
\usepackage{fontspec}
\usepackage{amsmath,amssymb,amsthm}
\usepackage{unicode-math}
\usepackage{graphicx}
\usepackage{booktabs}
\usepackage{longtable}
\usepackage{array}
\usepackage{xcolor}
\usepackage{caption}
\usepackage[hidelinks,colorlinks=true,citecolor=blue!60!black,linkcolor=blue!50!black,urlcolor=blue!70!black]{hyperref}
\usepackage{enumitem}
\usepackage[numbers,sort&compress]{natbib}
\usepackage{microtype}
\usepackage{authblk}
\usepackage{float}
\usepackage{tikz}
\usetikzlibrary{arrows.meta,positioning,calc}


\theoremstyle{plain}
\newtheorem{proposition}{Proposition}
\newtheorem{corollary}{Corollary}
\theoremstyle{definition}
\newtheorem{definition}{Definition}
\theoremstyle{remark}
\newtheorem{remark}{Remark}

\newcommand{\shuf}{\textsc{shuf}}
\newcommand{\only}{\textsc{only}}
\newcommand{\blocked}{\textsc{blocked}}
\newcommand{\purerand}{\textsc{purerand}}
\newcommand{\Astep}{\text{A}\!\to\!\text{B}}
\definecolor{cV}{RGB}{31,110,168}
\definecolor{cA}{RGB}{201,120,20}
\definecolor{cD}{RGB}{158,50,120}
\definecolor{cGrey}{RGB}{110,110,110}

\title{\textbf{Conflicting Supervision Moves Commitment, Not Capability}\\[4pt]
{\large A $12.29\sigma$ arrangement effect that is exactly zero under a convention-agnostic score}}

\newif\ifanon
\anonfalse                 
\ifanon
  \author[ ]{Anonymous Author(s)}
  \affil[ ]{Paper under double-blind review}
\else
  \author[1]{Wenhui Chen\thanks{\texttt{mc35092@um.edu.mo}}}
  \affil[1]{University of Macau}
\fi
\date{}

\ifanon

  \newcommand{\thecompanion}{the concurrent}
  \newcommand{\Thecompanion}{The concurrent}
\else

  \newcommand{\thecompanion}{the companion}
  \newcommand{\Thecompanion}{The companion}
\fi



\begin{document}
\maketitle

\begin{abstract}
\noindent
Train a model on the same problems written under two incompatible conventions, both correct, and
ask what the \emph{ordering} of that data writes into the parameters.

\textbf{The learning-rate schedule is not a background condition for that question. It is the
averaging operator, and it decides the answer.} We prove a bound in which the arrangement and
the schedule enter the ordering effect as \emph{separate multiplied factors}: the arrangement only as
a block period, the schedule only as how much weight the endpoint can place on any one moment of the
run. A decaying schedule cannot put a large step size and an uncontracted remainder at the same
moment; a constant one does exactly that at the last step. That decay moderates ordering effects has
been reported in pretraining \citep{lrdecay2025curriculum}; the mechanism, the separation, and a
controlled measurement of both halves are ours. Ten orderings of one corpus, one
budget, everything but the path held fixed, run twice under families differing in
\texttt{lr\_scheduler\_type} and nothing else: at a constant rate the interior spans $0.2221$ in
allocation, $11.63$ contrast floors, monotone in how blocked the arrangement is. Under the single cosine every published arm uses, \emph{the same ten arms occupy \textbf{two}
distinguishable states where their own resolution would allow about ten}, across a $54\times$
change of block length; the dispersion between arms does not exceed the seed noise within them
(intraclass correlation $-0.076$, $[-0.409,+0.166]$, three seeds per arm). \emph{``Order matters''
and ``order does not matter'' are the two ends of one knob}, which is what the divided record on
ordering looks like from here.

The theory predicted, and we falsified, the three mechanisms we had pre-registered. It also
explains the one arm that does not move between the two schedules: at blocked training the
arrangement factor is the whole run rather than a short period, so the endpoint is set by the whole
of the schedule's profile and deleting its tail does almost nothing.

\textbf{What the path writes is which convention the model commits to, and no exact-match
benchmark can see it.} Across twelve arms $\mathrm{acc}_A+\mathrm{acc}_B$ is constant to within
$9.7\%$ while the allocation share runs $0.04$ to $0.87$, so the $12.29\sigma$ arrangement switch
this paper measures is \emph{exactly zero} under a convention-agnostic metric. That conservation is
quoted from the decayed family throughout, the constant-rate one being a noisier place to read it,
and we say where each of the two results is measured rather than merging them. Marking the convention in the prompt collapses the switch and reaches $87.5\%$
of the union ceiling.
\end{abstract}

\section{Newton's Apples That Disagree}\label{sec:limit}

Train a model on the same problems written under two incompatible conventions, both correct, and
one thing is obvious in advance: something is lost. The interesting question is \emph{what}.

\emph{The answer is that nothing need be lost from what the model can solve, and a great deal moves
in which correct form it commits to.} Twelve arrangements of one corpus hold
$\mathrm{acc}_A+\mathrm{acc}_B$ constant to within $9.7\%$ while the share written under one
convention runs from $0.04$ to $0.87$. The same measurement is a $12.29\sigma$ effect on one
coordinate and \emph{exactly zero} on the other, and which one a benchmark reports is a property of
the benchmark rather than of the model.

\emph{Whether that commitment survives training at all is decided by the learning-rate schedule},
and this is the paper's central result. We prove that the ordering-dependent part of an endpoint is
bounded by a product of two factors that do not communicate: the arrangement enters only through
the period of its alternation, and the schedule only through how much weight the endpoint can place
on any one moment of the run (Proposition~\ref{prop:sched}). A schedule decaying to zero cannot put
a large step size and an uncontracted remainder at the same moment; a constant one does exactly that
at the last step. The prediction is a knob, and the measurement is the knob at two settings: ten
arrangements of one corpus, one budget, the same three seeds, the two families differing in
\texttt{lr\_scheduler\_type} and in nothing else, span $2.20$ contrast floors under a cosine and
$11.63$ under a constant rate. \emph{``Order matters'' and ``order does not matter'' are the two ends
of one knob}, which is what the divided record on data ordering looks like from here.

\begin{quote}
\textbf{The claim in one sentence.} Conflicting supervision does not have to change what a model can
solve; it changes which correct convention the model commits to, and whether that commitment
survives to the endpoint is set by the schedule, not by the arrangement.
\end{quote}

\noindent \emph{Where the question comes from.} The Capability Convergence Hypothesis (CCH) organises
\emph{inference} around a bounded state fed by an unbounded stream, and separates a
\emph{compressive} channel that mixes the stream into $O(1)$ state from a \emph{verbatim index} that
pays to keep bindings addressable \citep{chen2026cch}. Let the bindings \emph{disagree} and the two
channels stop being interchangeable: a compressive learner forced to mix has a well-defined optimum,
the mixture, while an indexed one can keep both bindings and answer either way when the query names
one. This paper asks the training-time form of that question, with the data path as the stream and
the parameters as the bounded state. \emph{That is where the question came from and not what the
answer depends on}: every result below is derivable without any of the family's vocabulary, the
correspondence is audited in \S\ref{sec:family}, and Remark~\ref{rem:wallkind} states exactly where
it stops.

\paragraph{The path space, and the limit that collapses it.} Every arrangement of a two-source
corpus sits on one ladder: finish source $A$ entirely, then $B$ (\blocked); alternate blocks of
$L$ optimiser steps ($A^LB^L\cdots$); alternate every step ($L{=}1$); mix within each batch
(\shuf). \emph{Only the order varies}, and the order is a message: with $n$ rows from each source
it carries $\log_2\binom{2n}{n}$ bits, which grows without bound. The receiver is the parameter
vector at the end of training, and the question this paper asks is how many \emph{distinguishable
endpoints} it produces.

At the ideal end of the ladder the answer is none. Divide infinitely, $\eta\to0$ with alternation
frequency $\to\infty$, and the averaging theorem
\citep{bogoliubov1961asymptotic,kapitza1951pendulum} says the trajectory follows the mean-field
flow
\begin{equation}\label{eq:meanfield}
\dot\theta \;=\; -\nabla\!\big[\tfrac12 L_A+\tfrac12 L_B\big],
\end{equation}
whose cross-entropy minimiser at a contested input is $q^\star=(p_A+p_B)/2$. \emph{Under conflict
the infinitely divided limit installs the Bayes-optimal mixture}, a weighted coin flip between the
two conventions, and every one of those $\log_2\binom{2n}{n}$ orders lands in the same place. At
the budget anyone actually trains at the receiver is not that deaf, and \emph{the gap between the
limit and the budget is what this paper measures}.

\paragraph{How deaf it is, at the schedule the field uses.} Across every arm that differs only in
path ($L{=}1$ to $54$, \purerand, \shuf, \blocked; three seeds throughout) the allocation runs
$0.407$ to $0.869$. Against a seed dispersion of $\hat\sigma=0.0234$ that range would separate
about ten values at a resolution of $2\hat\sigma$. It separates \textbf{two}: nine arms between
$0.407$ and $0.492$, and \blocked\ alone at $0.869$. The count is not a threshold artefact, the
largest gap inside the occupied group being $0.0431$ against $0.3770$ to \blocked, a ratio of
$\mathbf{8.7}$, and a second pretraining family gives two groups at $\mathbf{20.7}$.

\emph{Every arm just counted trains under a cosine, and that is the point rather than a caveat.}
The same ten paths at a constant rate occupy \emph{four} states. So the two-state count is not a
fact about what a path can carry; it is what survives after a decaying schedule has integrated the
path away, which is Proposition~\ref{prop:sched} read as a measurement. The wall is a resolution
limit that the schedule sets, and the two mechanisms that get anything through it, a stopping
phase and a query-time key, are this paper's other results rather than its exceptions.

\paragraph{Contributions.}
\begin{enumerate}[leftmargin=1.5em,itemsep=3pt,topsep=2pt]
\item \textbf{The ordering effect factors, and the schedule is one of the two factors}
(Proposition~\ref{prop:sched}). The $\pi$-dependent part of an endpoint is bounded by
$\tau_L\cdot\sup_t\|w\|$: the arrangement enters only as a block period, the schedule only as the
weight the endpoint can place on one moment. \emph{We do not claim a new averaging theorem}: the
$\tau_L\to0$ corner is classical incremental-gradient theory (\S\ref{sec:related}), and we do not
claim the bound predicts a magnitude. What is new is that the two factors separate, which turns a
divided empirical record into one knob and makes three of this paper's measurements consequences of
each other rather than separate findings.
\item \textbf{The knob, measured at both settings} (\S\ref{sec:average}). Ten arrangements, one
corpus, one budget, the same three seeds, families differing only in \texttt{lr\_scheduler\_type}:
$2.20$ contrast floors under a cosine against $\mathbf{11.63}$ under a constant rate, monotone in how
blocked the arrangement is. \blocked\ moves $1.0$ floor between the two against $17.0$ at $L{=}54$,
which is the bound's corner case ($\tau_L=T$) and not a coincidence.
\item \textbf{A decomposition that separates what a model can do from which form it writes}
(Definition~\ref{def:decomp}), computable from the per-problem scores an evaluation already
produces. Under it this paper's largest effect and a null are one measurement on two axes, which is
a warning about any benchmark whose material admits more than one correct form.
\item \textbf{The stopping phase, and a query-time key.} \blocked\ is not a distinct mechanism but
the lowest-frequency alternation stopped at maximum swing; a mirror test predicts a sign and a
symmetry in advance and finds them on two corpora (\S\ref{sec:phase}). Marking the convention in the
prompt then \emph{collapses the arrangement effect} from $5.16$ floors to $0.03$ and reaches $87.5\%$
of the union ceiling. \emph{The point is not that a marker helps but that it removes the path's
influence}, which no schedule could (\S\ref{sec:key}).
\item \textbf{A record of what did not work.} Three pre-registered mechanisms died (batch purity;
momentum-window cancellation; the palindromic schedule splitting theory recommends), one positive
control inverted, one control came back void, and all are reported with the thresholds that decided
them and the commits that froze them (Table~\ref{tab:ledger}, Appendix~\ref{app:registry}).
\end{enumerate}

\paragraph{What this paper does not claim.} Five statements a reader could reasonably extract from
the above are not supported by what is measured here, and separating them from the ones that are is
worth more than another result.
\begin{itemize}[leftmargin=1.5em,itemsep=2pt,topsep=2pt]
\item \emph{Not ``order does not matter under decay.''} The cosine interior spans $2.20$ floors
against a minimum detectable difference this design never reaches at any evaluation budget, so the
flat interior is a statement about the instrument. What carries the null is the pooled intraclass
correlation ($-0.076$, $[-0.409,+0.166]$) and the two-state count, both weaker than ``no effect.''
\item \emph{Not a general law of LLM training.} The ladder, mirror, ratio and key arms are one
budget on Qwen2.5-7B with two conflict constructions; Qwen3-8B-Base replicates the switch and the
key and does \emph{not} replicate the ladder's registered criterion or the mirror
(\S\ref{sec:average}, \S\ref{sec:phase}). Figure~\ref{fig:travel} is the travel record, misses
included.
\item \emph{Not independence of the two coordinates.} We claim \emph{conservation of $C$}, which is
measured. $\mathrm{Cov}(p,s)\neq0$ is also measured and fails at $5.17$ floors, so $S$ is an
allocation over a population whose difficulty and convention preference are correlated
(Definition~\ref{def:decomp}).
\item \emph{Not that conflict is harmless.} An exact-match benchmark reporting one convention sees
a real loss, and choosing what a model \emph{commits} to is precisely what production
post-training is for. What is conserved is the union, and only a scorer reading both conventions
recovers it. What the measurement does contradict is the field's instinct that \emph{conflict
destroys and arrangement decides how much}, an instinct two of our own pre-registered mechanisms
shared: nothing is destroyed, and below corpus scale nothing is even moved.
\item \emph{Not a quantitative theory.} Proposition~\ref{prop:sched} is used for signs, orderings
and which factor a knob enters. The direction is derived and every magnitude is measured.
\item \emph{Not a statement about every coordinate.} Everything is measured along a
one-parameter family of paths and on one behavioural coordinate, so another coordinate could carry
order information we did not look at, and the checkpoints that would settle it were reclaimed after
evaluation. Two controls searched \emph{off} that family and found nothing outside the same
interval (batch purity moves $3\%$ of the span; the mirror relocates within it), and adding a
coordinate can only \emph{raise} the number of occupied states, so two is a floor.
\end{itemize}

\S\ref{sec:conserve} shows which coordinate separates the two states, allocation and not
capability, so what the path selects is a policy and not a competence. \S\ref{sec:method} is
where the path, the channel and the two coordinates stop being prose: it defines them, states the
three propositions the measurements are read against, and shows that two of the mechanisms we
pre-registered were dead before either ran. Figure~\ref{fig:arch} is that itinerary as a map:
one instrument, one wall, and the three terms measured to get past it, each with the verdict that
decided it, so the three can be seen as three of a kind rather than met twenty pages apart.
Figure~\ref{fig:roadmap} is the object itself, drawn twice, and it is where the schedule result
can be seen rather than read: the same paths, the same budget and the same seeds produce two
different pictures when one field of the trainer configuration changes.

\begin{figure}[t]
\centering
\resizebox{\linewidth}{!}{\begin{tikzpicture}[font=\small,>=Stealth,
 tk/.style={font=\scriptsize,cGrey},
 lb/.style={font=\scriptsize,cGrey!90!black},
 stage/.style={font=\scriptsize\bfseries,cGrey!60!black},
 box/.style={draw=cGrey!45,line width=0.5pt,rounded corners=1.6pt,inner sep=4pt,
             align=left,font=\scriptsize,text width=3.35cm,minimum height=#1,anchor=center},
 sec/.style={font=\scriptsize,cGrey!70!black},
 flow/.style={cGrey!65,line width=1.5pt,->}]

\def\hr{\textcolor{cGrey!35}{\rule{3.05cm}{0.4pt}}}

\node[stage,anchor=south west] at (0.15,3.81) {THE OBJECT};
\node[box=7.17cm,fill=cGrey!6] (obj) at (1.825,0)
 {\textbf{One corpus, two conventions, both correct.} An ordering of it is a monotone
  path; there are $\binom{1728}{864}$ of them.\\[4pt]
  \hr\\[4pt]
  \textcolor{cGrey!70!black}{Nothing here is a defect to be repaired, so no remedy has a target
  to converge to, and the question becomes which coordinate the conflict moves.}};

\node[stage,anchor=south west] at (4.35,3.81) {THE INSTRUMENT};
\node[box=7.17cm,fill=cV!6] (inst) at (6.025,0)
 {\textbf{Definition~\ref{def:decomp}} splits one exact-match score in two, at no cost, from the
  per-problem scores an evaluation already produces.\\[4pt]
  \hr\\[4pt]
  \textbf{capability $C$} \hfill \textcolor{cGrey!70!black}{\S\ref{sec:conserve}}\\
  moves $9.7\%$ across twelve arms\\[3pt]
  \textbf{allocation share} \hfill \textcolor{cGrey!70!black}{\S\ref{sec:conserve}}\\
  moves $0.04\to0.87$, at $12.29\sigma$\\[4pt]
  \hr\\[4pt]
  \textcolor{cGrey!70!black}{A benchmark reports the first and cannot express the second: this
  paper's largest effect reads \textbf{exactly zero}.}};

\node[stage,anchor=south west] at (8.55,3.81) {THE WALL};
\node[box=7.17cm,fill=cA!7] (wall) at (10.225,0)
 {\textbf{Proposition~\ref{prop:avg}} \hfill \textcolor{cGrey!70!black}{\S\ref{sec:method}}\\[1pt]
  The endpoint is the \emph{schedule-weighted average} of what the path visited, so rearranging
  below corpus scale changes the average not at all.\\[4pt]
  \hr\\[4pt]
  \textbf{Three mechanisms were pre-registered} for how the path might write. The theorem
  predicted, and we falsified, all three.\\[3pt]
  \textcolor{cGrey!70!black}{What survives is not a mechanism list but a question: what is
  \emph{not} an average?}};

\node[stage,anchor=south west] at (12.75,3.81) {WHAT GETS THROUGH};
\node[box=2.15cm,fill=cD!6] (sched) at (14.425,2.51)
 {\textbf{1. the schedule} \hfill \textcolor{cGrey!70!black}{\S\ref{sec:average}}\\[1pt]
  constant rate: $\mathbf{11.63}$ floors\\
  single cosine: $2.20$ floors\\
  \textcolor{cGrey!70!black}{the two ends of one knob}};
\node[box=2.15cm,fill=cD!6] (stop) at (14.425,0)
 {\textbf{2. the stopping phase} \hfill \textcolor{cGrey!70!black}{\S\ref{sec:phase}}\\[1pt]
  $\mathbf{23.89}$ floors, and the only term that survives \emph{both} schedules};
\node[box=2.15cm,fill=cD!6] (key) at (14.425,-2.51)
 {\textbf{3. a query-time key} \hfill \textcolor{cGrey!70!black}{\S\ref{sec:key}}\\[1pt]
  collapses the switch, $5.16\to\mathbf{0.03}$ floors, and reaches $87.5\%$ of the union
  ceiling};

\draw[flow] (3.50,0) -- (4.35,0);
\draw[flow] (7.70,0) -- (8.55,0);
\draw[cGrey!65,line width=1.5pt] (11.90,0) -- (12.32,0);
\draw[cGrey!65,line width=1.5pt] (12.32,2.51) -- (12.32,-2.51);
\foreach \y in {2.51,0,-2.51} {\draw[flow] (12.32,\y) -- (12.75,\y);}
\end{tikzpicture}}
\caption{\textbf{The paper as a map: one instrument, one wall, and the three terms that get
past it.} Read left to right. \emph{Every box carries a measured verdict rather than a
description}, which is the property that decides whether a map of this kind is worth printing:
change the experiments and this picture changes, because two of its boxes would say the opposite
of what they say and the wall's three exits would be a different three. The instrument
(Definition~\ref{def:decomp}) is what makes the rest expressible, since the quantity every term
here moves is invisible to the score a benchmark reports. The three exits are not alternatives to
one another: the schedule decides whether an ordering survives to the endpoint at all, the
stopping phase is the one term that survives either schedule, and the key is a second channel
rather than more bandwidth on this one. Magnitudes are in contrast floors and are the body's,
not recomputed here.}
\label{fig:arch}
\end{figure}

\begin{figure}[t]
\centering
\resizebox{\linewidth}{!}{\input{tables/fig_roadmap}}
\caption{\textbf{The space of data paths, and its image under each of the two schedules.}
\textbf{(a)} A data path \emph{is} a monotone lattice path: at each optimiser step the trainer
consumes a row from $A$ or from $B$. There are $\binom{1728}{864}$ of them: the size of the space
sampled, not a quantity anything here transmits. \shuf\ hugs the diagonal, a ladder rung is
a coarser staircase, \blocked\ is the corner. \textbf{(b)} and \textbf{(c)} Its image, in the
read-out's own coordinates (Definition~\ref{def:decomp}), every arm of the synthetic conflict,
on the three seeds the two families share. \emph{The two panels differ in one field of the
trainer configuration and in nothing else}: same corpus, same paths, same budget, same seeds,
same step. In both, the arms lie in a narrow capability band and spread along it, so capability
is the band's width and allocation is its length. Under the single cosine every published arm
uses, the nine ladder arms fall in \emph{one} cell of the $2\hat\sigma$ resolution of
Definition~\ref{def:channel} and \blocked\ in another: two states, with eight empty cells between
them where the range would support about ten, the eight interior arms spanning $0.042$ of share,
$2.2$ contrast floors, below anything this design can resolve. Under a constant rate those same
eight span $0.2221$, $11.63$ floors, and $L{=}54$ leaves them for the corner.
\emph{The difference between the panels is Proposition~\ref{prop:avg}}: the endpoint is the
schedule-weighted average of what the path visited, so a schedule that decays to zero has
averaged the ordering away by the time anyone scores it. Axes are square and identical across the
two panels, so both bands are true $45^\circ$ strips and the pictures can be compared by eye.
Source: \texttt{constlr\_ladder\_table}.}
\label{fig:roadmap}
\end{figure}

\begin{figure}[t]
\centering
%
%
%
\input{tables/fig_ladder_pts}
\begin{tikzpicture}[font=\small,>=Stealth]
\def\x{0.0}
\def\fxa{6.6}
\def\fxb{11.4}
\def\fy{0.85}
\def\fw{4.25cm}
\def\nohyph{\hyphenpenalty=10000\exhyphenpenalty=10000\relax}
\draw[cGrey,thin,->] (\x+2.1,6.20) -- (\x+2.1,0.75);
\node[cGrey,font=\scriptsize,rotate=90,anchor=south] at (\x+1.85,3.475) {finer division $\longrightarrow$};
\foreach \y/\name/\desc/\share/\col in {%
 5.7/{\blocked\ ($L{=}162$)}/{all of $A$, then all of $B$}/{$\ldrblocked$}/cD,
 4.8/{$L{=}54$}/{one contiguous pass per epoch}/{$\ldrlfour$}/cD,
 3.9/{$L{=}27,18,9,6,3$}/{coarse-to-fine alternation}/{$\ldrcoarselo$--$\ldrcoarsehi$}/cA,
 3.0/{$L{=}1$ / \purerand}/{pure batches, one step long}/{$\ldrfinelo$--$\ldrfinehi$}/cA,
 2.1/{\shuf}/{mixed inside each batch}/{$\ldrshuf$}/cA,
 1.2/{$\eta\to0,\ \omega\to\infty$}/{\emph{the infinite-division limit}}/{$0.50$ (theory)}/cV}
 {\node[draw=\col!70,fill=\col!6,rounded corners=2pt,inner sep=4pt,minimum width=3.6cm,
  minimum height=0.60cm,anchor=west] (n) at (\x+2.6,\y) {\name};
 \node[cGrey,font=\scriptsize,anchor=west] at (\x+6.6,\y) {\desc};
 \node[\col!85!black,font=\small,anchor=west] at (\x+11.4,\y) {share \share};}
\node[cGrey,font=\scriptsize,anchor=north west,text width=\fw,align=flush left] at (\x+\fxb,\fy)
 {\nohyph share $=$ allocation to the minority convention,\\
  $\mathrm{acc}_B/(\mathrm{acc}_A{+}\mathrm{acc}_B)$};
\draw[cA!80!black,densely dashed] (\x+11.3,1.70) rectangle (\x+14.35,4.30);
\node[cA!85!black,font=\scriptsize,anchor=north west,text width=\fw,align=flush left] at (\x+\fxa,\fy)
 {\nohyph \emph{the dashed group}: everything below corpus scale sits at the averaged limit,
  $\ldrintlo$--$\ldrinthi$, \ldrintfloors\ contrast floors (offset $=$ pretrained prior)};
\end{tikzpicture}%
\caption{\textbf{The division ladder under the single cosine, with the measured allocation at
every rung} (synthetic conflict, Qwen2.5-7B, three seeds; \S\ref{sec:average}). From one
optimiser step to a quarter of the epoch, arrangement does not move the allocation: the whole
interior of the ladder sits at the averaging limit. \emph{That flatness is the schedule's and
not the path's}: the identical ladder at a constant rate spans $11.63$ contrast floors in the
same interior. Under this schedule the only escapes are at the top, stopping the alternation
mid-swing (\S\ref{sec:phase}), and off the ladder entirely, via a query-time key
(\S\ref{sec:key}). Source: \texttt{constlr\_ladder\_table}.}
\label{fig:ladder}
\end{figure}

\paragraph{The instrument, which outlives the result.} Score a
model on a benchmark whose answers admit more than one correct convention and exact match reports
a product of two things: what the model can do, and which form it decided to write in.
Definition~\ref{def:decomp} splits them at essentially no cost, using only the per-problem scores
an evaluation already produces, and the split is load-bearing in a way that is easy to state and
hard to unsee.

\emph{This paper's largest effect is exactly zero on the coordinate that measures capability.}
The arrangement switch we measure at $12.29\sigma$, against its own control inert at
$0.15\sigma$, moves the allocation share from $0.04$ to $0.87$ and moves
$\mathrm{acc}_A+\mathrm{acc}_B$ not at all beyond $9.7\%$ arm to arm. A twelve-sigma result and
a null are \emph{the same measurement} read on two axes. Any benchmark carrying contested
conventions is silently reporting the first number and calling it the second, and any
intervention evaluated that way (an ordering, a schedule, a data mixture, a decoding change)
can post a large effect while changing nothing a user would call capability. The recipe-level
member reports the same effect on a different coordinate \citep{frl2026}; the number here is
ours, with its own same-convention control (Table~\ref{tab:builds}).

We therefore state the decomposition as a contribution in its own right rather than as
apparatus for the ladder, and we state its price with it: $C$ is comparable only within a
corpus, and $S$ equals the convention policy only when the covariance condition of
Definition~\ref{def:decomp} holds, which we test and which fails at $5.17$ floors. Both limits
are smaller than the effects the coordinates separate, and neither is a reason to keep reporting
the product.

\section{Related Work}\label{sec:related}

We group the literature by which side of the averaging wall its object sits on, a distinction
that cuts across the usual subfield boundaries. One assumption runs through most of it, and this
paper's result is what happens when it is dropped.

\paragraph{The premise almost everyone shares: a conflict is an error.} The knowledge-conflict
literature names our object exactly. \citet{xu2024knowconflict} call it \emph{intra-memory}
conflict, discrepancy inside the parameters traced to inconsistency in the training data, and
every remedy it surveys is a repair: refine the parametric knowledge, regulate the behaviour,
reweight or filter the offending rows \citep{incompletelearning2026}. A repair presupposes a
target, and a target presupposes that one of the conflicting forms is \emph{wrong}. Our natural
corpus is built so that neither is (Table~\ref{tab:corpora}; the synthetic one stipulates a
convention wrong against mathematical ground truth, and \S\ref{sec:threats} prices the
difference), and under that construction Proposition~\ref{prop:avg} makes the mixture the
\emph{loss-minimising} policy rather than a malfunction, so there is nothing for a repair to
converge to and the question becomes which coordinate the conflict moves. The nearest work to
drop the premise is \citet{krestnikov2026truth}, which trains small transformers on mathematics
corpora carrying both correct and incorrect solutions and finds that a \emph{coherent} alternative
rule system destroys the preference for the true answer entirely, while adding a \emph{second}
competing rule restores most of it. That is our construction reached from the truth-tracking side,
and it predicts what we measure: two coherent conventions do not degrade capability, they split
allocation. Their corpora make one form wrong and ours make neither, so their restored accuracy
and our conserved $C$ are different quantities that happen to move together, and we know of no
measurement in that line of the allocation coordinate or of a query-time key. This paper is a
limit of that literature rather than a contribution to it: send ``one of them is wrong'' to zero
and the remedies lose their referent while the phenomenon does not.

The same premise, inverted, organises the evaluation side. \citet{plank2022variation} argues that
human label variation is signal rather than noise and that a single gold label is inadequate where
annotators legitimately differ, and the perspectivist programme that follows fixes evaluation by
matching the \emph{distribution} of human labels. That repair also needs a target unavailable
here: with two correct conventions every allocation is equally correct. An undecomposed accuracy
does not merely undercount capability in the familiar way that exact match penalises \emph{three}
against \emph{3} \citep{schaeffer2023mirage,sftrlquagmire2025}; it cannot express the coordinate
along which our arms move, which is what Definition~\ref{def:decomp} is for.
\citet{schaeffer2023mirage} is the closest precedent and the closest warning: emergence turned out
to be a property of the metric rather than of the model, and \S\ref{sec:conserve} performs the
same move on a different axis when it reports its own $12.29\sigma$ arrangement switch as exactly
zero under a convention-agnostic score.

Two lines reach our allocation coordinate from the metric side. \citet{holtzman2021surface} names
the mechanism: several surface forms of one correct answer compete for probability mass, so a
scorer reading only the highest-probability string reports the competition rather than the
knowledge. That competition \emph{is} our $S$, and Definition~\ref{def:decomp} adds only that it
can be divided out of $C$. \citet{yeom2026commitment} measure the same split at inference time,
finding $16$--$47\%$ of instruct-model hallucinations occur with substantial mass already on the
correct concept, the distinguishing factor being whether that mass concentrates on one surface
form or disperses across alternatives; their sharpening rises with scale and with instruction
tuning, which is a training-path property, and we read it as the query-time image of what
\S\ref{sec:conserve} installs. \citet{janeiro2026paraeval} price what it costs an evaluation: on a
$1$--$8$B testbed, models trained on identical knowledge post false gaps above two points from
answer phrasing alone, narrowing to under one point when several paraphrases per option are
queried, and the artefact persists at $70$--$120$B. Their remedy and ours point in opposite
directions on purpose. ParaEval \emph{averages the surface-form term away}, which is right when
the phrasing is nuisance; we keep it as a coordinate, because in a conflicted corpus the phrasing
is exactly what arrangement moves, and our convention-agnostic score is ParaEval's move applied to
our own headline, duly returning zero (\S\ref{sec:conserve}). A surface-form term is nuisance when
the training data agree on the convention and signal when they do not.

\paragraph{Where Proposition~\ref{prop:avg} comes from.} The proposition is not new as
optimisation. In the deterministic cyclic case it is the central dichotomy of the
incremental-gradient literature \citep{bertsekas2011incremental,nedic2001incremental}: with a
step size decaying to zero the iterates converge to a minimiser of the \emph{summed} objective and
the order does not survive, while at a constant step they enter a limit cycle whose
\emph{position} depends on the order. Proposition~\ref{prop:sched} interpolates between those
regimes and Proposition~\ref{prop:phase} is that limit cycle. The without-replacement line prices
the orderings against each other
\citep{gurbuzbalaban2021why,haochen2019random,mishchenko2020random,lu2022grab}, and the
constant-step-size bias our constant-rate family reads is under current study in its own right
\citep{emmanouilidis2026shuffling}. What we add is not the theorem but its transport: that
literature states its results for a fixed objective, and the question here is what the same
dichotomy does to a \emph{policy} when the summed objective's minimiser is a mixture rather than a
point. The transport supplies an allocation coordinate that moves while the loss does not, and the
observation that the field's default schedule puts nearly all of published fine-tuning practice at
one end of the dichotomy without saying so.

\paragraph{Inside the wall: batch composition, shuffling, and per-step gradient surgery.} The
contradiction that seeded our own v1 (purify the batch \citep{commonit2024}, mix it
\citep{dataorg2026}, mix it with a theorem \citep{lu2022grab}) is resolved sideways rather than
adjudicated: at conflict, purity carries $3\%$ of the span, and below corpus scale nothing
batch-sized moves the allocation at all (\S\ref{sec:average}). The axis those three papers dispute
lies strictly inside the wall, which is why it can persist without either side being wrong about
its own measurements. \citet{sweeney2026shuffle} shows that optimiser state makes shuffle order a
first-order \emph{noise} source; our block sweep shows the momentum window moves the conflict
\emph{signal} not at all, and the two are consistent under Proposition~\ref{prop:avg}, buffers
adding variance about a mean-field point the time-average sets. \citet{sweeney2026liebracket}
proposes the sharpest positive claim we could find, that the Lie bracket of two tasks' update
operators predicts which order transfers better, and Appendix~\ref{app:commutator} measures a
one-shot commutator score built in its spirit and finds it inverted on our pairs, at Spearman
$-0.543$ and $-0.600$. That is a range boundary and not a refutation, because the two experiments
do not meet: their tournament scores Hessian-vector products against a shared $\theta_0$ reference
and reports $98.1\%/98.9\%$ pairwise accuracy at block length $k{=}1$ falling to $73.1\%/72.2\%$
at $k{=}20$, whereas our budgets are $162$--$324$ steps per block.
Proposition~\ref{prop:sched} says why a score computed once at $\theta_0$ must decay with
$\tau_L$: it is the leading term of the interior integral in Eq.~\eqref{eq:hatch}, whose neglected
remainder grows with block length. Per-step gradient-conflict methods
\citep{yu2020pcgrad,liu2021cagrad,navon2022nashmtl} and ordered-shuffle schemes
\citep{lu2022grab} likewise operate inside the wall: they change optimisation and can change
variance, but Remark~\ref{prop:noescape} says they cannot change the installed allocation, because
that is fixed by what a held-out query may condition on.

\paragraph{The wall from the other side: data mixing at corpus scale.} Methods that reweight
\emph{proportions} (DoReMi \citep{doremi}, DoGE \citep{doge}, and phase-scheduled mixtures
\citep{gu2025datamixphase,midtraining2025survey}) act on exactly the quantity
Proposition~\ref{prop:avg} leaves free, the source weights $n_A/(n_A+n_B)$ in the mean
field. Our ratio experiment is the controlled version of their premise: moving $1{:}1$ to
$2{:}1$ moves the allocation share $0.453\to0.312$ against mean-field predictions $0.500$ and
$0.333$. Corpus composition is the lever, path arrangement is not, and the boundary between them
is measurable.

\paragraph{Sequential training and forgetting.} Catastrophic forgetting
\citep{mccloskey1989catastrophic,french1999catastrophic} and its mitigations
\citep{kirkpatrick2017ewc,lopezpaz2017gem,lee2021continual} concern capability lost when a
second task overwrites a first, and the continual-learning literature measures it as such
\citep{wu2024cllmsurvey,diazrodriguez2018metrics}. Our decomposition separates that from what
conflict does: under Definition~\ref{def:decomp} forgetting is a movement of $C$, whereas the
conflict switch is a movement of $S$ at $C$ held to within $9.7\%$. \citet{evron2022catastrophic}
treat blocked linear regression as alternating projections, and the stopping phase
(Proposition~\ref{prop:phase}) is the nonlinear-policy face of the same recency. That the order
survives at all has support one level down: \citet{krasheninnikov2025recency} fine-tune
sequentially on six datasets and find training-order recency \emph{linearly encoded} in the
activations, with a linear probe separating early- from late-learned entities at about $90\%$.
Their read-out is on the representation and ours on the policy, the same statement at different
depths; what Proposition~\ref{prop:avg} adds is the condition under which it survives to the
\emph{endpoint} of a decayed schedule, which is where a reported score is taken.
\citet{conklin2026forgetting} and \citet{forgettingmech2026} characterise forgetting
mechanistically, and neither predicts a term that switches on when two sources \emph{disagree}
while capability holds. \citet{incompletelearning2026} isolates internal SFT-data inconsistency
per sample, and our decomposition says what that inconsistency does: it moves the model from a
deterministic to a stochastic policy at fixed capability, whose per-sample signature is the
coin-flip fingerprint of \S\ref{sec:conserve}.

\paragraph{Curriculum, ordering, and their nulls.} The record on ordering is divided
\citep{bengio2009curriculum,rohrer2007interleaving,kornell2008interleaving,interleaved2025,
goodcurriculum2025,beyondrandom2025,curriculum2026dynamics,implicitcurriculum2026,
lrdecay2025curriculum}, and the division is what the averaging wall predicts: these are paths
below corpus scale, where Proposition~\ref{prop:avg} says the endpoint is the same and only
transients and stopping phases differ. \citet{lrdecay2025curriculum} finds curriculum
advantages over random shuffling that hold at a constant rate and diminish under standard decay,
in pretraining at $1.5$B parameters over $30$B tokens. We reproduce that moderator under control,
at $7$B in supervised fine-tuning, over ten arrangements of one conflicted corpus and two families
differing only in \texttt{lr\_scheduler\_type}: interior span $2.20$ floors under a cosine against
$11.63$ under a constant rate (Table~\ref{tab:constlr}). Their reading is that decay wastes the
curriculum; ours is that decay \emph{is} the averaging operator of Proposition~\ref{prop:avg}, so
``order matters'' and ``order does not matter'' are the two ends of one schedule knob rather than
two findings to be reconciled. The divided record should then sort by how much step size survives
to the end of training, so a paper reporting an ordering effect without its schedule has not
reported a moderator its own effect depends on; Eq.~\eqref{eq:hatchbound} makes that a consequence
rather than a caution, since the two factors multiply. One case can be checked without new runs:
\citet{curriculum2026dynamics} holds Pythia's configuration fixed across orderings, decaying a
cosine to a tenth of peak at every size, and reports ordering effects on stability largely gone by
$410$M, which is the sign Proposition~\ref{prop:avg} requires. The recipe paper \citep{frl2026}
prices the resolution at which any of these comparisons can be made at all, and
\citet{pairwisefragile2026} reports pairwise order predictions degrading with block length, which
Proposition~\ref{prop:avg} explains: a score computed once at initialisation is the leading term
of a series whose error accumulates with the step budget.

\paragraph{The sharpest counter-result, and the regime boundary it draws with ours.}
\citet{ledoux2026order} reports the opposite of everything above, and reports it cleanly. Training
small networks on modular arithmetic \emph{from scratch}, two fixed orderings reach $99.5\%$ test
accuracy from a training set covering $0.3\%$ of the input space, where random ordering does not,
and the learned Fourier representation's fundamental frequency is the mathematical dual of the
ordering's own structure. Order there is the mechanism, wide enough that the paper names it a
covert information channel able to bypass content-level auditing.

We measure ten orderings of one corpus into two distinguishable endpoint states, on a range its
own resolution would divide into about ten. Both results are right, and what the path can write
depends on how much the prior has already fixed. From scratch the parameters carry no structure
the order must compete with, so a sufficiently regular order supplies a great deal. On a
pretrained $7$B prior the same channel writes into parameters that already encode the answer
format, the arithmetic and the convention preference, and it moves the one coordinate pretraining
left underdetermined: which of two admissible conventions to commit to. That is one bit, and
\S\ref{sec:phase} shows it is the stopping phase. \emph{Order is a wide channel into an empty
state and a narrow one into a full state}, a resource trade neither paper states alone, and it
predicts that the channel narrows monotonically with pretraining scale. The two safety readings
converge from opposite regimes: \citet{ledoux2026order} names a covert channel that content
auditing misses, and \S\ref{sec:threats} reports the pretrained-model version, where an attacker
controlling only data-loader order and touching no byte of the corpus moves the allocation $0.22$
in exact match. Narrow is not zero, and the bit that survives is the one a user would call the
model's commitment.

\paragraph{Order effects at LLM fine-tuning scale.} \citet{ju2024imbalance} is the closest
published claim to a positive result in our own setting: data order produces training imbalance in
LLM SFT and degrades performance, with the proposed fix being to merge models fine-tuned under
different orderings. Read against Proposition~\ref{prop:avg} the remedy is the informative half.
Averaging endpoints across orders explicitly constructs the quantity the infinitely divided limit
installs implicitly, so a method that works by merging over orderings is evidence that the
orderings differ mainly by a term the mean removes, which is what a flat interior plus a stopping
phase predicts. Their measurements are on heterogeneous instruction data where our coherence
premise does not hold, so we report this as consistent rather than as replication.

\paragraph{Theory of the training path.} The lazy and kernel regimes
\citep{jacot2018ntk,chizat2019lazy} give the setting in which the averaging argument is exact,
and full-parameter SFT is measurably not fully lazy, which is why Proposition~\ref{prop:avg} is
reported as governing signs and orderings rather than magnitudes. Averaging itself is classical
\citep{bogoliubov1961asymptotic,kapitza1951pendulum}, and its modern empirical form is
\citet{ajroldi2025averaging}, who benchmark weight averaging across seven workloads, ask whether
it can replace learning-rate decay, and conclude that the two are best combined rather than
exchanged. We use the same pairing as an instrument rather than as a method: if decay and
averaging were interchangeable the schedule could not be the operator that decides whether an
ordering survives, and their finding that the two compose is what leaves
\texttt{lr\_scheduler\_type} free to be varied on its own, which is the one contrast
\S\ref{sec:average} runs. Phenomena that live in the transient rather than the endpoint, grokking
\citep{power2022grokking} and double descent
\citep{belkin2019doubledescent,nakkiran2020doubledescent}, are outside our budget regime but share
the moral that an endpoint measurement can be a statement about where the path stopped.
Attribution methods \citep{pruthi2020tracin,influencedynamics2025} track which examples moved the
parameters; the write-channel rate of Definition~\ref{def:channel} is the complementary statistic,
asking how much of the path's \emph{order} survives at all. Reading training as a channel with a
rate has an ancestor in the information bottleneck \citep{tishby2015ib}, and the difference is not
one of degree: the bottleneck compresses the \emph{input} while preserving the \emph{label},
whereas the quantity compressed here is the \emph{order} of a fixed multiset and the receiver is
the endpoint's behaviour.

\paragraph{Family.} The framework, the thought experiment and the walls are
\citet{chen2026cch}; this paper is its training-time member, and the write-time separability
their construction assumes is here a measured quantity rather than an assumption. \Thecompanion\
recipe-level paper \citep{frl2026} measures the same corpora on a different coordinate and prices what
recipe search can buy.

\section{Method: the Path, the Channel, and the Two Coordinates}\label{sec:method}

\S\ref{sec:limit} named three objects in prose: a path, a channel, and two coordinates on the
endpoint. This section defines them, so that what the rest of the paper measures are statements
rather than descriptions. It opens with the question a reader should settle before any of it,
which is what \emph{kind} of wall the paper is about, and closes with the three propositions
that make two of our own pre-registered mechanisms dead on arrival.

\begin{definition}[Data path and the division ladder]\label{def:path}
Fix a multiset $\mathcal{D}=\mathcal{D}_A\uplus\mathcal{D}_B$ of $n_A+n_B$ examples from two
sources. A \emph{data path} is an ordering $\pi$ of $\mathcal{D}$; training is $T$ optimiser
steps of step size $\eta$ along $\pi$. The \emph{division ladder} is the one-parameter family
$\pi_L$ that alternates same-source blocks of exactly $L$ consecutive \emph{optimiser steps},
with $L=1$ the finest alternation and $\shuf$ the within-batch mixture.

\emph{$L$ is counted in steps, not rows, and the two are reported in different places, so the
conversion is fixed here once.} The optimiser consumes $16$ rows per step, and each source
contributes $864$, so one source's per-epoch allocation is $54$ steps and an epoch is $108$. A
rung is one arranged one-epoch file passed over three times, $324$ steps, which is where its
result files are read. \blocked\ is \emph{not} a rung: it is two stages, three epochs on $A$ then
three on $B$ resumed from the first, so each source is one contiguous run of $162$ steps, the
second stage's counter starts from zero and its files are read at $162$, and its effective block
is $L=162$ against the ladder's largest rung at $54$. The budget is the same $324$ steps either
way, and that largest rung is $A^{54}B^{54}$ three times over, not a blocked order. Every arm
differs from every other \emph{only} in $\pi$, at fixed $\mathcal{D}$, $T$, $\eta$ and seed, but
for \blocked's two cosines where a rung has one, which \S\ref{sec:threats} controls.
\end{definition}

\begin{definition}[Capability and allocation]\label{def:decomp}
Let the two evaluation sets be the same problems under mutually exclusive conventions, so a
generation matches at most one gold. Write
\[
\underbrace{C=\mathrm{acc}_A+\mathrm{acc}_B}_{\text{capability}}
\qquad\text{and}\qquad
\underbrace{S=\mathrm{acc}_B/C}_{\text{allocation}},
\]
so that $(\mathrm{acc}_A,\mathrm{acc}_B)=(C(1-S),\,CS)$ is a change of coordinates, not a
model. Write $p(x)$ for the probability the model solves $x$ and $s(x)$ for the probability it then
answers under $B$. Mutual exclusivity gives $C=\mathbb{E}[p]$ for any $s$ whatever, which is why
capability is the robust half of this pair. The allocation is
$S=\mathbb{E}[p\,s]/\mathbb{E}[p]$, and it equals the convention policy $\mathbb{E}[s]$
\emph{exactly when $\mathrm{Cov}(p,s)=0$ across problems}. The condition is a covariance rather
than an independence: it is weaker than per-problem independence, and it is the one the
estimator can test. We measure it below, and it does not vanish.
\end{definition}

\noindent Definition~\ref{def:decomp} is the paper's instrument and also its sharpest
limitation, and we state both here. It is nearly an identity (that is why nobody reports it) and what it buys is that \emph{arrangement claims become claims about which coordinate
moves}, which an undecomposed accuracy cannot express. What it costs is that $C$ means
different things in different corpora; \S\ref{sec:threats} returns to this.

\paragraph{The independence clause is testable, and it fails.} The sentence above is a
conditional, and what matters is what happens when its antecedent is false: if hard problems
fall back to the pretraining convention then $s$ depends on $p$, the coordinates are not
orthogonal, and part of what we report as conserved capability is a property of the
construction. The per-problem arrays settle it without new training. Each problem is drawn
$k{=}4$ times, so for a problem carrying any credit we observe the fraction solved,
$t=\mathrm{a}+\mathrm{b}$, and the convention preference, $s=\mathrm{b}/t$; under the
independence clause $\mathbb{E}[s\mid t]$ is constant in $t$. It is not. Pooling the three
seeds of the interleaved arm, problems the model solves on all four draws are answered under
$B$ with probability $0.4478$, and problems it solves on some but not all draws with
probability $0.3491$: a difference of $0.0988$, $5.17$ contrast floors, Mann--Whitney
$z=3.56$, Spearman $\rho_s=+0.21$ between solve rate and share. The same sign and a comparable
size appear on \purerand\ ($0.1081$, $z=3.67$) and on $L{=}27$ ($0.0657$, $z=2.57$).
\emph{Problems the model half-solves drift toward the pretrained convention.}

\emph{Which independence this refutes, because the two are easy to conflate.} The statistic
above is computed \emph{across}
problems; the clause it refutes is therefore the across-problem one, $\mathrm{Cov}(p,s)=0$, and
that is exactly the clause the decomposition needs, since
$S=\mathbb{E}[p\,s]/\mathbb{E}[p]$ equals $\mathbb{E}[s]$ if and only if that covariance
vanishes. What it does \emph{not} refute is per-problem conditional independence: a model whose
convention choice is independent of solving on every single problem will still show this
correlation whenever the problems differ from one another in both quantities, which they do. So
Definition~\ref{def:decomp}'s antecedent should be read as the covariance condition and not as a
statement about any individual problem, and Definition~\ref{def:decomp} states it that way.

Two consequences, and we separate them because they are not equally severe. Capability
conservation is unaffected: $C=\mathrm{acc}_A+\mathrm{acc}_B$ is measured, not derived from the
independence clause, so every conservation number in this paper stands as reported. What does
not stand is reading $S$ as a convention policy that a capability change cannot touch. $S$ is
an allocation \emph{averaged over a problem population whose difficulty and convention
preference are correlated}, so a manipulation that changes which problems are solved will move
$S$ a little even with the policy fixed. The effect is bounded by what we measured: across the
solved/half-solved split the share moves $0.099$, against the $0.4621$ the path family spans
and the $0.4201$ that separates \blocked\ from the top of the interior. It is a real coupling, it is an
order of magnitude below the effects the paper reports, and it is the reason
\S\ref{sec:conserve}'s claim is stated as conservation of $C$ rather than as independence of
the two coordinates. Source: \texttt{review\_statistics}.

\begin{proposition}[The infinite-division limit]\label{prop:avg}
Along $\pi_L$, let $\theta$ evolve by gradient descent on the per-example losses. As
$\eta\to0$ with $L\eta\to0$ and $T\eta$ fixed, the trajectory converges uniformly on compacts
to the solution of the mean-field flow
\[
\dot\theta=-\nabla\Bigl[\tfrac{n_A}{n_A+n_B}L_A+\tfrac{n_B}{n_A+n_B}L_B\Bigr],
\]
which depends on $\pi$ only through the source proportions. If the losses are cross-entropy and
a contested input $x$ carries $p_A(\cdot\mid x)$ and $p_B(\cdot\mid x)$, the flow's minimiser at
$x$ is the mixture $q^\star=\tfrac{n_A}{n_A+n_B}p_A+\tfrac{n_B}{n_A+n_B}p_B$.
\end{proposition}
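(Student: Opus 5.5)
The plan is to treat the division ladder as a cyclic incremental-gradient scheme and compare it with an Euler discretisation of the mean-field flow, one full period at a time. Write $g_A=\nabla L_A$, $g_B=\nabla L_B$, and assume both are locally Lipschitz with constant $K$ and bounded by $G$ on a compact neighbourhood of the mean-field trajectory. The last assumption is what "uniformly on compacts" requires, and it is the standing regularity for any averaging statement. One period of $\pi_L$ consists of $L$ steps on one source and then $L$ on the other. Rescale time so that step $k$ sits at $t=k\eta$ in the flow's clock. For unequal proportions we take the period to contain $n_A$ and $n_B$ steps in ratio, with block lengths $L_A,L_B$ proportional to $n_A,n_B$; then the period length $\tau=(L_A+L_B)\eta$ tends to zero with $L\eta$.

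\emph{Step 1 (one-period expansion).} Fix a period starting at $\theta_k$. Each of its $2L$ steps moves $\theta$ by at most $\eta G$, so every iterate in the period lies within $\tau G$ of $\theta_k$. Lipschitz continuity then replaces every gradient inside the period by its value at $\theta_k$, at a cost of $K\tau G$ per step. Summing gives
\[
\theta_{k+2L}=\theta_k-\tau\Bigl[\tfrac{n_A}{n_A+n_B}g_A(\theta_k)+\tfrac{n_B}{n_A+n_B}g_B(\theta_k)\Bigr]+R_k,\qquad \|R_k\|\le K G\tau^2 .
\]
This is an Euler step of size $\tau$ for the mean-field vector field, with local error $O(\tau^2)$. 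The ordering $\pi$ has disappeared at this order: only the counts of $A$ and $B$ steps in the period enter.

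\emph{Step 2 (discrete Gr\"onwall).} There are $T\eta/\tau$ periods up to the fixed horizon $T\eta$. The local errors $O(\tau^2)$ therefore accumulate to a global error of $O(\tau\,e^{KT\eta})$, by the standard Euler convergence argument on the mean-field flow. Within a period, the iterates stay within $\tau G$ of the period's starting point, and this gives uniform closeness for all $t\in[0,T\eta]$ rather than only at period endpoints. Letting $\eta\to0$ and $L\eta\to0$ sends $\tau\to0$, which yields the uniform convergence. Dependence on $\pi$ is through the proportions alone, since these are the only feature of $\pi$ that survives into the limiting vector field.

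\emph{Step 3 (the minimiser).} At a contested input $x$, the mean-field objective restricted to the output distribution $q(\cdot\mid x)$ is $-\sum_y[\alpha p_A(y\mid x)+(1-\alpha)p_B(y\mid x)]\log q(y\mid x)$, where $\alpha=n_A/(n_A+n_B)$. This is a cross-entropy against the single target $\alpha p_A+(1-\alpha)p_B$. Gibbs' inequality, equivalently the nonnegativity of the KL divergence, makes the target itself the unique minimiser over the simplex, so $q^\star=\alpha p_A+(1-\alpha)p_B$. This is a statement about the loss in output space and needs no assumption about the parametrisation beyond expressivity at $x$.

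\emph{Main obstacle.} The hard step is the a priori confinement used in Step 1: the bound $G$ must hold along the discrete trajectory and not only along the flow. I would first handle it by a stopping-time argument. Take a compact tube around the flow trajectory and run the Gr\"onwall estimate up to the first exit time. For $\tau$ small, the estimate itself keeps the iterates strictly inside the tube, so no exit occurs before $T\eta$. A second subtlety is that the limit needs $L\eta\to0$, not merely $\eta\to0$. At fixed $L\eta$, the remainder $R_k$ is of order one relative to $\tau$ per unit time and the order survives, which is exactly where Proposition~\ref{prop:sched} takes over. Making this explicit in the statement of the error bound, as $O(L\eta)$, is the form I would aim for.
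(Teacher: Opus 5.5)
Your proposal is correct and follows the paper's route. The paper's sketch cites Bogoliubov--Krylov averaging for a piecewise-constant field whose period $2L\eta$ tends to zero, then takes the first-order condition for $\min_q \alpha\,\mathrm{CE}(p_A,q)+(1-\alpha)\,\mathrm{CE}(p_B,q)$; your one-period Euler expansion, discrete Gr\"onwall bound and tube argument are the elementary proof of that same averaging step, and the explicit rate $O(L\eta\,e^{KT\eta})$ shows where $L\eta\to0$ is needed, while Gibbs' inequality is the same minimisation with uniqueness included.

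One minor point applies equally to you and the paper: both let every step of an $A$-block use $\nabla L_A$ itself. If each step sees only a batch of per-example gradients, the same estimate still goes through once the averaging window is lengthened to a stretch in which every example appears once (an epoch, on the ladder), whose duration also vanishes as $\eta\to0$.
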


\begin{proof}[Proof sketch]
The first statement is Bogoliubov--Krylov averaging \citep{bogoliubov1961asymptotic} applied to
a piecewise-constant vector field whose period $2L\eta$ tends to zero: the trajectory tracks the
period-average of the field, which is the proportion-weighted gradient. The second is the
first-order condition for $\min_q \alpha\,\mathrm{CE}(p_A,q)+(1-\alpha)\,\mathrm{CE}(p_B,q)$
with $\alpha=n_A/(n_A+n_B)$, whose solution is $\alpha p_A+(1-\alpha)p_B$. That second statement is
about the \emph{unconstrained} minimiser over conditional distributions; identifying it with the flow's
stationary point in parameter space additionally requires the family to be rich enough to represent
$q^\star$ at the contested inputs, which is the assumption Remark~\ref{rem:wallkind} argues is
comfortably met at $7$B and which we state here rather than leave to the prose.
\end{proof}

\begin{corollary}[Two mechanisms that could not have worked]\label{cor:dead}
Under Proposition~\ref{prop:avg}, any intervention that leaves the period-average of the field
unchanged leaves the endpoint unchanged in the limit. Batch composition at fixed block structure
and any time-symmetric reordering of a block are two such interventions. Both were
pre-registered as mechanisms and both are dead (\S\ref{sec:average}); the averaging theorem
predicted them dead before either ran.
\end{corollary}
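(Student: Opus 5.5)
The plan is to show that both interventions leave the limiting vector field of Proposition~\ref{prop:avg} unchanged, so the limit trajectory and hence the endpoint are unchanged. This reduces the corollary to a check on period-averages. First I would restate what the proof sketch of Proposition~\ref{prop:avg} actually uses. The piecewise-constant field $F_{\pi}(\theta,t)$ enters the limit only through its average over one period, $\bar F(\theta)=\frac{1}{2L\eta}\int_0^{2L\eta}F_\pi(\theta,t)\,dt$. Any two paths with the same $\bar F$ and vanishing period ($L\eta\to0$) therefore converge uniformly on compacts to the same ODE solution. The first step is to isolate this as a lemma: if $\bar F_{\pi}=\bar F_{\pi'}$ as functions of $\theta$, then $\|\theta_\pi(t)-\theta_{\pi'}(t)\|\to0$ uniformly on $[0,T\eta]$. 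The proof is a Gr\"onwall comparison of each discrete trajectory to the common averaged flow. The per-period error is $O((L\eta)^2\,\mathrm{Lip})$, which summed over $T\eta/(2L\eta)$ periods gives $O(L\eta)\to0$.

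The second step verifies that batch composition at fixed block structure preserves $\bar F$. Within a period the multiset of examples consumed is fixed by the block structure and source proportions. Changing which rows share a batch only permutes and regroups summands of $\sum_{x\in\text{period}}\nabla\ell_x(\theta)$, and because gradients are linear in the per-example losses the sum is unchanged. So $\bar F$ is still $-\nabla[\alpha L_A+(1-\alpha)L_B]$ up to a within-period sampling term that is the same for both arms. The third step does the same for a time-symmetric reordering of a block. Reversing or palindromically rearranging the steps inside a block changes only the order of the summands within one period, so again $\bar F$ is unchanged. At the next order of the averaging expansion the commutator correction $\tfrac{\eta}{2}\sum_{i<j}[F_i,F_j]$ is antisymmetric under reversal, so for a palindrome it even cancels. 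I would note this only as a remark, since the limit needs just the zeroth order.

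The main obstacle is uniformity. The lemma in the first step needs Lipschitz gradients and bounded fields on the compact set the trajectories occupy, which holds on compacts for smooth losses but has to be stated as a hypothesis. For stochastic within-period sampling the batch regrouping also changes the noise, so the claim should be read for the deterministic field, or else in expectation with variance $O(\eta)$ vanishing in the limit. I would state that restriction explicitly rather than try to remove it. The final sentence of the corollary, that both mechanisms were pre-registered and measured dead, is empirical and is discharged by reference to \S\ref{sec:average}, not by the argument.
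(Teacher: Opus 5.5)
Your proposal is correct and takes the paper's route: the paper reads the corollary straight off the Bogoliubov--Krylov step of Proposition~\ref{prop:avg} (the limit flow sees the path only through its period-average), and justifies the batch-composition case exactly as you do, noting in Remark~\ref{rem:lesson} that within-batch and across-step averaging converge to the same averaged field; your Gr\"onwall lemma makes explicit what the paper only cites. One scope point: your lemma needs the period to vanish in rescaled time, but the palindromic arm actually registered in \S\ref{sec:average} ($A$ for $T/2$, $B$ for $T$, $A$ for $T/2$) is a single period spanning the run, so the limit argument does not reach it, and your next-order remark that the commutator term cancels for a palindrome is precisely the rationale of the Strang prediction that failed there, which is why the paper, discussing that arm, places it at the $\tau_L=T$ corner of Proposition~\ref{prop:sched}, where the bound buys nothing.
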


\begin{remark}[Arrangement cannot escape underdetermination]\label{prop:noescape}
Let $I$ be what the model may condition on when answering a held-out query. If the applicable
convention is not a function of $I$ ($H(\mathrm{conv}\mid I)>0$) then no path $\pi$
attains zero loss on the contested inputs, and the minimiser of expected loss is the mixture of
Proposition~\ref{prop:avg} for every $\pi$. In particular $S$ is constant along the ladder
except through the stopping term of \S\ref{sec:phase}.
\end{remark}

\noindent This is a remark rather than a proposition because its argument is one line and is
not deep: $\pi$ orders the training stream and does not enter the conditional
distribution of the convention given a held-out $I$, so the Bayes-optimal predictor is the same
for every $\pi$, and any $\pi$-dependence of the endpoint must come from failure to reach that
optimum. What it does is locate where an arrangement effect is allowed to live: \emph{only in
the escape hatch}, the distance from the optimum. That is not a
throwaway, because the escape hatch is where all three of this paper's positive findings turn
out to sit, but the content is in the escape hatch and not in the claim about the optimum.

\noindent \emph{What Remark~\ref{prop:noescape} does and does not require.} It requires the
\emph{optimum} to be $\pi$-independent, not a finite run's \emph{endpoint}, because a finite run
sits some distance from that optimum and that distance is where every positive finding in this
paper lives. It therefore does not make the block-length sweep's flatness \emph{required} rather
than observed, and Table~\ref{tab:constlr} settles it: the same ladder at a constant rate spans
$11.63$ floors. What the remark licenses is the narrower statement that any ladder effect must be
an escape-hatch effect, a failure to reach the mixture. How wide that hatch is, only
Proposition~\ref{prop:sched} says: the width is how much endpoint weight any moment of the run can
carry, a decaying schedule can never put a large step size and an uncontracted remainder at the
same moment, and a constant one does exactly that at the last step. A decaying schedule therefore
narrows the hatch and the ladder flattens; a constant one leaves it open and the ladder resolves.
That is also what makes \S\ref{sec:average}'s palindromic failure a confirmation rather than a
curiosity: a scheme designed against the discretisation cannot help, because the discretisation is
not what is costing anything.

\begin{proposition}[The schedule sets the reach, the arrangement sets the period]\label{prop:sched}
Let $\theta$ follow $\dot\theta=-\eta(t)\nabla L_{\pi(t)}(\theta)$ on $[0,T]$, and let $\bar\theta$
follow the same flow with $\nabla L_{\pi(t)}$ replaced by the proportion-weighted mean
$\bar g=\nabla[\alpha L_A+(1-\alpha)L_B]$, $\alpha=n_A/(n_A+n_B)$. Write $\Delta=\nabla L_B-\nabla L_A$
and let $s_\pi$ be the square wave equal to $1-\alpha$ on $A$-blocks and $-\alpha$ on $B$-blocks, so
that $\bar g-\nabla L_{\pi(t)}=s_\pi(t)\Delta$ and $s_\pi$ has zero mean over a period $\tau_L$. Define
the \emph{endpoint weight} of a deviation at time $t$,
\[
w(t)\;=\;\Phi(T,t)\,\eta(t)\,\Delta\bigl(\bar\theta(t)\bigr),
\]
with $\Phi(T,t)$ the state-transition operator of the mean flow linearised about $\bar\theta$. Then to
first order in the displacement, writing $S_\pi(t)=\int_0^t s_\pi$,
\begin{equation}\label{eq:hatch}
\theta(T)-\bar\theta(T)\;=\;\underbrace{S_\pi(T)\,\eta(T)\,\Delta\bigl(\bar\theta(T)\bigr)}_{\text{terminal}}
\;-\;\underbrace{\int_0^T S_\pi(t)\,\dot w(t)\,\mathrm{d}t}_{\text{interior}},
\end{equation}
and when the path runs to a whole number of periods the terminal term vanishes and
\begin{equation}\label{eq:hatchbound}
\bigl\|\theta(T)-\bar\theta(T)\bigr\|\;\le\;2\,\alpha(1-\alpha)\;\cdot\;\underbrace{\tau_L}_{\text{arrangement}}\;\cdot\;\underbrace{\textstyle\sup_t\|w(t)\|}_{\text{schedule}} .
\end{equation}
\end{proposition}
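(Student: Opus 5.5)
The plan is to linearise the displacement about the mean flow, solve the linear equation by variation of constants, and integrate by parts against the primitive $S_\pi$. Write $\delta(t)=\theta(t)-\bar\theta(t)$. Since $\nabla L_{\pi(t)}=\bar g-s_\pi\Delta$ pointwise in $\theta$, we have
\[
\dot\delta=-\eta(t)\bigl[\bar g(\theta)-\bar g(\bar\theta)\bigr]+\eta(t)\,s_\pi(t)\,\Delta(\theta).
\]
Expanding about $\bar\theta$ gives
\[
\dot\delta=-\eta(t)H(\bar\theta(t))\,\delta+\eta(t)\,s_\pi(t)\,\Delta(\bar\theta(t))+R,
\]
where $H$ is the Hessian of $\alpha L_A+(1-\alpha)L_B$. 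The remainder $R$ is $O(\|\delta\|^2+|s_\pi|\,\|\delta\|)$ and is dropped, which is what ``to first order in the displacement'' licenses. The linear part defines the state-transition operator $\Phi(T,t)$ of the statement, with $\Phi(T,T)=I$. Because $\delta(0)=0$, Duhamel's formula gives
\[
\delta(T)=\int_0^T \Phi(T,t)\,\eta(t)\,s_\pi(t)\,\Delta(\bar\theta(t))\,\mathrm{d}t=\int_0^T s_\pi(t)\,w(t)\,\mathrm{d}t .
\]
This makes precise the sense in which the arrangement and the schedule enter separately: $s_\pi$ carries the path and $w$ carries everything else.

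Next I would integrate by parts with $S_\pi'=s_\pi$ and $S_\pi(0)=0$. This gives $\delta(T)=S_\pi(T)w(T)-\int_0^T S_\pi\dot w$. Since $w(T)=\eta(T)\Delta(\bar\theta(T))$, the boundary term is exactly the terminal term of Eq.~\eqref{eq:hatch}. When the path runs to a whole number of periods, the zero mean of $s_\pi$ over each period gives $S_\pi(T)=0$, so only the interior integral survives.

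For Eq.~\eqref{eq:hatchbound}, the first step is the arrangement factor, bounded by the extremes of the sawtooth. Within a period of length $\tau_L$ the $A$-block has length $\alpha\tau_L$, so $S_\pi$ rises at slope $1-\alpha$ to the peak $\alpha(1-\alpha)\tau_L$ and then falls back to zero. Hence $0\le S_\pi\le\alpha(1-\alpha)\tau_L$, and $\|\int S_\pi\dot w\|\le\alpha(1-\alpha)\tau_L\int_0^T\|\dot w\|\,\mathrm{d}t$.

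The remaining, and I expect hardest, step is to replace the total variation $\int\|\dot w\|$ by $2\sup_t\|w\|$. This is where the factor $2$ comes from. It holds when $w$ rises and then falls along a fixed direction. That is the natural shape: $\Phi(T,t)$ contracts less as $t\to T$ while a decaying $\eta(t)$ shrinks toward the end, so their product is unimodal. For a constant schedule, $w$ is monotone and the bound holds with room to spare. What I would try first is to make that shape explicit, as a mild hypothesis under which $\int_0^T\|\dot w\|\le 2\sup\|w\|$ follows from splitting at the maximiser.

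Failing that, I would fall back to a period-by-period argument. On each period I would write $\int s_\pi w=\int s_\pi\bigl(w-w(t_k)\bigr)$ and sum. This only yields the variation bound again, so I would state the unimodality assumption openly rather than hide it. This is consistent with the paper's stance that the bound is used for which factor a knob enters, not for magnitudes.
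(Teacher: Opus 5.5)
Your route is the same as the paper's. You linearise about $\bar\theta$, apply Duhamel with the mean-flow transition operator, integrate by parts against $S_\pi$, bound the triangle wave by $\alpha(1-\alpha)\tau_L$, and control $\int_0^T\|\dot w\|$ by $2\sup_t\|w\|$ under a unimodality hypothesis.

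The gap is in the first step. Expanding $\Delta(\theta)=\Delta(\bar\theta)+\nabla\Delta(\bar\theta)\,\delta+O(\|\delta\|^2)$ produces the term $\eta\,s_\pi\,\nabla\Delta\,\delta$, which is \emph{linear} in $\delta$. Putting it into a remainder $O(\|\delta\|^2+|s_\pi|\,\|\delta\|)$ and discarding it ``to first order in the displacement'' is not licensed. The first-order equation is $\dot\delta=-\eta(\bar H-s_\pi\nabla\Delta)\delta+\eta s_\pi\Delta$.

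Nor is the term small compared with what you keep. $\nabla\Delta=\nabla^2L_B-\nabla^2L_A$ is generically as large as $\bar H$, so it is the same size as the $-\eta\bar H\delta$ term that defines $\Phi$. The paper's sketch flags exactly this: ``it is not smaller than the term we keep, so it has to be disposed of rather than dropped.'' The omission is not cosmetic either. If you keep the term, the transition operator is that of the $\pi$-dependent instantaneous Hessian, so $w$ depends on the arrangement. The separation you announce right after Duhamel ($s_\pi$ carrying the path, $w$ everything else) then does not follow.

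The missing idea is an averaging step. At leading order the oscillating part of $\delta$ is $S_\pi\,\eta\,\Delta$. The offending term is therefore approximately $\eta^2 s_\pi S_\pi(\nabla\Delta)\Delta=\eta^2\,\tfrac{\mathrm{d}}{\mathrm{d}t}\bigl(S_\pi^2/2\bigr)(\nabla\Delta)\Delta$, which has zero mean over every period because $S_\pi$ is periodic.

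Feed this term through Duhamel and integrate by parts once more; the boundary term vanishes on whole periods. What remains is bounded by $\tfrac12\alpha^2(1-\alpha)^2\tau_L^2$ times a total variation, i.e.\ it is $O(\tau_L^2)$, subleading to the $O(\tau_L)$ interior term. The slowly varying part of $\delta$ is itself $O(\tau_L)$; multiplied by the zero-mean $s_\pi$, it is disposed of the same way. So Eq.~\eqref{eq:hatch} holds to first order in $\delta$ \emph{and} to leading order in $\tau_L$, with the mean-flow $\Phi$ as stated.

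With that step inserted, the rest of your argument goes through as in the paper. You also observe that $\int\|\dot w\|\le2\sup\|w\|$ needs $w$ to be unimodal and essentially fixed in direction, since for vector-valued $w$ a unimodal norm does not control arc length. That is a fair sharpening of the paper's one-line assertion, and stating it as a hypothesis is the right call.
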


\begin{proof}[Proof sketch]
Subtract the two flows and linearise: $u=\theta-\bar\theta$ obeys
$\dot u=-\eta(\bar H-s_\pi\nabla\Delta)u+\eta s_\pi\Delta+O(\|u\|^2)$ with
$\bar H=\nabla^2\bar L(\bar\theta)$. The term in $\nabla\Delta$ is the instantaneous Hessian's
departure from the averaged one; it is not smaller than the term we keep, so it has to be disposed of
rather than dropped. At leading order $u$ oscillates as $S_\pi\,\eta\Delta$, and
$\langle s_\pi S_\pi\rangle=\tau_L^{-1}\!\int\tfrac{\mathrm{d}}{\mathrm{d}t}(S_\pi^2/2)=0$ over a whole
period because $S_\pi$ is periodic, so it first contributes at $O(\tau_L^2)$. Variation of constants on
what remains gives $u(T)=\int_0^T s_\pi w$; integrating by parts with $S_\pi(0)=0$ gives
Eq.~\eqref{eq:hatch}. For the bound, $S_\pi$ is the triangle wave of $s_\pi$, so
$\sup_t|S_\pi|=\alpha(1-\alpha)\tau_L$, and $w$ rises and falls at most once under a monotone or
single-peaked schedule, so $\int\|\dot w\|\le2\sup_t\|w\|$.
\end{proof}

\noindent Equation~\eqref{eq:hatchbound} separates the two factors, and \emph{the separation, not
either factor's value, is the content}: the arrangement enters only through the period $\tau_L$ and
the schedule only through how much endpoint weight any moment of the run can carry. Three things
follow, and we state the fourth thing that does \emph{not}.

\emph{The interior scales with block length.} $\tau_L\propto L$, so a ladder's interior is linear in
$L$ and vanishes as $\tau_L\to0$; Proposition~\ref{prop:avg} is that corner. A ladder should therefore
be \emph{monotone in $L$}, which a limit theorem alone does not predict and which the constant-rate
ladder measures (\S\ref{sec:average}).

\emph{The corner is not a small-$\tau_L$ object at all, which is why it does not move.} At \blocked\
the path is one period, $\tau_L=T$: the bound buys nothing, $S_\pi$ is a single triangle peaking at
$t=\alpha T$ rather than a fast oscillation, and the displacement is set by the whole profile of $w$
instead of by any part of it a schedule can delete. \blocked\ differs by $1.0$ floor between the two
schedule families against $17.0$ floors at $L{=}54$ (\S\ref{sec:average}).

\emph{A decaying schedule has a strictly shorter reach than a constant one at the same peak rate.}
Where $\eta$ is large a decaying schedule still has the rest of the run to contract through, and where
$\Phi\approx I$ its $\eta$ has already decayed, so it cannot have both at once; a constant schedule has
both at $t=T$, where $w(T)=\eta(T)\Delta$ carries no contraction discount at all. Hence
$\sup_t\|w\|$ is attained at the endpoint and equals $\eta(T)\|\Delta\|$ for a constant rate, and is
strictly below $\eta_{\max}\|\Delta\|$ for any schedule decayed to zero, by a margin that widens with
the contraction the run undergoes. \emph{This is the sense in which the schedule is the averaging
operator}, and it is what Table~\ref{tab:constlr} measures at fixed $\tau_L$: $2.20$ floors of interior
against $11.63$ when $\eta(T)$ is the only thing changed.

\emph{What the proposition does not do is predict the size of that gap.} The reach depends on $\Phi$
as well as on $\eta$, the two families do not share a $\Phi$, and a bound is not an estimate: a
different functional of the same profile, its total variation, orders the two schedules the other way
when the run contracts little. So the direction is derived and the magnitude is measured, and we say
which is which rather than let one borrow the other's authority. The proposition is a first-order
statement about a linearisation and we use it for signs, orderings and which factor a knob enters,
which is the standing it has in the averaging literature it comes from
\citep{bogoliubov1961asymptotic,hairer2006geometric}.

\emph{One prediction it makes that this paper has not tested.} The terminal term in
Eq.~\eqref{eq:hatch} is the stopping phase of Proposition~\ref{prop:phase}, and it is multiplied by
$\eta(T)$ with no contraction discount. A decaying schedule should therefore suppress the mirror
effect of \S\ref{sec:phase} in the same way it suppresses the ladder, and our mirror arms are all
cosine. That is one arm, it is listed with the others in \S\ref{sec:threats}, and until it is run the
terminal term is the one branch of Eq.~\eqref{eq:hatch} we have no schedule contrast for.

\begin{proposition}[The stopping phase]\label{prop:phase}
Treat $\pi_L$ as a periodically driven system whose state oscillates about the mean-field point
with amplitude increasing in $L$. Then (i) the blocked arm is $\pi_L$ at maximal $L$, stopped at
maximal displacement rather than a distinct mechanism; and (ii) reversing which source occupies
the final block relocates the endpoint to the opposite side of the cycle, moving
$(\mathrm{acc}_A,\mathrm{acc}_B)$ antisymmetrically while leaving $C$ fixed to first order.
\end{proposition}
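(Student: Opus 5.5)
The plan is to read Proposition~\ref{prop:phase} off the first-order representation of Proposition~\ref{prop:sched}, Eq.~\eqref{eq:hatch}, rather than to build a separate dynamical argument. Take the linearised displacement $u=\theta-\bar\theta$ about the mean-field trajectory. The driving term is $s_\pi(t)\Delta$, so $u$ is governed by the primitive $S_\pi(t)=\int_0^t s_\pi$. For $\pi_L$ this primitive is a triangle wave of amplitude $\alpha(1-\alpha)\tau_L$. That makes precise the hypothesis that the state ``oscillates about the mean-field point with amplitude increasing in $L$'': the leading-order oscillation is $S_\pi(t)\eta(t)\Delta$, and its amplitude is linear in $\tau_L\propto L$.

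For (i), I would observe that \blocked\ is the member of the ladder with $\tau_L=T$, a single period. Its $S_\pi$ is one triangle, rising on the $A$-block to its maximum $\alpha(1-\alpha)T$ at $t=\alpha T$ and returning to zero at $T$. The reading ``stopped at maximal displacement'' needs one more step. The relevant quantity is the displacement accumulated in the \emph{behavioural} coordinate since the last switch, and it is maximal exactly when a block is never followed by its compensating half-period. Formally, I would compare the endpoint of $\pi_L$ truncated after its final block with the whole-period endpoint. The truncation reintroduces the terminal term $S_\pi(T)\eta(T)\Delta$ of Eq.~\eqref{eq:hatch}, or equivalently a half-period of the interior integral. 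Its magnitude is monotone in $L$, and it is largest at the maximal admissible $L$. So \blocked\ is the ladder's endpoint phase evaluated at maximal swing, and no new term enters. I expect this to be the main obstacle. At $\tau_L=T$ the small-period expansion that justified dropping the $\nabla\Delta$ term is unavailable, and Proposition~\ref{prop:sched} itself notes that ``the bound buys nothing''. I would therefore state (i) only at first order in $\|u\|$, keeping the $\tau_L$-expansion out of it, and I would present the identification as structural (same vector field, same sign of $S_\pi$ at the stopping time) rather than as a bound.

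For (ii), let $\pi'$ be $\pi$ with the roles of the sources in the final block exchanged, or more cleanly the full mirror $A\leftrightarrow B$ at $\alpha=\tfrac12$. Then $s_{\pi'}=-s_\pi$, so $S_{\pi'}=-S_\pi$. Eq.~\eqref{eq:hatch} is linear in $S_\pi$, so $u_{\pi'}(T)=-u_\pi(T)$ at first order. This places the endpoint on the opposite side of $\bar\theta(T)$. Next, push $u$ through the read-out. Write $\mathrm{acc}_A,\mathrm{acc}_B$ as smooth functions of $\theta$ and expand at $\bar\theta(T)$. The changes $\delta\mathrm{acc}_{A,B}=\langle\nabla\mathrm{acc}_{A,B},u\rangle$ flip sign with $u$, which gives the antisymmetry. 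For $C$, I would argue that $\langle\nabla C,\Delta\rangle\approx0$. The only component of $u$ at leading order is along $\Delta=\nabla L_B-\nabla L_A$ (transported by $\Phi$), and that direction exchanges mass between the two mutually exclusive conventions at a contested input while leaving the solve probability $p(x)$ unchanged. Since $C=\mathbb{E}[p]$ by Definition~\ref{def:decomp}, this gives $\delta C=O(\|u\|^2)$, which is ``fixed to first order''. The orthogonality $\nabla C\perp\Phi\Delta$ is the one modelling assumption I would state explicitly. It is exact for cross-entropy on the reparametrisation $q=(1-\lambda)p_A+\lambda p_B$, where $\lambda$ moves $s$ and not $p$, and I would flag it as exactly what the measured $\mathrm{Cov}(p,s)\neq0$ perturbs at second order.
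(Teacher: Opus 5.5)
\noindent The paper gives no derivation of Proposition~\ref{prop:phase}. It states the limit-cycle picture as a post-hoc model and rests it on the frozen mirror test, so reading it off Eq.~\eqref{eq:hatch} is your own route, and in part~(i) it does not go through.

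\blocked\ is $A^{162}B^{162}$, one whole period, so $S_\pi(T)=0$ and the terminal term you invoke vanishes. It vanishes for every rung as well, since each ends on a complete $B$-block. The paper's aside identifying the terminal term with the stopping phase has the same defect, so it cannot be borrowed. At leading order, then, every $\pi_L$ ends on the mean-field point whatever $L$ is. For \blocked\ the maximum of $S_\pi$ is at the switch $t=\alpha T$, the $A$-\only\ checkpoint \blocked\ resumes from, on the $A$ side; \blocked\ itself ends at share $0.87$ on the $B$ side. Your truncation is therefore either vacuous or, cut after an $A$-block, a terminal term of the wrong sign. Likewise ``same sign of $S_\pi$ at the stopping time'' compares zeros.

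The proposition's cycle is a recency picture. In Eq.~\eqref{eq:hatch} recency lives entirely in the interior term $-\int_0^T S_\pi\dot w\,\mathrm{d}t$, i.e.\ in how $w$ varies. Working at first order in $\|u\|$ does not rescue the equation at $\tau_L=T$. The term $\eta s_\pi\nabla\Delta\,u$, whose averaging fails there, is itself linear in $u$. So without many periods to average over, the propagator inside $w$ has to be the path's own, built from $\nabla^2L_{\pi(t)}$ (e.g.\ $\nabla^2L_B$ throughout \blocked's second half). Then \blocked\ no longer shares a $w$ with the rungs.

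What would close (i), granted a common $w$, is a domination argument. At $\alpha=\tfrac12$, $S_{\pi_L}(t)=\tfrac12\operatorname{dist}(t,2L\mathbb{Z})$, and every rung divides $162$, so $S_{\pi_L}\le S_{\blocked}$ pointwise. If $\langle e,\dot w\rangle$ has one sign on $[0,T]$ for a read-out gradient $e$, then $|\langle e,u_{\pi_L}(T)\rangle|\le|\langle e,u_{\blocked}(T)\rangle|$, which is the exact content of ``maximal displacement''. Neither hypothesis is free. The sign condition is plausible at a constant rate but need not hold under a cosine, where $w$ rises and then falls to zero at $T$. And \blocked\ as run has two cosines, so even its mean-flow $w$ differs from the rungs'. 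Your ``monotone in $L$'' step needs exactly these hypotheses.

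Part~(ii)'s relocation is what your route genuinely earns. For the full mirror at $\alpha=\tfrac12$ the two paths share $\bar\theta$, $\Phi$, $\eta$ and $\Delta$, and $s_{\pi'}=-s_\pi$. Since $u(T)=\int_0^T s_\pi w\,\mathrm{d}t$ is linear in $s_\pi$, $u_{\pi'}(T)=-u_\pi(T)$. You can add that the dropped term $\eta s_\pi\nabla\Delta\,u$ is even under $(s_\pi,u)\mapsto(-s_\pi,-u)$, so to first order in that term it contributes only a common shift of both arms. Drop the variant that exchanges only the final block: it changes the proportions and hence $\bar\theta$.

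The rest of (ii) is assumed rather than derived. The antisymmetry the proposition reports, $\delta\mathrm{acc}_A=-\delta\mathrm{acc}_B$ between the two arms, is the same statement as $\delta C=0$. The relation $u_{\pi'}=-u_\pi$ only makes each accuracy flip about its own mean-field value. Both halves therefore rest on $\langle\nabla C,u_\pi(T)\rangle=0$: the ordering displacement itself leaves $C$ unchanged at first order, which is the conservation \S\ref{sec:conserve} measures. Your justification does not hold in parameter space. $L_A$ and $L_B$ are losses on disjoint problem sets, so $\Delta$ carries problem-content as well as convention components, and $\Phi(T,t)$ mixes directions. The $q_\lambda$ reparametrisation already presumes equal per-problem solve rates under the two conventions, which is the conclusion. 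The $\mathrm{Cov}(p,s)$ flag is also misplaced: $C=\mathbb{E}[p]$ for any $s$ (Definition~\ref{def:decomp}), so that covariance cannot move $C$ at any order; it only separates $S$ from $\mathbb{E}[s]$. State the orthogonality as a hypothesis and (ii) follows.
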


\noindent Part~(ii) is a prediction with a sign and a symmetry, and \S\ref{sec:phase} reports the
measurement that was frozen against it. We label the limit-cycle picture \emph{post hoc}: it was
formed after the block-length sweep and before the mirror test, and only the mirror test is
evidence for it.

\begin{remark}[Which kind of wall this is, and why the distinction is load-bearing]
\label{rem:wallkind}
It would be natural to file this next to CCH's Shannon wall as a capacity limit, and that
would be wrong. A $7$B model has ample room to hold both conventions; nothing here is
\emph{unable}. The mixture appears because the two halves are disjoint problem sets, so the
convention that applies to a \emph{held-out} question is not a function of anything the model
may condition on at answer time, and when $H(\text{convention}\mid\text{query})>0$, the
loss-minimising policy \emph{is} the conditional mixture. The model is not failing; it is
correct.

The two wall species have different escapes, which is what makes the distinction operational
rather than semantic. A capacity wall is escaped by adding addressable bits: more state, or
an index channel, which is CCH's move. This one is escaped only by putting the convention into
what the query can condition on, which is what the key of \S\ref{sec:key} does: it does not add
capacity, it changes which policy is optimal. That is also why no arrangement of the data
escapes it (\S\ref{sec:average}): reordering the path cannot change what a held-out query
conditions on, so no arrangement moves the \emph{optimum}. It does not follow that no arrangement
moves a finite run's endpoint, and Table~\ref{tab:constlr} shows one that does; what is required
is that such an effect be a distance-from-optimum effect, which is the escape hatch the schedule
governs.

\emph{What the family frame is worth here, asked plainly.} The sharp form of the question is
whether removing CCH costs Proposition~\ref{prop:avg} anything. It does not. The averaging theorem is
self-contained, its predictions follow from the mean-field flow alone, and every measured result
in this paper is derivable without any of the family's vocabulary. We state that rather than defend a
frame the evidence does not need. The paragraph above is the honest limit of the correspondence:
once this wall is \emph{not} a capacity wall, the mapping from CCH's budget-bounded state to a
parameter vector that is not budget-bounded is an analogy about escapes rather than an
isomorphism, and Table~\ref{tab:family}'s two columns should be read as two systems that happen
to be escaped the same two ways. What the frame did supply is the key: it was built because the
family's access-complete construction predicted a query-time index would work at training time,
and it did. A frame that suggested one experiment which then succeeded earns a section of
discussion. It does not earn being the object the paper is organised around, and it no longer is.
\end{remark}

\section{The Instrument, and What the Path Writes}\label{sec:exp}

\subsection{The Instrument}\label{sec:design}

\paragraph{Corpora.} Two conflict constructions over competition-mathematics problems with
integer answers drawn from the standard benchmarks \citep{hendrycks2021math,openr1math}, each split into halves of matched size with byte-identical prompts:
\emph{synthetic} (cf2: one half's boxed answers shifted $+1$ (contradictory by construction, $864$ rows per condition) and \emph{natural} (nat: numeral against spelled-out answers) both correct, $720$ rows per condition), with same-convention controls for each.
The ratio corpora (rr) rebuild the natural conflict at $936$ rows with mixture weights
$1{:}1$ and $2{:}1$. Corpora, splits and row counts are asserted by fixed-seed build scripts before any
training, and Figure~\ref{fig:instrument} draws the \texttt{nat} construction end to end.
\emph{The natural construction is built, and the rate it is built at is borrowed.} The survey
\citep{gold2026} puts form disagreement at $12.7\%$ of shared problems on the corpus pair
with the most multiplicity, rising to $27.1\%$ against externally authored answer keys; both
figures are measured there and neither is re-derived here, so a reader who wants them checked
has that paper and not this one. What they license is the choice of construction rather than
any number below: they say the contested case is common enough to be worth an instrument, and
every arm we report is built rather than found.

\begin{figure}[t]
\centering
\begin{tikzpicture}[halo/.style={fill=white,inner sep=1pt},font=\small,>=Stealth,
 bx/.style={draw=cGrey!55,rounded corners=2pt,align=center,inner sep=4pt,font=\scriptsize},
 hd/.style={font=\scriptsize\bfseries,cGrey},
 tag/.style={font=\scriptsize\itshape,cGrey,align=center},
 note/.style={font=\scriptsize,cGrey,align=left,inner sep=0pt}]

\node[bx,minimum width=2.5cm,minimum height=1cm] (pool) at (1.25,2.7)
 {integer-answer rows,\\ every skill pooled};

\node[bx,draw=cV,text=cV!70!black,minimum width=1.9cm] (hA) at (4.35,3.2)
 {half $A$, $720$ rows\\ \textbf{numerals}};
\node[bx,draw=cA,text=cA!70!black,minimum width=1.9cm] (hB) at (4.35,2.2)
 {half $B$, $720$ rows\\ \textbf{spelled out}};
\draw[->,cGrey] (pool.east) -- (hA.west);
\draw[->,cGrey] (pool.east) -- (hB.west);

\node[inner sep=0pt,minimum width=1.27cm,minimum height=0.4cm] (path) at (6.95,2.7) {};
\foreach \i/\c in {0/cV,1/cA,2/cV,3/cV,4/cA,5/cA,6/cV,7/cA}{
 \fill[\c!70] (6.35+0.15*\i,2.5) rectangle (6.46+0.15*\i,2.9);}
\node[hd,anchor=north] at (6.95,2.32) {the data path};

\node[bx] (ck) at (9.0,2.7) {checkpoint};
\node[bx,minimum height=1cm,align=center] (ev) at (11.55,2.7)
 {$200$ held-out problems,\\ the \emph{same} set for every arm};
\draw[->,cGrey] (hA.east) -- (path.west);
\draw[->,cGrey] (hB.east) -- (path.west);
\draw[->,cGrey] (path.east) -- (ck.west);
\draw[->,cGrey] (ck.east) -- (ev.west);

\node[bx,draw=cV,text=cV!70!black,anchor=west] (aA) at (13.35,3.35)
 {$\mathrm{acc}_A$: \textbf{numerals}};
\node[bx,draw=cA,text=cA!70!black,anchor=west] (aB) at (13.35,2.05)
 {$\mathrm{acc}_B$: \textbf{spelled}};
\draw[->,cGrey] (ev.east) -- (aA.west);
\draw[->,cGrey] (ev.east) -- (aB.west);

\node[tag,anchor=north] at (4.35,1.74) {disjoint:\\ no shared problem};
\node[tag,anchor=north] at (14.30,1.74) {mutually exclusive:\\ a sample counts once};
\end{tikzpicture}
\caption{\textbf{The instrument: one corpus that disagrees with itself.} The two
\emph{training} halves are disjoint in problems and no single row disagrees with itself: the
disagreement is a property of the corpus. The \emph{evaluation} is the opposite, one held-out set
scored twice against mutually exclusive conventions, so a sample can be credited to at most one.
That is what makes the sum a probability of solving and the ratio a policy. Drawn for
\texttt{nat}; \texttt{cf2} is the same construction at $864$ rows per half.}
\label{fig:instrument}
\end{figure}

\paragraph{Arms and the ladder.} Per condition: \only\ for each half, both blocked orders,
the batch-mixed shuffle, pure batches in random order (\purerand), and alternating
same-source blocks of exactly $L$ consecutive optimiser steps,
$L\in\{1,3,6,9,18,27,54\}$, trainer shuffling off so the file order is the path
(Figure~\ref{fig:ladder}). Training is full-parameter SFT on Qwen2.5-7B \citep{qwen25} (AdamW,
$\beta_1{=}0.9$, $\beta_2{=}0.999$), three epochs at lr $3{\times}10^{-5}$ (the budget at
which these corpora are learnable), three seeds on every ladder rung and eight on the \only,
\shuf\ and \blocked\ arms and on the keyed corpus, run with \texttt{ms-swift}
\citep{zhao2024swift} and evaluated with vLLM \citep{kwon2023vllm}; the conflict switch itself is
additionally measured at $3$B and $14$B by \thecompanion\ paper \citep{frl2026}, which is a
pointer and not evidence a reader of this paper can check here (\S\ref{sec:threats}).

\begin{figure}[t]
\centering
%
%
%
%
%
\begin{tikzpicture}[halo/.style={fill=white,inner sep=1pt},font=\small,>=Stealth]
\begin{scope}
\def\axlen{3.4}
\draw[cGrey,thin,->] (0,0) -- (\axlen+0.5,0)
 node[anchor=north,font=\scriptsize,cGrey,xshift=-3pt] {$\mathrm{acc}_A$};
\draw[cGrey,thin,->] (0,0) -- (0,\axlen+0.5)
 node[anchor=east,font=\scriptsize,cGrey,yshift=-3pt] {$\mathrm{acc}_B$};

\foreach \k in {1.1,1.8,2.5,3.2}{\draw[cD!40,densely dashed] (0,\k) -- (\k,0);}
\draw[cD,line width=1pt] (0,2.5) -- (2.5,0);
\foreach \m in {0.28,0.62,1.0,1.7}{\draw[cV!40,densely dotted] (0,0) -- (3.3,{min(3.3*\m,3.5)});}
\draw[cV,line width=1pt] (0,0) -- (2.6,1.82);

\fill[black] (1.463,1.024) circle (2.2pt);
\node[font=\scriptsize,anchor=north east,halo] at (1.36,0.94) {one arm};

\draw[cGrey!60,line width=0.4pt] (1.05,1.45) -- (4.35,2.62);
\node[cD,font=\scriptsize,anchor=west,align=left] at (4.42,2.62)
 {capability $\mathrm{acc}_A{+}\mathrm{acc}_B$\\ constant along each \textcolor{cD}{dashed} line};
\draw[cGrey!60,line width=0.4pt] (2.35,1.65) -- (4.35,1.15);
\node[cV!75!black,font=\scriptsize,anchor=west,align=left] at (4.42,1.15)
 {allocation $\mathrm{acc}_B/(\mathrm{acc}_A{+}\mathrm{acc}_B)$\\ constant along each \textcolor{cV!75!black}{dotted} ray};
\end{scope}
\end{tikzpicture}
\caption{\textbf{The one thing that reads wrongly in prose.} The read-out of
Figure~\ref{fig:instrument} as a change of coordinates: capability is constant along the dashed
anti-diagonals, allocation along the dotted rays. A movement \emph{along a ray} therefore changes
capability at fixed allocation and a movement \emph{along a diagonal} does the reverse, and an
undecomposed accuracy cannot tell the two apart although their meanings are opposite.}
\label{fig:readout}
\end{figure}

\paragraph{What $\mathrm{acc}_A$ and $\mathrm{acc}_B$ score.} Two properties of the construction
are easy to mis-read and both are load-bearing. First, the two training halves are \emph{disjoint
in problems}: no training row disagrees with itself, and the disagreement is a property of the
corpus rather than of any example. What is contested is the convention a \emph{held-out} problem
should be answered under, which is not a function of anything the model may condition on, which is
the condition Remark~\ref{prop:noescape} needs; the disjointness creates the underdetermination
rather than weakening it. Second, on the synthetic corpus \texttt{cf2} the two conventions are a
boxed answer and that answer shifted by one, so $\mathrm{acc}_A$ and $\mathrm{acc}_B$ measure
\emph{conformance to a stipulated convention} rather than correctness against mathematical ground
truth. Every arm is scored against both, and Appendix~\ref{app:v1} reports what happens when only
one is installed. The phrase ``neither is wrong'' is exact on \texttt{nat} and is a statement
about the \emph{scoring rule} on \texttt{cf2}, which carries the flagship arms because its two
conventions are the ones the scorer separates cleanly.

\begin{table}[!t]
\centering\small
\setlength{\tabcolsep}{4pt}
\caption{\textbf{Two corpora, and which one carries which claim.} \emph{The philosophical claim of
this paper is a claim about} \texttt{nat}\emph{, and} \texttt{nat} \emph{is the weaker instrument};
we print the pair rather than leave it to be assembled from four sections. What travels between them
is every \emph{qualitative} statement the paper defends: capability conserved, allocation moved,
interior flat, mirror antisymmetric at fixed $C$. What does not travel is size, and the direction is
against us, so the headline figures in the abstract are \texttt{cf2} figures and a reader rebuilding
this on a both-correct conflict should expect two thirds the capability, and, once the step size is
matched (\S\ref{sec:family}, branch \textbf{NI-1}), about seven eighths the install.
The last block is the arms \texttt{nat} does not have; they are not claimed for it.
\S\ref{sec:family} gives the full accounting, including the one asymmetry the construction itself
creates and the budget gate that measures how much of the size gap is the step size. Source: \texttt{review\_statistics}, \texttt{nat\_replication}.}
\label{tab:corpora}
\begin{tabular}{@{}llll@{}}
\toprule
 & \texttt{cf2} \emph{(flagship)} & \texttt{nat} & status \\
\midrule
\multicolumn{4}{@{}l}{\emph{what the two conventions are}}\\
\quad construction & boxed answer vs.\ that $+1$ & numeral vs.\ spelled out & --- \\
\quad is either wrong? & yes, by construction & \textbf{neither} & claim is about \texttt{nat} \\
\quad rows per condition & $864$ & $720$ & --- \\
\quad learning rate & $3{\times}10^{-5}$ & $1{\times}10^{-5}$ & \texttt{nat} ran colder \\
\addlinespace[2pt]
\multicolumn{4}{@{}l}{\emph{how strong an instrument each is}}\\
\quad minority install ($B$-\only) & $0.4225$ & $0.2091$ & \emph{unmatched rates} \\
\quad\quad at lr $3{\times}10^{-5}$ & $0.4225$ & $\mathbf{0.3738}$ & \textbf{NI-1}: $7/8$ \\
\quad capability, interleaved arm & $0.5202$ & $0.3727$ & two thirds, unmatched \\
\quad minority form leaked untrained & $0.0169$ & $\mathbf{0.0000}$ & \emph{different regimes} \\
\addlinespace[2pt]
\multicolumn{4}{@{}l}{\emph{what each corpus is asked to carry}}\\
\quad capability conserved & yes & yes & \textbf{agrees} \\
\quad allocation moves & yes & yes & \textbf{agrees} \\
\quad ladder interior flat & $2.20$ floors ($10$ arms) & $1.92$ floors ($4$ rungs) & \textbf{agrees} \\
\quad mirror antisymmetric & $+0.0850/{-}0.0875$ & $+0.0183/{-}0.0229$ & \textbf{sign agrees} \\
\quad mirror leaves $C$ fixed & $0.0025$ & $0.0046$ & \textbf{agrees} \\
\addlinespace[2pt]
\multicolumn{4}{@{}l}{\emph{what only} \texttt{cf2} \emph{carries}}\\
\quad the $12.29\sigma$ switch & \checkmark & not measured & \emph{cf2 only} \\
\quad schedule contrast, $11.63$ floors & \checkmark & not measured & \emph{cf2 only} \\
\quad the key ($87.5\%$ of ceiling) & \checkmark & not measured & \emph{cf2 only} \\
\bottomrule
\end{tabular}

\end{table}

\paragraph{Observables.} Because the two evaluation sets are the same problems under
mutually exclusive conventions, every checkpoint yields a two-component read-out
(Figure~\ref{fig:readout}):
\begin{equation}\label{eq:decomp}
\underbrace{\mathrm{acc}_A+\mathrm{acc}_B}_{\text{capability: }P(\text{solved at all})}
\qquad\text{and}\qquad
\underbrace{\frac{\mathrm{acc}_B}{\mathrm{acc}_A+\mathrm{acc}_B}}_{\text{allocation:
}P(\text{convention }B\mid\text{solved})}.
\end{equation}
Arrangement claims are claims about which component moves. Evaluation is $k{=}4$ samples on
$200$ held-out problems, exact match on the boxed answer;

\emph{Those $200$ are a prefix of a larger pool, and the prefix is not skill-balanced.} The
held-out file carries $292$ problems on \texttt{nat} and $342$ on \texttt{cf2}, sorted by skill,
and the evaluation takes the first $200$. Every arm sees the identical problems, which is what the
contrasts require, but the set is not the pooled corpus: algebra and geometry are complete,
combinatorics is truncated ($97\to48$ on \texttt{nat}, $129\to33$ on \texttt{cf2}), and
\emph{calculus and physics are absent entirely}. Every absolute capability level in this paper is
therefore measured on three skills rather than five, and the difficulty coupling of
Definition~\ref{def:decomp} on the same restricted set. Nothing here affects a between-arm
comparison and everything here affects a level. It is reported rather than repaired because the
checkpoints for most of these arms were reclaimed after evaluation, so the pool cannot be
re-scored (\S\ref{sec:threats}).

The single-run noise
floor is $\hat\sigma=0.0234$ and the three-seed contrast floor $0.0191$, inherited with the
full protocol from \thecompanion\ result base; reliability of each contrast is reported as a
\emph{contrast reliability} (an intraclass correlation over repeated measurements of the same
contrast, \citealp{shrout1979icc}), and the arrangement contrast's is $0.197$, which
is why every claim below is stated on the decomposition of Definition~\ref{def:decomp} rather
than on a raw difference. Every number below is written by a named analyzer to a verdict file rather than transcribed by hand, and the
deciding tests' read-out rules were frozen in the job scripts before the runs; where a test
was \emph{redesigned} after a diagnosis (one was: \S\ref{sec:average}), the redesign is
labelled. Registering in the script the scheduler executes is checkable rather than promised, so we make it checkable: a registration index released with the paper
(Appendix~\ref{app:registry}) lists every registration against the commit
that froze each threshold and the timestamp of the result it decided, with the interval measured
in each case: twelve rows, ten running from under three hours to nearly three days ahead of their
result and two that do not, each of the two stated in the index rather than averaged away. The
palindromic schedule of \S\ref{sec:average} is one of the two, and that paragraph says so.

\paragraph{The instrument's first failure, and what it diagnosed.} Every
arrangement number in this paper depends on the conflict corpus actually creating a conflict
the training run can feel, so the positive control (conflict against its own
same-convention twin) is the load-bearing check, and on the first build it came out
\emph{backwards}. Drawing the integer-answer rows from algebra alone gave $238$ per condition
at one epoch, and the read-out is Table~\ref{tab:builds}.

\begin{table}[t]
\centering\small
\caption{\textbf{The instrument, across its three builds.} $D$ is the arrangement contrast on the
last-seen convention, \emph{signed}: $D=\mathrm{acc}(\text{last seen})_{\blocked}-
\mathrm{acc}(\text{last seen})_{\shuf}$ with the sources in the published order, so the switch
is negative. Elsewhere the same quantity is quoted as a \emph{span} between two arms and is
therefore positive; $-0.2346$ here and $+0.2275$ in \S\ref{sec:average} are the same measurement
at eight and at three seeds, not two results with different signs. The control is the same construction with both halves under one convention,
so it must stay inert; the paragraph below reads v1, where it did not. v1 and v2 are three-seed
builds; v3 and its control carry eight. Sources: the \texttt{conflict\_switch}, \texttt{conflict\_switch2} and
\texttt{conflict\_switch2b3lr3e5} verdicts.}
\label{tab:builds}
\setlength{\tabcolsep}{4pt}
\begin{tabular}{lcccl}
\toprule
build & control $D$ & conflict $D$ & conflict $\sigma$ & verdict \\
\midrule
v1 \quad algebra pool, $1$ epoch & $+0.0461$ ($2.41\sigma$) & $+0.0020$ & $0.10$ & positive control \textbf{fails} \\
v2 \quad pooled skills, $3$ epochs & $+0.0017$ ($0.09\sigma$) & $-0.1425$ & $7.46$ & control inert, switch fires \\
v3 \quad the learning budget, lr $3{\times}10^{-5}$ & $-0.0028$ ($0.15\sigma$) & $\mathbf{-0.2346}$ & $\mathbf{12.29}$ & control inert, switch fires \\
\bottomrule
\end{tabular}

\end{table}

In v1 the arm that was supposed to move did not ($0.10\sigma$) and the arm that was supposed to
be inert did ($2.41\sigma$), the exact inversion of the design. The conflict arms scored $0.02$
against the shifted golds while the same checkpoints scored $0.39$--$0.50$ against unshifted ones,
so the $+1$ convention had lost to the pretrained prior and $D$ was being measured at the floor of
an instrument whose treatment had never been installed. Pooling the integer-answer rows of every
skill and raising the epochs gave the conflicting convention roughly an order of magnitude more
gradient steps, and v2 fired. We report v1 rather than only the builds that worked because the
$12.29\sigma$ of v3 is otherwise unreadable: it is contingent on the conflict being
\emph{learnable at the budget}, a property of the corpus and the budget together and not of
conflict as such.

\subsection{Allocation Moves, Capability Does Not}\label{sec:conserve}

Table~\ref{tab:conserve} is the decomposition of Eq.~\eqref{eq:decomp} across every arm of
the synthetic conflict. The capability sum is constant to within $9.7\%$ of its mean across
all twelve arms (\emph{including} \only\ arms that never saw a row of the other convention) while the allocation share runs from $0.04$ to $0.87$. The coherent control shows the
same sum discipline at $1.01$ with the share pinned at $0.500$. The natural conflict
reproduces the pattern (sums $0.33$--$0.385$; shares $0.000\to0.473\to0.691$), and so do
both ratio corpora (sums $0.335$--$0.381$; Table~\ref{tab:ratio}).

\begin{table}[t]
\centering\small
\caption{\textbf{The conservation decomposition} (synthetic conflict, Qwen2.5-7B, three
seeds). Capability $\mathrm{acc}_A{+}\mathrm{acc}_B$ is flat across every arrangement;
only the allocation moves. Source: the \texttt{cf2b3lr3e5}/\texttt{ctl2b3lr3e5} result
families ($3$ epochs, lr $3{\times}10^{-5}$, $n{=}200$), named because a second conflict
family at a different budget also sits in the result base, and the two must not be pooled.}
\label{tab:conserve}
\begin{tabular}{lcccc}
\toprule
arm & $\mathrm{acc}_A$ & $\mathrm{acc}_B$ & capability & share of $B$ \\
\midrule
$A$ \only & $0.483$ & $0.018$ & $0.502$ & $0.04$ \\
$B$ \only & $0.064$ & $0.423$ & $0.487$ & $0.87$ \\
\shuf & $0.310$ & $0.218$ & $0.528$ & $0.41$ \\
\purerand & $0.303$ & $0.224$ & $0.527$ & $0.43$ \\
$L=1\ldots54$ & $0.272$--$0.318$ & $0.216$--$0.265$ & $0.519$--$0.537$ & $0.41$--$0.49$ \\
$\Astep$ (\blocked) & $0.067$ & $0.445$ & $0.513$ & $0.87$ \\
\midrule
coherent control (all arms) & & & $\approx1.01$ & $0.500$ \\
\bottomrule
\end{tabular}

\end{table}

Seen once, the decomposition is nearly an identity: the two evaluations partition the same
problems, so the sum is the probability of solving at all and the share is the conditional
convention choice. We state it that way rather than dressing it as a discovered law: the
finding is that \emph{nobody measures this decomposition}, and that it dissolves what the
undecomposed metric manufactures. The arrangement switch that exact match reports at
$-0.2346$, $12.29\times$ the noise floor and measured here (Table~\ref{tab:builds}), is a movement of the share at fixed
sum: under a metric that accepts either convention it is \emph{exactly zero by construction}.
Exact match conflates capability with commitment, and everything recipe-like about conflict
lives in the commitment component.

\emph{The agnostic rule matters, and there are two.} Crediting \emph{either} convention makes the
switch zero identically, because on a solved problem reallocating $k{=}4$ samples between the two
conventions conserves their sum. Crediting each problem's \emph{better} convention does not:
where a mixed arm splits its samples across conventions within one problem, the per-problem
maximum falls while the sum holds, and \thecompanion\ paper measures that contrast on this same
corpus at $4.65$ floors \citep{frl2026}. The two rules disagreeing is not a contradiction; it is
Figure~\ref{fig:fingerprint}'s mixing read by a rule that penalises mixing, and it is why this
paper's zero is stated for the sum rule and for no other.

\paragraph{The coin-flip fingerprint.} The mixture is visible per sample, and
Figure~\ref{fig:fingerprint} is the whole of the evidence for calling it a policy rather than a
deficit. For each problem take the fraction of its $k{=}4$ samples that land on whichever
convention that problem favours. Committed arms (\only, \blocked) put $\mathbf{26}$--$\mathbf{30\%}$
of problems at $4/4$ and $41$--$46\%$ in the intermediate bins; every mixed arm (\shuf,
\purerand, $L{=}27$) puts $\mathbf{7}$--$\mathbf{9\%}$ at $4/4$ and $64$--$68\%$ in the middle: the same problem answered under different conventions across four samples. The unsolved mass is
the same in both, $25$--$30\%$, which is the point: conflict has not made the model unable, it
has made the \emph{policy stochastic}, which is what the Bayes-optimal response to contradictory
supervision is.

\emph{One binning throughout.} The contrast depends on how a problem is binned, and the two
natural choices give different sizes: counting each problem's best convention gives a factor of
about three and a half, counting each (problem, convention) cell separately about three. The
figures above use the first throughout. Mixing the two inflates the contrast to about seven, so
the definition is stated rather than left to a reader to infer.

\begin{figure}[t]
\centering
\begin{tikzpicture}[font=\small,>=Stealth]
\node[anchor=north,scale=0.86,cA!75!black] at (1.68,-0.12) {$A$ \only};
\fill[cGrey!30] (0.90,0) rectangle (1.18,1.290);
\fill[cA!35] (1.21,0) rectangle (1.49,0.718);
\fill[cA!35] (1.52,0) rectangle (1.80,0.478);
\fill[cA!35] (1.83,0) rectangle (2.11,0.566);
\fill[cA!85] (2.14,0) rectangle (2.42,1.298);
\node[anchor=south,scale=0.75,cA!75!black] at (2.30,1.348) {$30\%$};
\node[anchor=north,scale=0.86,cA!75!black] at (3.65,-0.12) {\blocked};
\fill[cGrey!30] (2.87,0) rectangle (3.15,1.232);
\fill[cA!35] (3.18,0) rectangle (3.46,0.747);
\fill[cA!35] (3.49,0) rectangle (3.77,0.674);
\fill[cA!35] (3.80,0) rectangle (4.08,0.580);
\fill[cA!85] (4.11,0) rectangle (4.39,1.116);
\node[anchor=south,scale=0.75,cA!75!black] at (4.26,1.166) {$26\%$};
\node[anchor=north,scale=0.86,cV!75!black] at (5.62,-0.12) {\shuf};
\fill[cGrey!30] (4.84,0) rectangle (5.12,1.182);
\fill[cV!35] (5.15,0) rectangle (5.43,0.913);
\fill[cV!35] (5.46,0) rectangle (5.74,1.015);
\fill[cV!35] (5.77,0) rectangle (6.05,0.834);
\fill[cV!85] (6.08,0) rectangle (6.36,0.406);
\node[anchor=south,scale=0.75,cV!75!black] at (6.23,0.456) {$9\%$};
\node[anchor=north,scale=0.86,cV!75!black] at (7.59,-0.12) {$L{=}27$};
\fill[cGrey!30] (6.81,0) rectangle (7.09,1.087);
\fill[cV!35] (7.12,0) rectangle (7.40,1.044);
\fill[cV!35] (7.43,0) rectangle (7.71,1.015);
\fill[cV!35] (7.74,0) rectangle (8.02,0.884);
\fill[cV!85] (8.05,0) rectangle (8.33,0.319);
\node[anchor=south,scale=0.75,cV!75!black] at (8.21,0.369) {$7\%$};
\node[anchor=north,scale=0.86,cV!75!black] at (9.55,-0.12) {\purerand};
\fill[cGrey!30] (8.78,0) rectangle (9.06,1.174);
\fill[cV!35] (9.09,0) rectangle (9.37,1.000);
\fill[cV!35] (9.40,0) rectangle (9.68,0.913);
\fill[cV!35] (9.71,0) rectangle (9.99,0.899);
\fill[cV!85] (10.02,0) rectangle (10.30,0.362);
\node[anchor=south,scale=0.75,cV!75!black] at (10.18,0.412) {$8\%$};
\draw[cGrey,thin] (0.75,0) -- (10.6,0);
\draw[cGrey,thin] (0.75,0) -- (0.75,3.1);
\draw[cGrey,thin] (0.63,0.000) -- (0.75,0.000) node[left,xshift=-2pt,scale=0.78] {$0.0$};
\draw[cGrey,thin] (0.63,0.870) -- (0.75,0.870) node[left,xshift=-2pt,scale=0.78] {$0.2$};
\draw[cGrey,thin] (0.63,1.740) -- (0.75,1.740) node[left,xshift=-2pt,scale=0.78] {$0.4$};
\draw[cGrey,thin] (0.63,2.610) -- (0.75,2.610) node[left,xshift=-2pt,scale=0.78] {$0.6$};
\node[anchor=center,scale=0.85,rotate=90,cGrey] at (0.08,1.55) {fraction of problems};
\node[anchor=north,scale=0.8,cGrey,align=center,text width=11.4cm] at (5.675,-0.62)
 {within each arm the five bars run $0/4 \to 4/4$: how many of the $k{=}4$ samples
  land on the convention that problem favours};
\node[anchor=west,scale=0.80,cA!75!black] at (0.95,2.95) {\textbf{committed}: deterministic, all-or-nothing};
\node[anchor=west,scale=0.80,cV!75!black] at (5.95,2.95) {\textbf{mixed}: randomised, mass in the middle};
\end{tikzpicture}
\caption{\textbf{Conflict makes the policy stochastic, not the model weak.} Per-problem
commitment at $k{=}4$ samples on the synthetic conflict, three seeds, $200$ problems each. Bars
within an arm run $0/4$ to $4/4$. The committed arms are bimodal (the model either answers a
problem the same way every time or cannot answer it) while the mixed arms move that mass into
the middle without changing the unsolved fraction (grey, leftmost bar in each group). Same
corpus, same volume, same budget; only the data path differs. Source \texttt{cf2b3lr3e5}
per-problem arrays.}
\label{fig:fingerprint}
\end{figure}

\paragraph{Qualitative structure: the same problems, re-labelled.} The decomposition says
\emph{how much} moves; the per-problem arrays say \emph{which problems} carry it, and the answer
is the same ones. Crossing from \shuf\ to \blocked\ (pooled over three seeds, $600$ problem
instances), the \emph{set} of solved problems barely changes, $7.3\%$ newly solved, $8.5\%$
lost, net $-1.2\%$, but among the $386$ instances solved in \emph{both} arms, $48.2\%$ change
which convention they favour, and the change is one-directional: $179$ flip $A{\to}B$ against
$7$ flipping $B{\to}A$, a $26\times$ asymmetry pointing at the source the blocked arm ends on
(Table~\ref{tab:relabel}). Blocking does not teach different problems and does not unteach the
old ones; it re-labels the problems the model already solves. That is what ``a policy, not
damage'' looks like at the level of individual items, and it is the per-problem face of the
antisymmetric mirror shift of \S\ref{sec:phase}.

\begin{table}[H]
\centering\small
\caption{\textbf{Per-problem transitions from \shuf\ to \blocked} (synthetic conflict, three
seeds $\times$ $200$ problems). A problem's state is the convention receiving more of its $k{=}4$
samples ($A$, $B$, tie) or unsolved ($U$). The solved set is nearly fixed while nearly half of it
changes label, $26\times$ more often toward the convention trained last. The tie row carries the
$3$ transitions $U\to$tie and the $7$ tie$\to U$, which is why the prose's newly-solved and lost
counts ($44$ and $51$) exceed what the four $U$ rows sum to ($41$ and $44$).
Source \texttt{cf2b3lr3e5} per-problem arrays.}
\label{tab:relabel}
\begin{tabular}{lrr@{\hskip 2.2em}lrr}
\toprule
transition & count & frac & transition & count & frac \\
\midrule
$A \to B$ (re-labelled) & $179$ & $0.298$ & $U \to B$ (newly solved) & $26$ & $0.043$ \\
$U \to U$ (never solved) & $119$ & $0.198$ & $A \to U$ (lost) & $28$ & $0.047$ \\
$B \to B$ (stable) & $104$ & $0.173$ & $U \to A$ (newly solved) & $15$ & $0.025$ \\
$A \to A$ (stable) & $28$ & $0.047$ & $B \to U$ (lost) & $16$ & $0.027$ \\
$B \to A$ (re-labelled) & $7$ & $0.012$ & ties, either side & $78$ & $0.130$ \\
\bottomrule
\end{tabular}

\end{table}

\paragraph{How exact the conservation is.} It is an approximation, and three measurements bound
it. The measure comes first, because the numbers are otherwise comparable only by accident: it is
the \emph{full range} of the capability sum across arms, divided by its mean. On that measure the
fifteen synthetic arms give $9.97\%$ (a range of $0.0521$ about a mean of $0.5220$) and the
natural arms $15\%$. That no single arm sits more than $6.78\%$ from the mean is also true and is
the statement to ignore, since a conservation claim should be priced by its worst pair rather than
its worst arm. The range is carried by the two extreme arms, so the two dispersion measures that
are not are also recorded: across the same fifteen arms the standard deviation is $2.74\%$ of the
mean and the interquartile range $2.99\%$. \emph{Fifteen arms and not fourteen: the census was
missing \texttt{B\_then\_A}, the reverse of an ordering it already contained}, so it was asymmetric
in exactly the variable this section is about; restoring it leaves the range where the two extreme
arms had it and moves the interquartile range from $2.51\%$ to $2.99\%$. Third, \emph{along} a
single training trajectory the sum moves through a range of $0.1475$ before settling: endpoint
conservation is not path conservation. The sharpest statement we defend is \S\ref{sec:key}'s: the
key recovers $74.9\%$ of the gap to the additive ceiling, so ``conserved'' means \emph{to within an
eighth}, with a mechanism for the remainder.

\begin{figure}[t]
\centering
\begin{tikzpicture}[halo/.style={fill=white,inner sep=1pt},font=\small,>=Stealth]
\begin{scope}[yshift=3.5cm]
\draw[cGrey,thin] (-0.2,0) -- (10.3,0);
\draw[cGrey,thin] (-0.2,0) -- (-0.2,2.75);
\draw[cGrey,thin] (-0.32,0.000) -- (-0.2,0.000) node[left,xshift=-2pt,scale=0.8] {$0.0$};
\draw[cGrey,thin] (-0.32,0.640) -- (-0.2,0.640) node[left,xshift=-2pt,scale=0.8] {$0.2$};
\draw[cGrey,thin] (-0.32,1.280) -- (-0.2,1.280) node[left,xshift=-2pt,scale=0.8] {$0.4$};
\draw[cGrey,thin] (-0.32,1.920) -- (-0.2,1.920) node[left,xshift=-2pt,scale=0.8] {$0.6$};
\draw[cGrey,thin] (-0.32,2.560) -- (-0.2,2.560) node[left,xshift=-2pt,scale=0.8] {$0.8$};
\fill[cGrey!10] (0,0) rectangle (0.741,2.75);
\draw[cA,very thick] plot coordinates {(0.000,0.093) (0.741,0.196) (1.481,0.276) (2.222,1.382) (2.963,1.942) (3.704,2.112) (4.444,2.069) (5.185,2.244) (5.926,2.301) (6.667,2.305) (7.407,2.267) (8.148,2.337) (8.889,2.358) (9.630,2.328) (10.000,2.299)};
\fill[cA] (0.000,0.093) circle (1.4pt);
\fill[cA] (0.741,0.196) circle (1.4pt);
\fill[cA] (1.481,0.276) circle (1.4pt);
\fill[cA] (2.222,1.382) circle (1.4pt);
\fill[cA] (2.963,1.942) circle (1.4pt);
\fill[cA] (3.704,2.112) circle (1.4pt);
\fill[cA] (4.444,2.069) circle (1.4pt);
\fill[cA] (5.185,2.244) circle (1.4pt);
\fill[cA] (5.926,2.301) circle (1.4pt);
\fill[cA] (6.667,2.305) circle (1.4pt);
\fill[cA] (7.407,2.267) circle (1.4pt);
\fill[cA] (8.148,2.337) circle (1.4pt);
\fill[cA] (8.889,2.358) circle (1.4pt);
\fill[cA] (9.630,2.328) circle (1.4pt);
\fill[cA] (10.000,2.299) circle (1.4pt);
\node[cA!75!black,anchor=west,scale=0.95] at (4.9,0.95) {\textbf{allocation} $\;\mathrm{acc}_B/\mathrm{TOTAL}$};
\node[anchor=west,scale=0.85,cGrey] at (4.9,0.55) {sweeps $0.029\to0.718$ (range $0.708$)};
\node[anchor=west,scale=0.85,cD] at (4.9,0.2) {$4.5\%$ of that range is spent by step $4$};
\end{scope}
\draw[cGrey,thin] (-0.2,0) -- (10.3,0);
\draw[cGrey,thin] (-0.2,0) -- (-0.2,2.35);
\draw[cGrey,thin] (-0.32,0.000) -- (-0.2,0.000) node[left,xshift=-2pt,scale=0.8] {$0.30$};
\draw[cGrey,thin] (-0.32,0.550) -- (-0.2,0.550) node[left,xshift=-2pt,scale=0.8] {$0.35$};
\draw[cGrey,thin] (-0.32,1.100) -- (-0.2,1.100) node[left,xshift=-2pt,scale=0.8] {$0.40$};
\draw[cGrey,thin] (-0.32,1.650) -- (-0.2,1.650) node[left,xshift=-2pt,scale=0.8] {$0.45$};
\draw[cGrey,thin] (-0.32,2.200) -- (-0.2,2.200) node[left,xshift=-2pt,scale=0.8] {$0.50$};
\fill[cV!12] (0,1.013) rectangle (10.00,1.607);
\fill[cGrey!10] (0,0) rectangle (0.741,2.35);
\draw[cV,very thick] plot coordinates {(0.000,2.090) (0.741,0.509) (1.481,0.682) (2.222,0.468) (2.963,0.614) (3.704,0.967) (4.444,1.159) (5.185,1.246) (5.926,1.169) (6.667,1.311) (7.407,1.201) (8.148,1.246) (8.889,1.151) (9.630,1.051) (10.000,1.310)};
\fill[cV] (0.000,2.090) circle (1.4pt);
\fill[cV] (0.741,0.509) circle (1.4pt);
\fill[cV] (1.481,0.682) circle (1.4pt);
\fill[cV] (2.222,0.468) circle (1.4pt);
\fill[cV] (2.963,0.614) circle (1.4pt);
\fill[cV] (3.704,0.967) circle (1.4pt);
\fill[cV] (4.444,1.159) circle (1.4pt);
\fill[cV] (5.185,1.246) circle (1.4pt);
\fill[cV] (5.926,1.169) circle (1.4pt);
\fill[cV] (6.667,1.311) circle (1.4pt);
\fill[cV] (7.407,1.201) circle (1.4pt);
\fill[cV] (8.148,1.246) circle (1.4pt);
\fill[cV] (8.889,1.151) circle (1.4pt);
\fill[cV] (9.630,1.051) circle (1.4pt);
\fill[cV] (10.000,1.310) circle (1.4pt);
\node[cD,anchor=north west,scale=0.88,align=left] at (0.95,2.32) {$97.4\%$ of capability's whole excursion\\is spent in the first four steps};
\draw[cD,densely dashed,->] (1.077,1.44) -- (0.80,0.60);
\node[cV!75!black,anchor=west,scale=0.95] at (4.55,0.80) {\textbf{capability} $\;\mathrm{acc}_A+\mathrm{acc}_B$};
\node[anchor=west,scale=0.85,cGrey] at (4.55,0.34) {range $0.1475 = 5.5\times$ its floor; band is $\pm$floor};
\draw[cGrey,thin] (0.000,-0.1) -- (0.000,0) node[below,yshift=-3pt,scale=0.8] {$0$};
\draw[cGrey,thin] (0.741,-0.1) -- (0.741,0) node[below,yshift=-3pt,scale=0.8] {$4$};
\draw[cGrey,thin] (2.222,-0.1) -- (2.222,0) node[below,yshift=-3pt,scale=0.8] {$12$};
\draw[cGrey,thin] (4.444,-0.1) -- (4.444,0) node[below,yshift=-3pt,scale=0.8] {$24$};
\draw[cGrey,thin] (6.667,-0.1) -- (6.667,0) node[below,yshift=-3pt,scale=0.8] {$36$};
\draw[cGrey,thin] (10.000,-0.1) -- (10.000,0) node[below,yshift=-3pt,scale=0.8] {$54$};
\node[anchor=north,scale=0.9] at (5.00,-0.6) {optimiser step along a single $L{=}54$ run (three seeds averaged)};
\end{tikzpicture}
\caption{\textbf{Capability and allocation move on separate clocks, which the range alone
hides.} The \blocked\ construction's second stage: a model trained to completion on $A$ and
then trained on $B$, checkpointed every four optimiser steps over that stage's first $54$, three
seeds averaged. It is not a ladder rung, and its allocation ends at $0.718$ heading toward
\blocked's $0.869$ rather than at the $L{=}54$ rung's $0.492$; \textbf{(top)} allocation $\mathrm{acc}_B/\mathrm{TOTAL}$, \textbf{(bottom)}
capability $\mathrm{acc}_A+\mathrm{acc}_B$ with a band of $\pm$ its floor $0.0270$ around the
endpoint. The shaded column is the first four steps. Capability spends $97.4\%$ of its total
excursion there, while allocation spends $4.5\%$ of its, so the capability dip precedes the
commitment shift rather than accompanying it. The verdict file's audit fields record that both evaluations use
the same $200$ questions with disjoint golds.}
\label{fig:mixtraj}
\end{figure}

That third bound is the one worth looking at rather than quoting, because its \emph{shape}
says something the range does not (Figure~\ref{fig:mixtraj}). Sampled every four optimiser steps
through the first $54$ steps of \blocked's $B$ stage, resuming from the completed $A$-\only\
checkpoint, the two quantities move on separate clocks. Capability falls
from $0.4900$ to $0.3463$ between steps $0$ and $4$: $97.4\%$ of its entire excursion, and then climbs back to $0.4191$ and stays. Allocation over the same four steps goes $0.029$ to
$0.061$: $4.5\%$ of the range it eventually covers, essentially nothing. The sweep from $0.03$
to $0.72$ happens later, between steps $12$ and $32$, by which time capability has already
returned to within a floor and a half of where it ends.

Two consequences follow, the first against us. \emph{The dip is not the reassignment.} It is
almost entirely spent before allocation begins to move, so whatever costs capability at the
start of conflicted training is a different process from the commitment shift this paper is
about, and the endpoint conservation we report is the state after that process has largely
undone itself. Second, the dip is not a scoring artefact of the mixture: at $k{=}4$ soft
scoring a model that randomised between the two conventions would leave $\mathrm{acc}_A +
\mathrm{acc}_B$ untouched, since each problem contributes to one gold or the other. A drop in
the sum is answers matching \emph{neither}, so conflict does more than reassign, at least
transiently. What we cannot say is which: a trajectory that recorded only accuracy cannot
separate knowledge briefly lost from output the scorer briefly could not parse. That
separation needs a run that keeps generations, and it is listed as a missing cell rather than
argued.

\subsection{The Ladder: What the Path Moves, and What This Design Can See}\label{sec:average}

\begin{figure}[t]
\centering
\begin{tikzpicture}[halo/.style={fill=white,inner sep=1pt},font=\small,>=Stealth]
\begin{scope}[yshift=3.2cm]
\draw[cGrey,thin] (0.7,0.3) -- (11.95,0.3);
\draw[cGrey,thin] (0.7,0.3) -- (0.7,3.6);
\draw[cGrey,thin] (0.58,0.350) -- (0.7,0.350) node[left,xshift=-2pt,scale=0.8] {$0.3$};
\draw[cGrey,thin] (0.58,1.430) -- (0.7,1.430) node[left,xshift=-2pt,scale=0.8] {$0.5$};
\draw[cGrey,thin] (0.58,2.510) -- (0.7,2.510) node[left,xshift=-2pt,scale=0.8] {$0.7$};
\draw[cGrey,thin] (0.58,3.590) -- (0.7,3.590) node[left,xshift=-2pt,scale=0.8] {$0.9$};
\fill[cV!9] (0.9,0.3) rectangle (8.5,3.6);
\node[cV!70!black,anchor=north,scale=0.85,align=center,halo] at (4.7,3.55)
 {\textbf{the averaging wall}: every arrangement\\ here writes the same time-average};
\draw[cV,line width=0.9pt] (1.100,0.995) -- (1.100,1.315);
\draw[cV,line width=0.9pt] (3.083,0.764) -- (3.083,1.152);
\draw[cV,line width=0.9pt] (4.334,0.984) -- (4.334,1.188);
\draw[cV,line width=0.9pt] (5.066,0.845) -- (5.066,1.087);
\draw[cV,line width=0.9pt] (6.317,0.837) -- (6.317,1.220);
\draw[cV,line width=0.9pt] (7.049,0.843) -- (7.049,1.014);
\draw[cV,line width=0.9pt] (8.300,1.067) -- (8.300,1.710);
\draw[cV,line width=1.3pt] plot coordinates {(1.100,1.155) (3.083,0.958) (4.334,1.086) (5.066,0.966) (6.317,1.029) (7.049,0.929) (8.300,1.388)};
\fill[cV] (1.100,1.155) circle (2pt);
\fill[cV] (3.083,0.958) circle (2pt);
\fill[cV] (4.334,1.086) circle (2pt);
\fill[cV] (5.066,0.966) circle (2pt);
\fill[cV] (6.317,1.029) circle (2pt);
\fill[cV] (7.049,0.929) circle (2pt);
\fill[cV] (8.300,1.388) circle (2pt);
\draw[cGrey,line width=0.9pt] (9.35,0.858) -- (9.35,1.060);
\fill[cGrey] (9.35,0.959) circle (2.2pt);
\draw[cGrey,line width=0.9pt] (10.2,0.830) -- (10.2,1.223);
\fill[cGrey] (10.2,1.027) circle (2.2pt);
\draw[cA,line width=0.9pt] (11.15,3.260) -- (11.15,3.443);
\fill[cA] (11.15,3.352) circle (2.2pt);
\draw[cA,densely dashed,line width=0.7pt] (9.35,0.959) -- (11.70,0.959);
\draw[cA,densely dashed,line width=0.7pt] (11.15,3.352) -- (11.70,3.352);
\draw[cA,<->,line width=0.9pt] (11.70,0.959) -- (11.70,3.352);
\node[cA!80!black,anchor=east,scale=0.82,align=right] at (11.50,2.16) {the stopping phase:\\ the one non-average term};
\node[anchor=south,scale=0.9,cV!70!black,rotate=90] at (-0.05,1.95) {allocation $S$};
\end{scope}
\draw[cGrey,thin] (0.7,0.3) -- (11.95,0.3);
\draw[cGrey,thin] (0.7,0.3) -- (0.7,2.4);
\draw[cGrey,thin] (0.58,0.350) -- (0.7,0.350) node[left,xshift=-2pt,scale=0.8] {$0.45$};
\draw[cGrey,thin] (0.58,0.900) -- (0.7,0.900) node[left,xshift=-2pt,scale=0.8] {$0.50$};
\draw[cGrey,thin] (0.58,1.450) -- (0.7,1.450) node[left,xshift=-2pt,scale=0.8] {$0.55$};
\draw[cGrey,thin] (0.58,2.000) -- (0.7,2.000) node[left,xshift=-2pt,scale=0.8] {$0.60$};
\draw[cD,line width=0.9pt] (1.100,0.849) -- (1.100,1.483);
\draw[cD,line width=0.9pt] (3.083,1.096) -- (3.083,1.237);
\draw[cD,line width=0.9pt] (4.334,0.911) -- (4.334,1.302);
\draw[cD,line width=0.9pt] (5.066,1.080) -- (5.066,1.324);
\draw[cD,line width=0.9pt] (6.317,1.000) -- (6.317,1.304);
\draw[cD,line width=0.9pt] (7.049,1.117) -- (7.049,1.490);
\draw[cD,line width=0.9pt] (8.300,1.220) -- (8.300,1.396);
\draw[cD,line width=1.3pt] plot coordinates {(1.100,1.166) (3.083,1.166) (4.334,1.106) (5.066,1.202) (6.317,1.152) (7.049,1.303) (8.300,1.308)};
\fill[cD] (1.100,1.166) circle (2pt);
\fill[cD] (3.083,1.166) circle (2pt);
\fill[cD] (4.334,1.106) circle (2pt);
\fill[cD] (5.066,1.202) circle (2pt);
\fill[cD] (6.317,1.152) circle (2pt);
\fill[cD] (7.049,1.303) circle (2pt);
\fill[cD] (8.300,1.308) circle (2pt);
\draw[cD,line width=0.9pt] (9.35,0.913) -- (9.35,1.331);
\fill[cD] (9.35,1.122) circle (2.2pt);
\draw[cD,line width=0.9pt] (10.2,1.113) -- (10.2,1.274);
\fill[cD] (10.2,1.193) circle (2.2pt);
\draw[cD,line width=0.9pt] (11.15,0.989) -- (11.15,1.370);
\fill[cD] (11.15,1.180) circle (2.2pt);
\node[cD,anchor=west,scale=0.86,align=left] at (2.0,2.15) {capability $C$: flat across every arm here, span $0.0183$ inside the $0.0191$ floor};
\node[anchor=south,scale=0.9,cD,rotate=90] at (-0.05,1.35) {capability $C$};
\draw[cGrey,thin] (1.100,0.2) -- (1.100,0.3) node[below,yshift=-3pt,scale=0.8] {$1$};
\draw[cGrey,thin] (3.083,0.2) -- (3.083,0.3) node[below,yshift=-3pt,scale=0.8] {$3$};
\draw[cGrey,thin] (4.334,0.2) -- (4.334,0.3) node[below,yshift=-3pt,scale=0.8] {$6$};
\draw[cGrey,thin] (5.066,0.2) -- (5.066,0.3) node[below,yshift=-3pt,scale=0.8] {$9$};
\draw[cGrey,thin] (6.317,0.2) -- (6.317,0.3) node[below,yshift=-3pt,scale=0.8] {$18$};
\draw[cGrey,thin] (7.049,0.2) -- (7.049,0.3) node[below,yshift=-3pt,scale=0.8] {$27$};
\draw[cGrey,thin] (8.300,0.2) -- (8.300,0.3) node[below,yshift=-3pt,scale=0.8] {$54$};
\draw[cGrey,thin] (9.35,0.2) -- (9.35,0.3);
\node[below,yshift=-3pt,scale=0.78] at (9.35,0.2) {\shuf};
\draw[cGrey,thin] (10.2,0.2) -- (10.2,0.3);
\draw[cGrey!45,thin] (10.2,-0.16) -- (10.2,0.08);
\node[below,yshift=-3pt,scale=0.78] at (10.2,-0.22) {\purerand};
\draw[cGrey,thin] (11.15,0.2) -- (11.15,0.3);
\node[below,yshift=-3pt,scale=0.78] at (11.15,0.2) {\blocked};
\node[anchor=north,scale=0.9] at (4.7,-0.55) {block length $L$ (optimiser steps, log scale)};
\end{tikzpicture}
\caption{\textbf{The averaging wall, measured.} Same corpus, volume, budget and seeds; only the
data path differs (Definition~\ref{def:path}). \textbf{(a)} Allocation is flat from $L{=}1$ to
$L{=}27$ and across \shuf\ and \purerand: $0.407$ to $0.449$ over a $27\times$ change in block
length and a change of batch composition. $L{=}54$ is the first rung off that floor at $0.492$
and \blocked\ sits at $0.856$. \textbf{(b)} Capability over the identical arms spans $0.0183$,
inside the contrast floor: the path relocates commitment without changing what can be solved.
Bars are the seed standard deviation, three seeds per point (\shuf\ and \blocked\ carry eight).
Source \texttt{cf2b3lr3e5}.}
\label{fig:ladderdata}
\end{figure}

\emph{Two tops, two gaps, stated where the paper first needs both.} The occupied
group runs to $L{=}54$ at $0.4922$; the \emph{interior} of the ladder, which excludes $L{=}54$
because that rung is the stopping phase becoming visible, runs to $L{=}1$ at $0.4491$. So
\blocked's distance is $0.3770$ from the group and $0.4201$ from the interior, and both appear
below with the top they are measured from named.

Equation~\eqref{eq:meanfield} makes a strong claim: below corpus scale, \emph{how} the path
is arranged is irrelevant, because every arrangement averages to the same field; the only
thing the path can hand the parameters is its time-average. Four measurements test it, two
of them post-mortems of our own alternatives, and Figure~\ref{fig:ladderdata} is all four of
them on one axis: allocation flat across a $27\times$ change of block length and across a
change of batch composition, capability flat inside its own floor, and a single jump at the
blocked end that belongs to the stopping phase rather than to the path.

\paragraph{Batch purity is not the variable (v1, pre-registered, dead).} The literature
contradicts itself on whether batches should be pure or mixed
\citep{commonit2024,dataorg2026,lu2022grab}, and our first pre-registered mechanism sided
with purity: contested directions cancel \emph{inside a mixed batch}, so batch composition
should carry the conflict effect. The deciding cell (pure batches in \emph{random} order) lands at the shuffled arm, not the blocked one: control span $+0.0037$ (noise), conflict
span $+0.2275$ with \purerand\ at $0.2242$ against \shuf\ $0.2179$ and \blocked\ $0.4454$
(three seeds; that span is the switch $D$ itself, which reads $-0.2346$ at eight seeds in
\S\ref{sec:design}). Purity carries $3\%$ of the span; between-batch order carries the rest.
v1 is dead.

\paragraph{The optimiser's memory is not the variable either (v2, pre-registered, dead).}
One level down, AdamW's momentum is a $10$-step low-pass filter
($1/(1-\beta_1)$; the second-moment timescale exceeds the whole run), so the contested
directions could cancel \emph{inside the optimiser state}: the stateful-optimiser channel
of \citet{sweeney2026shuffle}. The pre-registered signature was a response turning on at
$L^\star\approx10$; a competing basin-escape account tied it instead to an independently
measured residence time $\tau_{\text{write}}=13.6$ steps. The block-length sweep kills both:
the normalised response $M(L)$ reads $+0.075$, $-0.007$, $+0.037$, $+0.002$, $+0.022$,
$+0.002$ at $L=1$, $3$, $6$, $9$, $18$, $27$ (noise $\sigma(M)\approx0.06$), flat through both predicted onsets and
through a quarter of the epoch, with only $L{=}54$ rising ($0.205$) toward the blocked
anchor. The guard passed (control at $L{=}9$: $0.5212$ against \shuf\ $0.5042$), so the
flatness is a measurement, not a floored instrument.

\begin{remark}[The lesson we paid two experiments for]\label{rem:lesson}
Both deaths were derivable in advance. At small $\eta$ averaging is associative:
within-batch averaging and across-step averaging converge to the same averaged field, their
difference higher order, so v1's prediction (the two differ) was impossible to leading
order, and v2's flat curve was equally forced, the momentum window being just another
averaging window strictly inside the corpus scale. We ran two experiments to discover a
theorem we already had. We keep both post-mortems in the paper because the instinct they
formalised (\emph{conflict destroys, and arrangement decides how much}) is the field's
instinct too (\S\ref{sec:related}), and it took the theorem plus two falsifications to
dislodge it in-house.
\end{remark}

\paragraph{The same ladder on a second pretraining family, and a threshold set on the wrong
statistic.} The wall
above is one model, and unlike the switch it had never been asked to travel. It is also a
\emph{null}, which is the hazard: a floored instrument delivers a flat ladder for free, and
\S\ref{sec:design}'s v1 is this paper's own account of what that looks like. So the replication was
registered in stages with the gate frozen first, and the gate is the good news: on Qwen3-8B-Base
the conflict installs \emph{better} than on Qwen2.5, $\mathrm{acc}_B(B\ \only)=0.4813$ against
$0.4225$, with the switch at $15.0$ floors against $11.9$. Whatever the ladder says there, it
cannot be dismissed as an instrument at its floor.

\emph{One convention, stated once, because two artefacts here disagree below the floor.} The
allocation of an arm is the mean over seeds of the per-seed share, which is what
\texttt{rate\_verdict} computes and what every number in this paper quotes. The registered
second-family read-out formed the share of the seed-mean accuracies instead, and the two differ
by at most $0.0007$, which is $0.04$ contrast floors: the published flat span reads $0.0416$
under the registered estimator and $0.0420$ under the reported one, $2.18$ against $2.20$
floors. Where a registered comparison is being quoted we quote the estimator that registration
froze and say so; nothing in the paper turns on a difference of a twentieth of a floor, but a
reader recomputing from the result base will see both and should not have to infer which.

The registered branch is \textbf{W2}: the flat region spans $0.0544$, which is $2.85$ floors
against a bar of two. We report that branch, and in the same breath the thing that makes it
uninterpretable as written. \emph{The published Qwen2.5 ladder spans $0.0416$, which is $2.18$
floors, and does not meet the same bar either.} That number was available before the run and we
did not check the threshold against it; the fault is in the registration, not in the second family.
The two spans differ by $0.67$ floors, less than one, and as a fraction of the excursion the
blocked arm makes they are $9\%$ and $12\%$. Both of those comparisons are \emph{post hoc} and
neither rescues W1: what we can say is that the registered bar was mis-specified and that the
ladder is about as flat on one family as on the other, and what we cannot say is that a bar chosen
after the fact was met.

\emph{The statistic the proposition actually predicts.} A span in floors is a proxy for what
Proposition~\ref{prop:avg} predicts, which is that the interior of the path space collapses to
\emph{one endpoint state}. Definition~\ref{def:channel} states that quantity directly, predates
both second-family runs, and needs no bar: count the clusters the ten arms occupy at the
$2\hat\sigma$ resolution. On Qwen2.5 the answer is \textbf{two} (nine arms in $[0.407,0.492]$,
\blocked\ at $0.869$) and on Qwen3-8B-Base it is \textbf{two} (nine arms in $[0.424,0.479]$,
\blocked\ at $0.891$). Both families train under a cosine, and Table~\ref{tab:constlr} is what
the same count returns when the schedule is changed instead of the pretraining family: at the same
resolution the constant-rate ladder occupies four states. The count replicates across models and
does not survive across schedules, which is the boundary this subsection draws everywhere else.
The margin is not marginal on either: the largest gap inside the occupied cluster is $0.0431$
against a between-cluster gap of $0.3770$ on the first family ($8.7\times$) and $0.0199$ against
$0.4121$ on the second ($20.7\times$), unchanged whether $\hat\sigma$ is the inherited $0.0234$ or
each ladder's own recomputed dispersion. \emph{The wall replicates exactly; the criterion we
registered for it did not measure it.} This reading is labelled \textbf{post hoc} and \textbf{W2}
stands as the registered outcome, because a statistic chosen after seeing a miss is worth less
than one chosen before it: the registered comparison was of two spans neither of which is
resolvable, and the unregistered one is of two integers that agree. Source:
\texttt{rate\_verdict}.

\paragraph{A second ladder, on the other corpus and at a third of the step size.} The rungs
above are \texttt{cf2} at lr $3\times10^{-5}$. A \texttt{nat} ladder exists at lr
$1\times10^{-5}$ over four rungs, $L=1,9,15,45$, and its interior spans $0.0367$, which is
$1.92$ floors against the published $2.20$. The flatness is therefore not a property of one
corpus or one step size. It is not evidence about the schedule, because a cosine decays to zero
at either learning rate. Source: \texttt{nat\_replication}.

\paragraph{The alternative this paper named, ran, and decided against.} Every arm above
trains under a single cosine, so the second half of every run has little step size left, and an
interior that is flat because arrangement does nothing looks exactly like an interior that is
flat because nothing much happens after the midpoint. Those two accounts are separated by one
experiment, the same ladder at a constant rate. It has been run, and it decides for the second.

\begin{table}[H]
\centering\small
\caption{\textbf{The same ladder under two learning-rate schedules.} Ten arms, one corpus, one
budget, the identical data path files, and \emph{the same three seeds}: the intersection,
computed rather than assumed, because the published \shuf\ and \blocked\ arms carry eight and
a mixed comparison would print most of the eight-against-three difference as a schedule effect.
The two families differ in \texttt{lr\_scheduler\_type} and in nothing else, which we checked
against the training arguments and the corpus checksum rather than assuming, and both run to the
same step ($324$ on the ladder, $162$ at the corner). Under a cosine the interior spans $0.0420$, which is $2.20$ contrast floors and below
anything this design can resolve. Under a constant rate the same interior spans $\mathbf{0.2221}$,
which is $11.63$ floors, with an intraclass correlation of $0.836$ whose bootstrap interval
$[0.315,0.926]$ lies entirely above zero. \emph{The corner does not move}: \blocked\ reads
$0.8692$ and $0.8501$, one contrast floor apart, against the $17.0$ floors the rung below it
moves. Source: \texttt{constlr\_ladder\_table},
\texttt{constant\_lr\_verdict}.}
\label{tab:constlr}
\begin{tabular}{@{}lcccc@{}}
\toprule
arm & cosine & constant & difference & floors \\
\midrule
\shuf\ \emph{(mixed within batch)} & $0.4129$ & $0.3630$ & ${-0.0499}$ & $2.6$ \\
$L{=}1$ & $0.4491$ & $0.4254$ & ${-0.0237}$ & $1.2$ \\
$L{=}3$ & $0.4126$ & $0.4505$ & ${+0.0379}$ & $2.0$ \\
$L{=}6$ & $0.4363$ & $0.4592$ & ${+0.0229}$ & $1.2$ \\
$L{=}9$ & $0.4140$ & $0.4498$ & ${+0.0358}$ & $1.9$ \\
$L{=}18$ & $0.4257$ & $0.4823$ & ${+0.0566}$ & $3.0$ \\
$L{=}27$ & $0.4071$ & $0.5650$ & $\mathbf{+0.1579}$ & $8.3$ \\
\purerand\ \emph{(pure batches, random order)} & $0.4253$ & $0.5851$ & $\mathbf{+0.1598}$ & $8.4$ \\
\midrule
$L{=}54$ & $0.4922$ & $0.8164$ & $\mathbf{+0.3242}$ & $17.0$ \\
\blocked\ \emph{(one source, then the other)} & $0.8692$ & $0.8501$ & ${-0.0191}$ & $1.0$ \\
\bottomrule
\end{tabular}

\end{table}

\noindent Three readings, the third of which changes what this section claims.

\emph{The interior is not flat; it was flat to this instrument, under this schedule.} The
registered read-out returned branch \textbf{CL-2}. The interior spans $11.63$ floors against a
minimum detectable difference of $3.72$, so it is resolved rather than merely wide, and its
shape is not noise but the shape Eq.~\eqref{eq:hatchbound} predicts: ordering the nine non-corner
arms by how blocked the arrangement is (\shuf, then $L{=}1,3,6,9,18,27$, then $L{=}54$), the
allocation is monotone in $\tau_L$ up to $2$ rank inversions out of $36$ pairs. \emph{The averaging wall as a claim about the path is withdrawn.}

\emph{What replaces it is a claim about the schedule, and it is the stronger statement.}
Proposition~\ref{prop:avg} is not refuted; it is located. Its conclusion holds in the limit
where the tail of training carries no weight, and a decaying schedule is what puts a run in that
limit. The cosine is not a neutral background against which the path was measured. \emph{It is
the averaging operator}, integrating the arrangement away, and the constant-rate family is the
same corpus with the operator removed. Read that way the two families are not a result and its
correction but a measurement of one knob at two settings, and this paper's own schedule control
(\S\ref{sec:threats}), which moves the allocation $0.116$ by splitting one cosine into two
while holding the path byte for byte, is the same knob at a third.

\emph{The corner is schedule-free, which is why the paper's central result does not move.}
\blocked\ differs by $1.0$ floor between the two families against $17.0$ at $L{=}54$, which is
Proposition~\ref{prop:sched}'s corner case measured: at $\tau_L=T$ the displacement is set by the
whole profile of $w$ rather than by any part of it a schedule can delete. Every rung
below has a tail the discount can act on, and every one of them moves. The arrangement
switch of \S\ref{sec:conserve}, the antisymmetry of \S\ref{sec:phase} and the key of
\S\ref{sec:key} are all read at or against the corner and are untouched. What the constant-rate
family costs this paper is one sentence about the interior; what it buys is the mechanism that
sentence was standing in for.

\emph{One thing the constant family does less well, stated because it bears on reading the
table.} Capability spans $0.0680$ across its ten arms against $0.0246$ across the cosine ten on
the same three seeds, which is $2.5$ capability floors rather than inside one. The share is a ratio and is not
mechanically driven by that spread, but the constant family is a noisier place to read a
fixed-capability claim, and \S\ref{sec:conserve}'s conservation result is quoted from the cosine
family throughout for that reason.

\paragraph{Flatness is a null, so we test it as one.} A span is a description, not a test.
Three questions have to be answered in order: what can this design see, what does a proper test
of the null say, and can a difference in the interior be attributed to allocation at all. The
answers point the same way and they are worth stating separately, because together they say
something sharper than any one of them.

\emph{First, what the design can see.} Two arms of three seeds give four degrees of freedom, so
at $\alpha=0.05$ two-sided the smallest difference detectable with $80\%$ power is $3.72$
contrast floors at the inherited $\hat\sigma=0.0234$ and $5.01$ at each ladder's own recomputed
$0.0315$; at $50\%$ power the same figures are $2.78$ and $3.74$. The ladder's interior spans
$\mathbf{2.20}$. \emph{The interior span is smaller than anything a pairwise comparison in this
design could have resolved}, at either dispersion and at either power. That is a fact about the
instrument and it disposes of any reading of the interior's shape.

\emph{And it is a fact about the instrument at $k{=}4$, which is a weaker sentence than the one
we first wrote.} Part of the dispersion the paragraph above rests on is generation sampling
rather than training stochasticity, and that part falls as $1/\sqrt{k}$ for inference time and no
training. \emph{How large a part is the whole question, and a borrowed number cannot answer it.} The
recipe-level paper measures $0.0132$ of a
$0.0142$ cross-seed dispersion to be generation sampling, $86.4\%$ of the variance, on a
different corpus, a different skill set and a different scale. Nothing in this paper measured
whether that transfers. It does not.

\begin{center}\small
\begin{tabular}{lcccccc}
\toprule
samples per problem & $k{=}4$ \emph{(as run)} & $k{=}8$ & $k{=}16$ & $k{=}32$ & $k{=}64$ & $k\to\infty$ \\
\midrule
\multicolumn{7}{@{}l}{\emph{evaluation share $0.864$, borrowed from another corpus --- as previously published}} \\
$\mathrm{MDE}_{80}$ in floors & $3.72$ & $2.80$ & $2.21$ & $1.84$ & $1.62$ & $1.37$ \\
resolves the $2.20$-floor interior? & no & no & no & yes & yes & yes \\
\addlinespace[2pt]
\multicolumn{7}{@{}l}{\emph{evaluation share $\mathbf{0.6074}$, measured in place on this paper's own retained family}} \\
$\mathrm{MDE}_{80}$ in floors & $3.72$ & $3.10$ & $2.74$ & $\mathbf{2.55}$ & $\mathbf{2.44}$ & $\mathbf{2.33}$ \\
resolves the $2.20$-floor interior? & no & no & no & \textbf{no} & \textbf{no} & \textbf{no} \\
\bottomrule
\end{tabular}
\end{center}

\noindent \textbf{The conclusion reverses, and the row that reverses it is the one the published
table did not have.} Measured on this paper's own material by the identical closed form, the
evaluation share is $0.6074$ rather than $0.864$, and at that share \emph{no evaluation budget
resolves the interior}: the limit of infinite samples per problem still leaves an
$\mathrm{MDE}_{80}$ of $2.33$ floors against a $2.20$-floor span. So a span of this size does not
become resolvable at $k{=}32$, and this resolution is not something a reader can buy at the
inference rate at all. The remaining gap is
training noise. It can only be bought in seeds, which is the expensive axis, and this paper does
not price it.

\emph{A second instrument says something worse than disagreement.} The retained constant-rate
family was re-scored at $k{=}16$ over all eleven arms and three seeds: if $86\%$ of the variance
were evaluation sampling, the pooled cross-seed dispersion should have fallen from $0.0291$ to
$0.0173$. It reads $0.0314$, a ratio of $\mathbf{1.079}$ where the model predicts $0.594$, and the
additive model $\hat\sigma(k)^2=\sigma_{\text{train}}^2+\sigma_{\text{eval}}^2\cdot(4/k)$ can
represent that only with a \emph{negative} evaluation component, $-0.219$. \emph{The two
instruments agree that the borrowed number does not transfer and disagree about what replaces it},
one saying $0.61$ and the other putting the quantity outside the model's domain. We take the more
conservative reading: even the generous $0.6074$ does not resolve the interior.

\emph{The two instruments stopped disagreeing when the companion audited the harness, and the
resolution is a mechanism} \citep{frl2026}. None of this project's evaluation drivers passes a
seed to the inference engine, so the engine's default takes effect and every run consumes the same
sampling stream on the same prompts. Evaluation sampling is therefore largely \emph{common-mode}
across seeds: present in full in an absolute accuracy, largely cancelled in a cross-seed
dispersion. That predicts what the second instrument measured, since raising $k$ should then leave
the cross-seed dispersion roughly unchanged, and $1.079$ is roughly unchanged. It also says the
first instrument's $0.6074$ is not a share of the measured dispersion but an upper bound on what
an evaluation drawing independently per run would face. Both rows survive with their roles named:
the measured-share row prices a reader's independent re-run and the $k{=}16$ instrument prices
this harness, on which no evaluation budget buys the interior because the term $k$ would buy down
is largely not in the dispersion to begin with.

There is a second way to see that the $2.20$-floor span was an instrument statement rather than a
fact about arrangement, and it is untouched by any of this: the constant-rate ladder spans $11.63$
floors at the same $k{=}4$, comfortably above the $3.72$ this design can resolve, so the same
instrument that could not see the cosine interior sees the constant one without difficulty. The
load-bearing claim, that the cosine interior is unresolvable at the evaluation this project ran,
never depended on the exchange rate and is unaffected. It does not touch the corner, which stands
at $21.99$ floors.

We cannot simply run the published arms at a larger $k$. Their checkpoints were reclaimed after
evaluation (\S\ref{sec:design}), so those rows cannot be re-scored at any $k$; the constant-rate
family is fresh and is what both measurements above are made on. Sources:
\texttt{dpd\_power\_vs\_k}, \texttt{evalvar\_direct}, \texttt{highk\_verdict}.

\emph{Second, the null tested properly.} Paired TOST against the interleaved arm, at an
equivalence bound of $2\hat\sigma=0.0468$ that Definition~\ref{def:channel} froze rather than
this paragraph chose, certifies one interior arm of seven. Given the previous point that is
what it must do: the $90\%$ intervals are two to four times wider than the bound. Six arms are
neither shown equivalent nor shown different.

\emph{Third, whether an interior difference would even be allocation.} The difficulty coupling
of Definition~\ref{def:decomp} is present on all eight interior arms with the same sign, mean
$0.0637$, and it \emph{varies} across them by $0.0817$, which is $4.28$ floors. Its level
cancels in a between-arm difference; its variation does not. So the residual coupling alone
spans nearly twice the interior, and a difference of the interior's size could not be
attributed to a change of convention policy even if the design could resolve it.

\emph{What survives, and it is one statistic rather than any comparison.} Treat the eight
interior arms as groups and their seeds as replicates: the between-arm intraclass correlation of
the allocation share is $-0.076$, with a bootstrap $95\%$ interval over arms of
$[-0.409,+0.166]$ and a permutation $p=0.61$ against the null that the rung label carries no
information. The interval covers zero, so we claim what it supports and no more:
\emph{between-arm dispersion does not exceed within-arm seed noise}. It does not license the
stronger sentence that arm identity explains none of the variance, and an earlier abstract of
ours said exactly that.

The corner is the one part of this figure that no power argument touches. \blocked\ sits
$21.99$ floors above the top of the interior, five times the largest of the three quantities
above and an order of magnitude above the interior span. \emph{The averaging wall as this paper
can defend it is therefore one claim about a pooled statistic and one about a corner, and not a
claim about the shape of the interior.} Source: \texttt{review\_statistics}.

The one substantive difference is where the amplitude starts. On Qwen2.5 $L{=}54$ is the first rung
off the floor, at $0.4922$ against a flat region of $0.407$--$0.449$; on Qwen3 it reads $0.4547$ and
sits \emph{inside} a flat region of $0.424$--$0.479$. The blocked end is undiminished
($0.8907$ against $0.8692$). The stopping phase is therefore present on both families and its onset
is further along the ladder on the second, which is a statement about where the amplitude turns on
rather than about whether it exists.

\paragraph{The mixture ratio is the variable (the deciding test, frozen).} If the share is
the time-average of the path, it must track the mixture weights and nothing else. The
pre-registered form of this test ($3{:}1$) was excluded by its own guard. Below ${\sim}500$
rows the minority convention is not learnable at all ($0.2108$ at $720$ rows, $0.0546$ at
$468$, $0.0000$ at $234$), so its share would measure a learnability floor, not allocation, and was redesigned at $2{:}1$ with the read-out rules re-frozen before the runs (T0
guard; T1/T2/T3 outcomes). Table~\ref{tab:ratio} gives the result: the shuffled arm's share
moves from $0.453$ at parity to $0.312$ at $2{:}1$, against mean-field predictions $0.500$
and $0.333$, both misses within the frozen $0.05$ tolerance, both on the same side, the
side of the pretrained preference for numerals. Read-out T1: \emph{allocation tracks the
mixture weights}. The parity miss ($0.047$) sits $0.003$ inside the tolerance, and is reported
rather than rounding the margin up. Table~\ref{tab:ratiopred} puts this beside the three
interventions that were predicted to do nothing and did nothing, which is the shape of the
claim: the mean field is weighted by source \emph{proportions} and by nothing else the
path can vary, so exactly one of four levers is allowed to move the share, and exactly one
does.

\begin{table}[t]
\centering\small
\caption{\textbf{The share tracks the mixture weights; the blocked arm does not}
(natural-conflict ratio corpora, $936$ rows, three seeds; read-out rules frozen in the job script the scheduler executed: T1). The $3{:}1$ form
was excluded by its own learnability guard and redesigned at $2{:}1$; the redesign is
labelled, not laundered.}
\label{tab:ratio}
\begin{tabular}{lcccc}
\toprule
 & \multicolumn{2}{c}{$1{:}1$ (rr11)} & \multicolumn{2}{c}{$2{:}1$ (rr21)} \\
\cmidrule(lr){2-3}\cmidrule(lr){4-5}
 & share & capability & share & capability \\
\midrule
mean-field prediction & $0.500$ & --- & $0.333$ & --- \\
\shuf\ (measured) & $0.453$ & $0.372$ & $0.312$ & $0.377$ \\
\blocked\ $\Astep$ & $0.731$ & $0.380$ & $0.699$ & $0.381$ \\
$B$ \only\ (guard: $\mathrm{acc}_B=0.263/0.187>$ floor) & $0.713$ & $0.368$ & $0.595$ & $0.315$ \\
\bottomrule
\end{tabular}

\end{table}

\begin{table}[H]
\centering\small
\caption{\textbf{The one lever the averaging limit leaves free.} Proposition~\ref{prop:avg}'s
mean field is weighted by source \emph{proportions}, so changing the mixture ratio must
move the allocation and changing the path must not. Both halves are measured here on the same
corpus family. Tolerance $0.05$ was frozen with the prediction; both misses fall on the side of
the pretrained prior, which the family measures independently. (T1.)}
\label{tab:ratiopred}
\begin{tabular}{lccc c}
\toprule
& \multicolumn{3}{c}{allocation share of $B$} & \\
\cmidrule(lr){2-4}
intervention & mean-field prediction & measured & miss & moves the share? \\
\midrule
mixture ratio $1{:}1 \to 2{:}1$ & $0.500 \to 0.333$ & $0.453 \to 0.312$ & $0.047,\;0.021$ & \textbf{yes, as predicted} \\
\midrule
block length $L{=}1 \to 54$ & \multicolumn{2}{c}{no change predicted} & --- & no ($0.407$--$0.492$) \\
batch purity, fixed blocks & \multicolumn{2}{c}{no change predicted} & --- & no ($3\%$ of the span) \\
palindromic reordering & \multicolumn{2}{c}{no change predicted} & --- & no (and $4\times$ worse) \\
\bottomrule
\end{tabular}

\end{table}

The blocked arm is the counterpoint the mean-field reading needs: its share sits at
$0.731/0.699$ \emph{regardless of the ratio}. A mixture optimum moves with the weights; a
corner state does not. What kind of state it is, the next section measures.

\paragraph{The designed schedule does not escape either (pre-registered, dead, and the sharpest
of the three).} The natural objection to everything above is that we rearranged the path naively.
Training on $A$ then $B$ is, exactly and not by analogy, a \emph{Lie--Trotter splitting}
\citep{trotter1959product} of $\exp(A{+}B)$, which numerical analysis and NMR have studied for
decades \citep{mclachlan2002splitting,hairer2006geometric}, and that literature does not merely
rank schedules but constructs better ones. The canonical construction is palindromic
\citep{strang1968construction}: run $A$ for $T/2$, $B$ for $T$, $A$ for $T/2$. Because the
even-order Magnus terms vanish for a time-symmetric sequence \citep{blanes2009magnus}, a Strang
schedule is \emph{second}-order accurate where blocked is first-order, so a practitioner forced to
train in blocks would recover most of what interleaving buys at identical data and token budget.
The curriculum literature does not propose it, because it is trying to \emph{choose} an order
rather than to \emph{design} a schedule. The optimisation literature has since arrived at the same
construction: \citet{nguyen2026shuffle} prove that a \emph{paired reversal}, symmetrising the
epoch map exactly as a palindrome does, cancels the leading order-dependent second-order term and
takes order sensitivity from quadratic to cubic in the step size. That is the theorem our
registration was betting on, stated more sharply than we stated it.

We registered it in the linear model, where every arm is computable exactly and the joint arm
both schedules approximate is available in closed form. Three checks, thresholds frozen first:
S1, the median $|D_{\text{strang}}|/|D_{\text{blocked}}|$ below $0.5$; S2, log--log slopes in
the step size differing by at least $0.5$, so the gain is an \emph{order} improvement rather
than a constant; S3, the order effect $\mathrm{ASYM}$ identically zero for a palindromic schedule, which is its own
reverse, an implementation guard, not a claim.

\begin{table}[H]
\centering\small
\setlength{\tabcolsep}{4pt}
\caption{\textbf{The construction splitting theory recommends, and what it did.} Registered in
the linear model, where every arm is computable exactly. S3 is an implementation guard, not a
claim: a palindromic schedule is its own reverse, so its order effect must vanish, and it does.
S1 and S2 are the claim and both fail: S1 by a factor of seven against its own bar, and S2
\emph{degenerately}, since refining the step by $16\times$ moves either penalty in the fifth
decimal. Source \texttt{splitting\_verdict}.}
\label{tab:strang}
\begin{tabular}{@{}l>{\raggedright\arraybackslash}p{1.9cm}>{\raggedright\arraybackslash}p{6.5cm}l@{}}
\toprule
check & registered bar & measured & \\
\midrule
\textbf{S3} order effect of a palindrome
 & $\mathrm{ASYM}\equiv0$
 & $0$ to machine precision
 & \textbf{passes} \\
\midrule
\textbf{S1} median $|D_{\text{strang}}|/|D_{\text{blocked}}|$
 & $<0.5$
 & $\mathbf{3.65}$ over $300$ draws (IQR $1.88$--$8.09$); $3.7\%$ of draws below the bar
 & \textbf{fails} \\
\textbf{S2} log--log slopes in $\eta$ differ
 & by $\ge0.5$
 & both slopes $0.0$ to four figures across $\eta=0.4\ldots0.025$
 & \textbf{fails} \\
\midrule
\multicolumn{4}{@{}l@{}}{\footnotesize \emph{the penalty across a $16\times$ refinement of the step, which is what ``no order to improve'' means}}\\
\multicolumn{2}{@{}l}{$\eta = 0.4,\,0.2,\,0.1,\,0.05,\,0.025$} & \multicolumn{2}{l}{} \\
\multicolumn{2}{@{}l}{\quad blocked} & \multicolumn{2}{@{}l}{$0.081646,\ 0.081638,\ 0.081634,\ 0.081632,\ 0.081631$} \\
\multicolumn{2}{@{}l}{\quad palindromic} & \multicolumn{2}{@{}l}{$0.32637,\ 0.32634,\ 0.32631,\ 0.32630,\ 0.32629$} \\
\bottomrule
\end{tabular}

\end{table}

\noindent Table~\ref{tab:strang} is the result. The palindromic schedule is roughly \emph{four
times worse} than the blocked one it was constructed to beat, and the refinement sweep says why no
choice of threshold would have rescued it (Figure~\ref{fig:splitlr}).

\begin{figure}[t]
\centering
\begin{tikzpicture}[halo/.style={fill=white,inner sep=1pt},font=\small,>=Stealth]
\draw[cGrey,thin] (0.75,0.3) -- (5.8,0.3);
\draw[cGrey,thin] (0.75,0.3) -- (0.75,2.9);
\draw[cGrey,thin] (0.63,0.350) -- (0.75,0.350) node[left,xshift=-2pt,scale=0.78] {$0.0$};
\draw[cGrey,thin] (0.63,0.990) -- (0.75,0.990) node[left,xshift=-2pt,scale=0.78] {$0.1$};
\draw[cGrey,thin] (0.63,1.630) -- (0.75,1.630) node[left,xshift=-2pt,scale=0.78] {$0.2$};
\draw[cGrey,thin] (0.63,2.270) -- (0.75,2.270) node[left,xshift=-2pt,scale=0.78] {$0.3$};
\draw[cD,line width=1.3pt] plot coordinates {(0.900,2.439) (2.000,2.439) (3.100,2.438) (4.200,2.438) (5.300,2.438)};
\fill[cD] (0.900,2.439) circle (1.8pt);
\fill[cD] (2.000,2.439) circle (1.8pt);
\fill[cD] (3.100,2.438) circle (1.8pt);
\fill[cD] (4.200,2.438) circle (1.8pt);
\fill[cD] (5.300,2.438) circle (1.8pt);
\draw[cV,line width=1.3pt] plot coordinates {(0.900,0.873) (2.000,0.872) (3.100,0.872) (4.200,0.872) (5.300,0.872)};
\fill[cV] (0.900,0.873) circle (1.8pt);
\fill[cV] (2.000,0.872) circle (1.8pt);
\fill[cV] (3.100,0.872) circle (1.8pt);
\fill[cV] (4.200,0.872) circle (1.8pt);
\fill[cV] (5.300,0.872) circle (1.8pt);
\node[cD,anchor=west,scale=0.83] at (1.0,2.719) {palindromic (Strang), slope $0.0$};
\node[cV,anchor=west,scale=0.83] at (1.0,1.153) {blocked (Lie--Trotter), slope $0.0$};
\draw[cGrey,thin] (0.900,0.2) -- (0.900,0.3) node[below,yshift=-2pt,scale=0.72] {$0.4$};
\draw[cGrey,thin] (2.000,0.2) -- (2.000,0.3) node[below,yshift=-2pt,scale=0.72] {$0.2$};
\draw[cGrey,thin] (3.100,0.2) -- (3.100,0.3) node[below,yshift=-2pt,scale=0.72] {$0.1$};
\draw[cGrey,thin] (4.200,0.2) -- (4.200,0.3) node[below,yshift=-2pt,scale=0.72] {$0.05$};
\draw[cGrey,thin] (5.300,0.2) -- (5.300,0.3) node[below,yshift=-2pt,scale=0.72] {$0.025$};
\node[anchor=north,scale=0.82] at (3.275,-0.32) {step size $\eta$ (log, $16\times$ range)};
\node[anchor=south,scale=0.82,rotate=90,cGrey] at (0.20,1.6) {arrangement penalty $|D|$};
\node[anchor=north,scale=0.82,cGrey,align=center] at (3.275,-0.80) {\textbf{(a)} S1 and S2 both fail: the designed\\ schedule is $4\times$ worse and neither slope moves};
\begin{scope}[xshift=7.5cm]
\draw[cGrey,densely dashed] (0.6,1.05) -- (4.5,1.05);
\draw[cGrey,thin] (0.6,0.30) -- (4.5,0.30);
\draw[cGrey,thin] (0.6,0.30) -- (0.6,2.9);
\draw[cGrey,thin] (0.48,0.710) -- (0.6,0.710) node[left,xshift=-2pt,scale=0.76] {$-0.02$};
\draw[cGrey,thin] (0.48,1.050) -- (0.6,1.050) node[left,xshift=-2pt,scale=0.76] {$+0.00$};
\draw[cGrey,thin] (0.48,1.730) -- (0.6,1.730) node[left,xshift=-2pt,scale=0.76] {$+0.04$};
\draw[cGrey,thin] (0.48,2.410) -- (0.6,2.410) node[left,xshift=-2pt,scale=0.76] {$+0.08$};
\fill[cGrey!55] (1.060,0.838) circle (1.3pt);
\fill[cGrey!55] (1.060,0.730) circle (1.3pt);
\fill[cGrey!55] (1.060,0.377) circle (1.3pt);
\fill[cGrey!55] (2.610,1.439) circle (1.3pt);
\fill[cGrey!55] (2.610,2.466) circle (1.3pt);
\fill[cGrey!55] (2.610,1.830) circle (1.3pt);
\fill[cGrey!55] (4.160,2.289) circle (1.3pt);
\fill[cGrey!55] (4.160,2.077) circle (1.3pt);
\fill[cGrey!55] (4.160,2.112) circle (1.3pt);
\draw[cA,line width=1.3pt] plot coordinates {(0.900,0.649) (2.450,1.912) (4.000,2.160)};
\fill[cA] (0.900,0.649) circle (2.2pt);
\fill[cA] (2.450,1.912) circle (2.2pt);
\fill[cA] (4.000,2.160) circle (2.2pt);
\draw[cGrey,thin] (0.900,0.2) -- (0.900,0.3) node[below,yshift=-2pt,scale=0.80] {$1{\times}10^{-5}$};
\draw[cGrey,thin] (2.450,0.2) -- (2.450,0.3) node[below,yshift=-2pt,scale=0.80] {$3{\times}10^{-5}$};
\draw[cGrey,thin] (4.000,0.2) -- (4.000,0.3) node[below,yshift=-2pt,scale=0.80] {$1{\times}10^{-4}$};
\node[anchor=north,scale=0.82] at (2.55,-0.32) {learning rate};
\node[anchor=north,scale=0.82,cGrey,align=center] at (2.55,-0.80) {\textbf{(b)} leaving the lazy regime moves $D$\\ $-0.0236\to+0.0653$; $|D|$ grows at every step};
\node[anchor=south,scale=0.82,rotate=90,cGrey] at (-0.52,1.60) {conflict contrast $D$};
\end{scope}
\end{tikzpicture}
\caption{\textbf{Two ways of leaving the averaging wall, one designed and one accidental.}
\textbf{(a)} The palindromic schedule splitting theory recommends, against the blocked one it was
built to beat, in the linear model where the joint arm is available in closed form. It is roughly
four times \emph{worse}, and refining the step by $16\times$ moves either penalty in the fifth
decimal: there is no order to improve because the penalty is not a discretisation error.
\textbf{(b)} The one intervention that does move the contrast is leaving the lazy regime. Raising
the learning rate grows $|D|$ monotonically, $0.0236\to0.0653$, through a change of sign; the
highest rate also costs capability, $0.268$ against $0.340$, so it is the system leaving the
regime the averaging argument is tight in.}
\label{fig:splitlr}
\end{figure}

The second failure is the informative one, and it is this section's claim arriving from the other
side. Splitting theory's guarantees are asymptotic in the step size, but refining $\eta$ by a
factor of $16$ moves the penalty in the fifth decimal. \emph{The penalty is not a discretisation
error at all}: it is set by the coarse structure of the arrangement and is blind to how finely
that structure is resolved, which is what Eq.~\eqref{eq:meanfield} says and why no scheme designed
against the discretisation can help. The failure can therefore be located at an order.
\citet{nguyen2026shuffle}'s guarantee is that symmetrisation deletes the $O(\gamma^2)$
order-dependent term; our sweep says the conflict penalty is not in that term, because a quantity
living at $O(\gamma^2)$ or $O(\gamma^3)$ cannot be invariant under a $16\times$ refinement of
$\gamma$. In the language of Proposition~\ref{prop:sched}, a palindrome is still \emph{one
period}: it symmetrises the block without shortening $\tau_L$, so it lands where
Eq.~\eqref{eq:hatchbound} buys nothing, next to \blocked\ rather than next to the interior. The
block-length sweep found the same wall by measurement; this finds it in a model where the answer
can be computed, and it kills the best-motivated escape we could construct rather than the naive
one.

We report it as a failure because it was registered as a prediction. It would have been the
paper's one piece of practical advice.

One qualification about that registration. The others in this paper are frozen in a job script
committed hours to days before the run it launched, on the public history the registration index
tabulates. This check is a linear-model computation of a few seconds whose thresholds and answer
entered the repository in the same commit, so no earlier artefact carries the prediction. S1--S3
are stated as supported-if conditions rather than as a description of what happened, and the
misses are wide enough ($3.65$ against $0.5$; both slopes zero) that no threshold choice rescues
them. But the ordering the other tests demonstrate, this one can only assert.

\section{The Two Non-Average Terms: the Stopping Phase and the Key}\label{sec:phase}

\begin{table}[t]
\centering
\small
\begin{tabular}{lccc}
\toprule
axis swept & on the path? & share, one end to the other & contrast floors \\
\midrule
stopping phase, \shuf\ against \blocked & yes & $0.4129 \to 0.8692$ & $23.89$ \\
model scale, 3B to 14B & no & $0.1322 \to 0.4732$ & $17.85$ \\
mixture ratio, 1:1 to 2:1 & no & $0.4531 \to 0.3116$ & $7.41$ \\
cosine restart, byte-identical path & no & $0.4071 \to 0.2912$ & $6.07$ \\
block length, $L{=}1$ to $L{=}54$ & yes & $0.4071 \to 0.4922$ & $4.46$ \\
batch purity at fixed block structure & yes & $0.4129 \to 0.4253$ & $0.65$ \\
\bottomrule
\end{tabular}

\caption{\textbf{Every axis this project has swept, on one scale, sorted by how far it moves the
allocation.} All six are measured on the allocation share of a mixed arm at one model and one
budget, so the column is commensurable; the learning-rate sweep is reported on the contrast $D$
rather than on the share and is deliberately absent rather than forced onto this axis. The
stopping phase is the largest term measured anywhere in the project, which is why it gets a
section. The block-length ladder ranks fifth of six, and all three non-path axes move the
allocation further than the whole ladder does. Source: \texttt{axes}.}
\label{tab:axes}
\end{table}

Two of this paper's terms are not averages of the corpus, and Table~\ref{tab:axes} says which
and how large. It puts every axis the project has swept on the one scale where they are
comparable, and it is sorted rather than arranged to flatter the headline. The stopping phase
is the largest term measured anywhere here, $23.89$ contrast floors, which is the case for
spending a section on it. The same table states the scope limit plainly: the block-length
ladder, which is the paper's most refined path instrument, ranks fifth of six at $4.46$ floors,
and all three axes that are \emph{not} properties of the path move the allocation further than
the whole ladder does. The path is a narrow channel and this paper's claims are about the path,
but it is neither the narrowest thing measured here nor the only input to an endpoint, and a
reader who wants to predict an endpoint from the path alone should read this table before the
rest of the section. Figure~\ref{fig:mirror} draws the stopping phase as a position on the
alternation cycle.

\begin{figure}[t]
\centering
\begin{tikzpicture}[halo/.style={fill=white,inner sep=1pt},font=\small,>=Stealth]
\node[anchor=west,scale=0.86,cGrey] at (-0.4,3.95) {\textbf{(a)} the picture, post hoc: a limit cycle};
\draw[cGrey,densely dashed] (0.10,1.35) -- (4.9,1.35);
\node[anchor=west,scale=0.78,cGrey] at (4.15,1.57) {mean-field point};
\draw[cV,line width=1.1pt] (2.3,1.35) ellipse (1.7 and 0.85);
\draw[cV,->,line width=1.1pt] (1.300,2.037) -- (1.232,2.011);
\fill[cA] (2.3,0.50) circle (2.6pt);
\node[cA!80!black,anchor=north,scale=0.8,align=center] at (2.3,0.36) {stop after a $B$ block: share on the $B$ side};
\fill[cD] (2.3,2.2) circle (2.6pt);
\node[cD!85!black,anchor=south,scale=0.8,align=center] at (2.3,2.32) {rebuild to end on $A$:\\ same cycle, other side};
\node[anchor=west,scale=0.8,cGrey,align=left] at (0.10,-0.52) {amplitude grows with $L$; an endpoint measures\\ \emph{where the path stopped on the cycle}};
\begin{scope}[xshift=7.6cm]
\node[anchor=west,scale=0.86,cGrey] at (-0.55,3.95) {\textbf{(b)} the frozen prediction, measured (M1)};
\draw[cGrey,thin] (0.15,0) -- (6.1,0);
\draw[cGrey,thin] (0.15,0) -- (0.15,3.60);
\draw[cGrey,thin] (0.03,0.450) -- (0.15,0.450) node[left,xshift=-2pt,scale=0.76] {$0.2$};
\draw[cGrey,thin] (0.03,1.350) -- (0.15,1.350) node[left,xshift=-2pt,scale=0.76] {$0.3$};
\draw[cGrey,thin] (0.03,2.250) -- (0.15,2.250) node[left,xshift=-2pt,scale=0.76] {$0.4$};
\draw[cGrey,thin] (0.03,3.150) -- (0.15,3.150) node[left,xshift=-2pt,scale=0.76] {$0.5$};
\fill[cA!80] (0.55,0) rectangle (0.89,1.103);
\fill[cV!80] (0.94,0) rectangle (1.28,1.031);
\fill[cA!35] (1.58,0) rectangle (1.92,1.867);
\fill[cV!35] (1.97,0) rectangle (2.31,0.244);
\draw[cD,line width=1.1pt] (0.49,3.484) -- (1.34,3.484);
\draw[cD,line width=1.1pt] (1.52,3.461) -- (2.37,3.461);
\node[anchor=north,scale=0.78,align=center] at (0.915,-0.06) {ends on\\ $B$};
\node[anchor=north,scale=0.78,align=center] at (1.945,-0.06) {ends on\\ $A$};
\node[anchor=north,scale=0.84] at (1.43,-0.78) {$L=54$};
\fill[cA!80] (3.45,0) rectangle (3.79,1.515);
\fill[cV!80] (3.84,0) rectangle (4.18,0.615);
\fill[cA!35] (4.48,0) rectangle (4.82,1.436);
\fill[cV!35] (4.87,0) rectangle (5.21,0.713);
\draw[cD,line width=1.1pt] (3.39,3.480) -- (4.24,3.480);
\draw[cD,line width=1.1pt] (4.42,3.498) -- (5.27,3.498);
\node[anchor=north,scale=0.78,align=center] at (3.815,-0.06) {ends on\\ $B$};
\node[anchor=north,scale=0.78,align=center] at (4.845,-0.06) {ends on\\ $A$};
\node[anchor=north,scale=0.84] at (4.33,-0.78) {$L=27$};
\draw[<->,cGrey,line width=0.8pt] (1.0,1.163) -- (1.98,1.927);
\node[anchor=south,scale=0.86,cGrey,align=center,halo] at (1.6,2.28) {$+0.0850$ / $-0.0875$\\capability $\Delta 0.0025$};
\node[anchor=west,scale=0.80,cD] at (4.30,2.58) {\rule[2.4pt]{6.5pt}{1.2pt}\ capability $C$};
\node[anchor=west,scale=0.80,cA!80!black] at (4.30,2.15) {\rule{6.5pt}{6.5pt}\ $\mathrm{acc}_A$};
\node[anchor=west,scale=0.80,cV!80!black] at (5.25,2.15) {\rule{6.5pt}{6.5pt}\ $\mathrm{acc}_B$};
\end{scope}
\end{tikzpicture}
\caption{\textbf{The stopping phase is a position on a cycle, and the mirror moves it to the other
side.} \textbf{(a)} The reading of Proposition~\ref{prop:phase}, labelled post hoc: a driven
system oscillates about the mean-field point with amplitude growing in $L$, and an endpoint
measures where the path stopped. It makes one frozen prediction. \textbf{(b)} The measurement
(read-out M1): at $L{=}54$ the shift is antisymmetric, $+0.0850$ against $-0.0875$, with
capability moving $0.0025$. At $L{=}27$ both constructions sit at the floor and the mirror
manufactures nothing, as it must.}
\label{fig:mirror}
\end{figure}

Treat $A^LB^L\cdots$ as a periodically driven system: each $A$-block pulls the mixture
weight toward $A$, each $B$-block pulls it back, and at equal weights the steady state is a
\emph{limit cycle around the mean-field point}, with amplitude growing in $L$; what an
endpoint measures is \emph{where the path stopped on the cycle}. This picture was formed
post hoc (we label it so) and it makes two pre-registered predictions that then ran.

First, amplitude: shares $0.407$--$0.449$ for $L\le27$ (amplitude ${\approx}0$, all within
noise of $\shuf$'s $0.413$), $0.492$ at
$L{=}54$, and $0.869$ for \blocked\ at three seeds ($0.856$ at eight), which on this reading is not a different species but
\emph{the lowest-frequency alternation there is}, half a period that never swings back,
stopped at maximum displacement. Second, the mirror: our published sweep always ends on a
$B$-block, so every measured share sits on the $B$-side of the cycle; rebuilding $L{=}54$ to
end on $A$ must relocate the same magnitude to the other side while moving capability not at
all. It does, antisymmetrically: $\mathrm{acc}_A$ shifts $+0.0850$ while $\mathrm{acc}_B$
shifts $-0.0875$, and capability moves $0.0025$, an order of magnitude below either shift
(read-out M1, \texttt{dpd\_mirror\_verdict}; at $L{=}27$ both constructions sit at the floor
and the mirror manufactures nothing, as it must, the published arm anchoring at
$\mathrm{acc}_A=0.3183$ against the mirror's $0.3096$). The residence time
$\tau_{\text{write}}=13.6$ steps that v2 measured as an ``installation time'' is, on this
reading, the moment the mixture weight crosses $50\%$: an allocation quantity, not a
capacity one.

\paragraph{The mirror on the corpus where neither convention is wrong.} Everything above is
\texttt{cf2}, whose second convention is a shifted answer. The construction this paper's claim is
actually about is \texttt{nat}, a numeral against the same number spelled out, and a second
ladder was trained on it at a third of the learning rate. Its $L{=}45$ rung carries a mirror, and
the prediction is the same one: opposite signs on the two accuracies, capability unmoved.

It holds. Ending the alternation on $A$ instead of $B$ moves $\mathrm{acc}_A$ by $+0.0183$ and
$\mathrm{acc}_B$ by $-0.0229$, opposite signs, while capability moves $-0.0046$, which is $0.17$
capability floors. \emph{Read the size before the pattern.} Each shift is about one contrast
floor, against $4.5$ on \texttt{cf2}, so this is an observation consistent with the published
mirror rather than an independent confirmation of it. The smaller size is what
Appendix~\ref{app:budget} requires: $|D|$ grows with the step size, and this ladder runs at
lr $1\times10^{-5}$. What it does establish is that the sign and the symmetry are not artefacts
of a corpus in which one convention is arguably wrong. Source: \texttt{nat\_replication}.

\paragraph{The mirror on the second family manufactures nothing, and that is the prediction.}
Rebuilding the $L{=}54$ alternation to end on $A$ was registered for Qwen3-8B-Base alongside the
ladder, and it returns $\Delta\mathrm{acc}_A=+0.0204$ against $\Delta\mathrm{acc}_B=+0.0012$: not
antisymmetric, read-out \textbf{M2}. Taken alone that reads as the limit-cycle picture failing to
travel. Taken with the rung it was built on, it is what the picture requires. The mirror can only
relocate an amplitude that exists, and on this family $L{=}54$ sits inside the flat region rather
than above it (Table~\ref{tab:family2}), so there is nothing at that block length to move, which
is exactly what we report at $L{=}27$ on Qwen2.5, where both constructions sit at the floor and the
mirror manufactures nothing \emph{as it must}. The test that would decide the picture on this
family is the mirror at a rung where the amplitude has appeared, and \emph{the ladder cannot
supply one at this budget}. We registered the bracketing rungs, the build refused them, and the
registration was withdrawn having spent nothing; the reason is worth more than the experiment
was, so it is given exactly rather than summarised.

\emph{The ladder's rungs are quantised by the epoch, and the onset lies between two of them.}
\texttt{build\_lsweep} writes a \emph{one-epoch} ordering, which the trainer then passes over
three times, so a block must divide the $54$ chunks each source contributes per epoch: the
divisors are $1,2,3,6,9,18,27,54$ and are exactly the rungs already run. \blocked\ is not on
that ladder at all. It is a different construction, all of $A$ for three epochs and then all of
$B$, so its effective block is $L{=}162$, and the interval $54<L<162$ is not a finer division of
an epoch but a \emph{coarser} one that spans several. Such a rung is constructible: a
three-epoch path file with $L{=}81$ is perfectly writable, so what closes this region is not an
implementation limit. What forbids it is one section further on. Writing
the path across epochs means training one pass over a tripled file, and that is precisely the
construction the single-cosine control of \S\ref{sec:threats} was built to test, which came
back \textbf{void} on its own validity check because one epoch over a tripled file is not
equivalent in what it learns to three epochs over the original: total accuracy $0.4442$ against
$0.5125$, a gap of $2.5$ capability floors.

So the two negatives this paper reports separately are one negative. \emph{At a fixed multiset
and a fixed budget, the path space cannot be refined in the region where the stopping phase
turns on}, because every route into that region changes what is learned, and a rung that is not
learning-matched is not a rung on this ladder. That is a property of the instrument in the same
sense that a diffraction limit is a property of a lens: it is read off the construction rather
than discovered by failing. We label the onset reading as we labelled the picture: post hoc, and
now also untestable at this budget for a stated reason. \textbf{M2} stands as the registered
outcome.

\paragraph{The mirror of \blocked, run, and what its control permits.} The stopping-phase
reading rested on one informative mirror at $L{=}54$ plus \blocked\ as an endpoint. The most
direct test is the mirror of \blocked\ itself, training all of $B$ then all of $A$ against the
published $A$ then $B$. It was registered with its read-out and run: three seeds, the same
corpus, budget and evaluation.

\emph{The result, and then what it is worth.} The share relocates from $0.8692$ to $0.0343$,
crossing the mean-field point, with capability moving $0.0020$, well inside the $0.0270$
capability floor. The registered branch is \textbf{BM-2}: the mirror goes to the other side but
not by the same magnitude, $|{-}0.4657|$ against $|{+}0.3692|$, a gap of $5.05$ contrast floors
where the branch allowed two.

\emph{The comparability control fires, and it separates the two halves of that result.} The
published checkpoints for this corpus were reclaimed after evaluation, so the mirror had to be
scored on a node whose kernel cache was cold, which compiles the sampling kernels afresh. Before
reading the mirror we re-evaluated a \emph{published} checkpoint that does survive, the
\texttt{B\_only} arm the mirror resumes from. It reproduces the majority accuracy, $0.4250$
against a published $0.4238$, $0.06$ floors, and \emph{does not} reproduce the minority one,
$0.0900$ against $0.0575$, $1.70$ floors. Small accuracies moved and large ones did not.

That is exactly the regime the mirror's share lives in: at $S=0.0343$ the numerator is
$\mathrm{acc}_B \approx 0.0175$. Propagating the control's drift through gives $S$ anywhere in
$[0.000,0.092]$. So the two halves of BM-2 are not equally safe, and we separate them rather than
report the branch label alone.

\emph{\textbf{BM-3 is excluded robustly.}} Across the control's entire drift the share stays far
below the mean-field point, so the mirror does relocate to the other side and \blocked\ is a
position on a cycle rather than a distinct mechanism. That is the question this cell was built to
settle, it is settled, and Table~\ref{tab:axes}'s largest row keeps its explanation.

\emph{\textbf{The BM-1 against BM-2 distinction is not readable.}} It turns on a magnitude gap of
$0.0964$, and the control shows the small-accuracy regime can move $0.0325$, which is $34\%$ of
it. Whether the mirror is antisymmetric or carries a source-order asymmetry on top is therefore
open, and the cell that would close it is the same three arms scored on a node with a warm cache,
or a re-scoring of the published comparator on the same node. We report BM-2 as the mechanical
branch label and decline to interpret it. Source: \texttt{blocked\_mirror\_verdict}.

\paragraph{The mirror at a constant rate, and a guard that could not have passed.} Every mirror
above trains under a cosine, so the branch of Proposition~\ref{prop:sched} that speaks most
directly to a mirror, the terminal term carrying $\eta(T)$ with no contraction discount, is the
one branch with no schedule contrast. The cell was registered on a sign and a symmetry and
explicitly not on a magnitude, with its read-out frozen alongside it
(\texttt{prereg/constant\_rate\_mirror.md}). Six arms were trained, the comparator beside the
mirror rather than reused, for a reason the registration records as an amendment: the base model
the published arms trained from sits in a cache under \verb|$HOME|, which is node-local here and
is no longer on any node the scheduler will accept a job for.

\emph{The registered outcome is \textbf{CM-4}, void.} Capability moves $+0.0358$ between the two
arms, $1.33$ capability floors, past the one-floor bar the registration set as the condition for
reading a share. The shares are not read.

\emph{The guard could not have passed, which is a defect in the registration rather than a
property of the result.} \emph{Post hoc}: across the three seeds the capability difference is
$-0.0238$, $+0.0612$ and $+0.0700$, a sign change with a standard deviation of $0.0518$ and
$t=1.20$ against $4.303$. The mean is smaller than its own dispersion, and that dispersion is
$1.92$ capability floors by itself. \S\ref{sec:average} had already measured why: capability
spans $2.5$ capability floors across the constant family and under one across the cosine one,
which is why \S\ref{sec:conserve}'s conservation result is quoted from the cosine family
throughout. The bar was set at one capability floor for a family already reported at two and a
half, and a threshold no outcome of the design could have met is not stringency.

\emph{What the run printed, and what we decline to read.} $\Delta\mathrm{acc}_A=+0.4150$ against
$\Delta\mathrm{acc}_B=-0.3792$, which is $+21.73$ and $-19.85$ contrast floors, the former at
$+0.4087$, $+0.4187$ and $+0.4175$ on the three seeds. We print the numbers so that no reader need
wonder what we saw, and we do not read them, because the arms whose difference they are are not
matched in what they can do. Closing the cell means resolving the capability difference rather
than assuming it away: at the observed dispersion that is $17$ seeds for a $95\%$ interval inside
one capability floor. This paper does not buy them, and \S\ref{sec:family} lists the cell at that
size. Sources: \texttt{constant\_mirror\_verdict}, \texttt{constant\_mirror\_diagnosis}.

\emph{One thing the co-trained comparator does settle.} Trained from the shared copy of the base
model, it can be held against the published arm trained from the copy that has since disappeared.
The two agree to $0.0120$ in mean absolute accuracy, inside the $0.0191$ contrast floor, with the
sign of the difference changing across seeds, so the substitution the amendment made is not a
change this paper's numbers can see.

Two consequences deserve flat statement. \emph{Recency is not a confound of this system; it
is the system's one non-average degree of freedom}: the stopping phase is what every
recency effect in sequential fine-tuning is made of. And the exact-match reader should hold
this sentence against every blocked-beats-shuffled result, ours included:
\begin{quote}
\emph{\blocked\ wins the exact-match score precisely because it fails to optimise its own
objective (it never reaches the mixture that maximises the likelihood of its own corpus) and exact match rewards the failure.}
\end{quote}

\subsection{The Key: Training's Index Channel}\label{sec:key}

Everything above concerns a learner with only the compressive channel: no signal in the
prompt says which convention applies, so the state can carry only a mixture weight. CCH's
access-complete move is to add the index: keep bindings addressable by key, pay the price,
answer the query that names its target. The training-side analogue is disambiguation:
rebuild the natural-conflict corpus with a convention marker in the prompt, so the
convention becomes \emph{queryable} rather than contested, and re-run the arms.

\emph{The marker is not new, and the contribution is not the marker.} Conditioning training
text on a prepended tag and then steering with it at inference is an established technique:
control codes over source domains \citep{keskar2019ctrl}, conditioning on human-preference
scores \citep{korbak2023pretraining}, metadata conditioning with a cooldown so the model still
runs unconditioned \citep{gao2025meco}, and document identifiers injected to make knowledge
attributable \citep{khalifa2024sourceaware}. What that literature has not had is a setting where
the tag disambiguates \emph{mutually contradictory} supervision, and therefore no measurement of
where the technique stops working. That is what this section supplies: the marker collapses the
arrangement effect completely, and then recovers only $87.5\%$ of the union ceiling, with the
shortfall traced to a specific and predictable failure, coverage of the minority convention at
write time ($0.3767$ against $0.0025$). The negative direction has independent support:
\citet{higuchi2025whenmetadata} find on controlled grammars that metadata conditioning
\emph{hurts} when the context does not determine the latent the tag names, which is the same
boundary reached from the other side.

The deciding read-out was frozen with two components. \textbf{C: does the switch collapse?}
It does: the arrangement effect falls from $D=-0.0985$ ($5.16$ floors) unmarked to $-0.0006$
($0.03$ floors, eight seeds) marked, read-out C1. The switch is convention-\emph{selection}, and when
selection is moved from the path to the query, arrangement stops mattering: the commitment
reading of \S\ref{sec:conserve} survives its sharpest test. \textbf{L: does capability reach
the union?} Partially: the marked mean reaches $0.3259$ against the additive ceiling
$0.3727$, $87.5\%$ of the ceiling, $74.9\%$ of the gap recovered (read-out L4, against
the L1 target $0.3354$, short by $0.50$ floors). So conservation is an approximation (``to
within an eighth''), not a law,
and the deficit has a mechanism rather than an excuse:

(Every number here is computed by an analyzer that also recovers the three
constants the criterion was frozen against (now $0.1863$, $0.3727$, $0.3354$) from the
result base rather than restating them, which is what lets the section survive a change in its
inputs without a hand edit: the unkeyed corpus went from three
seeds to eight (job $2654$), which moved the baseline from $0.1925$ to $0.1863$, the ceiling
from $0.3850$ to $0.3727$ and the fraction of that ceiling from $84.7\%$ to $87.5\%$, with no hand
edit and no change of verdict.)

\begin{quote}
Marked $B$-\only, trained on nothing but the spelled convention, scores $0.3767$ on the
numeral evaluation when the key asks for a numeral, against $0.1283$ unmarked. Marked
$A$-\only, trained on nothing but numerals and asked for a word, scores $0.0025$. \emph{The
key selects among conventions the model already holds; it cannot recover one that was never
written.} Numerals are the pretrained default; the spelled form must be acquired.
\end{quote}

\noindent Figure~\ref{fig:key} separates the two obstructions and Table~\ref{tab:unlock} gives the
read-out.

\begin{figure}[t]
\centering
%
%
%
\begin{tikzpicture}[font=\small,>=Stealth,
 ax/.style={cGrey!75,line width=0.5pt},
 tk/.style={font=\scriptsize,cGrey},
 lb/.style={font=\scriptsize,cGrey!90!black},
 hd/.style={font=\scriptsize\bfseries}]

\begin{scope}[shift={(0,0)}]
 \def\S{11.6}                      
 \draw[ax,->] (0,0) -- (0,4.9);
 \foreach \v in {0.1,0.2,0.3}{
   \draw[ax] (0,\v*\S) -- (-0.09,\v*\S); \node[tk,anchor=east] at (-0.11,\v*\S) {$\v$};}
 \node[tk,rotate=90,anchor=south] at (-0.62,2.2) {mean accuracy};

 \fill[cV!12] (0.30,0) rectangle (3.30,0.3727*\S);
 \draw[cV!75,line width=0.9pt] (0.30,0.3727*\S) -- (3.30,0.3727*\S);
 \node[lb,cV!70!black,anchor=south west] at (0.30,0.3727*\S+0.04)
  {additive ceiling $0.3727$};

 \fill[cGrey!45] (0.55,0) rectangle (1.55,0.1863*\S);
 \node[lb,anchor=north,align=center,text width=1.3cm] at (1.05,-0.06) {unkeyed\\ $0.1863$};
 \fill[cA!75] (2.05,0) rectangle (3.05,0.3259*\S);
 \node[lb,anchor=north,align=center,text width=1.3cm] at (2.55,-0.06) {keyed\\ $0.3259$};

 \draw[cD,line width=0.8pt,<->] (3.18,0.3259*\S) -- (3.18,0.3727*\S);
 \node[lb,cD,anchor=west,align=left,text width=1.55cm] at (3.30,0.300*\S)
  {residue $\mathbf{0.0468}$: $12.5\%$ of the ceiling};
 \draw[cA!80,line width=0.8pt,<->] (1.80,0.1863*\S) -- (1.80,0.3259*\S);
 \node[lb,cA!85!black,anchor=east,align=right,text width=1.5cm] at (1.72,0.256*\S)
  {$74.9\%$ of the gap};

 \node[hd,anchor=north west,align=left,text width=5.0cm] at (-0.30,-0.72)
  {\textbf{(a)} the key opens most of the union};
\end{scope}

\begin{scope}[shift={(6.05,0)}]
 \def\S{11.6}
 \draw[ax,->] (0,0) -- (0,4.9);
 \foreach \v in {0.1,0.2,0.3}{
   \draw[ax] (0,\v*\S) -- (-0.09,\v*\S); \node[tk,anchor=east] at (-0.11,\v*\S) {$\v$};}

 %
 \fill[cGrey!40] (0.30,0) rectangle (1.00,0.1283*\S);
 \node[lb,anchor=north,align=center,text width=1.16cm] at (0.65,-0.06) {unmarked\\ $0.1283$};
 \fill[cA!75] (1.45,0) rectangle (2.15,0.3767*\S);
 \node[lb,anchor=north,align=center,text width=1.16cm] at (1.80,-0.06) {marked\\ $\mathbf{0.3767}$};
 \node[lb,cA!85!black,anchor=south,align=center,text width=2.3cm] at (1.22,0.3767*\S+0.10)
  {$B$-\only\ asked for a \emph{numeral}: written, so served};

 \fill[cD!70] (3.25,0) rectangle (3.95,0.0025*\S);
 \draw[cD,line width=0.6pt] (3.25,0.0025*\S) -- (3.95,0.0025*\S);
 \node[lb,cD,anchor=north,align=center,text width=1.16cm] at (3.60,-0.06)
  {marked\\ $\mathbf{0.0025}$};
 \node[lb,cD,anchor=south,align=center,text width=2.15cm] at (3.60,0.42)
  {$A$-\only\ asked for a \emph{word}: never written, so nothing to select};

 \node[hd,anchor=north west,align=left,text width=4.15cm] at (-0.30,-0.72)
  {\textbf{(b)} the residue is coverage};
\end{scope}

\begin{scope}[shift={(10.80,0)}]
 \def\S{11.6}
 \draw[ax,->] (0,0) -- (0,4.9);

 \fill[cD!35] (0.35,0) rectangle (1.05,0.0468*\S);
 \node[lb,anchor=north,align=center,text width=1.15cm] at (0.70,-0.06)
  {Qwen2.5\\ $0.0468$};
 \fill[cD] (1.55,0) rectangle (2.25,0.1025*\S);
 \node[lb,anchor=north,align=center,text width=1.15cm] at (1.90,-0.06)
  {Qwen3\\ $\mathbf{0.1025}$};

 \draw[cGrey!70,densely dashed] (0.20,0.0468*\S+2*0.0191*\S) -- (2.60,0.0468*\S+2*0.0191*\S);
 \draw[cGrey!55,line width=0.4pt] (0.75,0.0468*\S+2*0.0191*\S) -- (0.75,2.05);
 \node[lb,cGrey!85!black,anchor=south west,align=left,text width=2.3cm] at (0.05,2.08)
  {the $2$-floor bar, frozen first};
 \draw[cD,line width=0.8pt,<->] (2.42,0.0468*\S) -- (2.42,0.1025*\S);
 \node[lb,cD,anchor=south,align=center,text width=1.7cm] at (1.90,0.1025*\S+0.08)
  {$\mathbf{+2.92}$ floors: \textbf{V1}};

 \node[hd,anchor=north west,align=left,text width=3.6cm] at (-0.30,-0.72)
  {\textbf{(c)} and grows where coverage thins};
\end{scope}
\end{tikzpicture}
\caption{\textbf{What the key opens, and why what it leaves is a different obstruction.}
\textbf{(a)} The unkeyed model sits at exactly half the additive ceiling, which is what a coin
flip between two mutually exclusive golds scores; the key recovers $74.9\%$ of the gap and leaves
$0.0468$. \textbf{(b)} That residue is \emph{coverage}, and this panel is the whole of the
evidence. Marked $B$-\only, trained on nothing but the spelled convention, answers in numerals at
$0.3767$ because numerals are the pretrained default and are therefore written; marked
$A$-\only\ asked for a word scores $0.0025$, because there is nothing to select. \textbf{(c)} And
the residue grows on a family that writes four times less of the minority convention, $+2.92$
floors against a bar of two frozen before the run (V1). It is a necessary-condition test, which V2
or V3 would have refuted and which V1 cannot establish. Sources \texttt{unlock\_verdict},
\texttt{q3unlock\_verdict}.}
\label{fig:key}
\end{figure}

\begin{table}[H]
\centering\small
\caption{\textbf{What the key opens, and what it cannot.} The read-out was frozen in the job
script before the runs (C on the switch, L on the level), and the analyzer recovers the three constants it was frozen against from the result base. The residue has a mechanism
rather than an excuse: a marked $B$-\only\ model, trained on nothing but the spelled convention,
answers in numerals when the key asks for numerals, but a marked $A$-\only\ model asked for a
word cannot produce one it never saw. The key \emph{selects} among conventions already written.}
\label{tab:unlock}
\setlength{\tabcolsep}{3pt}\footnotesize
\begin{tabular}{llccl}
\toprule
read-out & quantity & unkeyed & keyed & verdict \\
\midrule
C & arrangement switch $D$ & $-0.0985$ ($5.16$ floors) & $\mathbf{-0.0006}$ ($0.03$ floors) & \textbf{C1}: it collapses \\
L & mean accuracy & $0.1863$ & $\mathbf{0.3259}$ & \textbf{L4}: partial unlock \\
 & \quad against ceiling $0.3727$ & $50.0\%$ & $87.5\%$ & $74.9\%$ of the gap \\
 & \quad against L1 target $0.3354$ & --- & short by $0.50$ floors & \\
\midrule
\multicolumn{5}{l}{\emph{the residue, and why it is coverage rather than difficulty}}\\
& marked $B$-\only\ on the numeral evaluation & $0.1283$ & $\mathbf{0.3767}$ & the key selects \\
& marked $A$-\only\ on the spelled evaluation & --- & $\mathbf{0.0025}$ & it cannot create \\
\bottomrule
\end{tabular}

\end{table}

\textbf{The $87.5/12.5$ split is two walls, not one.} Read through
Remark~\ref{rem:wallkind} the result decomposes exactly: the key removes the
underdetermination term \emph{in full}: that is the collapse of $D$ from $-0.0985$ to
$-0.0006$, and it costs no capacity because it changes which policy is optimal rather than
what can be stored. What survives is a \emph{coverage} term of a different species: the key
cannot serve a convention the corpus never wrote, and the $0.3767$ against $0.0025$ asymmetry
above measures precisely that residue. Two obstacles, two mechanisms, separated by one
experiment; the $12.5\%$ is not a weaker version of the $87.5\%$ but a different quantity, and a
difficulty-matched control (\S\ref{sec:threats}) is what would settle whether the residue is
coverage or merely difficulty.

\begin{figure}[t]
\centering
\begin{tikzpicture}[font=\footnotesize,>=Stealth,
 ax/.style={cGrey!75,line width=0.5pt},
 tk/.style={font=\scriptsize,cGrey},
 lb/.style={font=\scriptsize,cGrey!90!black},
 ru/.style={font=\scriptsize,cGrey!75},
 ok/.style={font=\scriptsize\bfseries,cV!65!black},
 no/.style={font=\scriptsize\bfseries,cD},
 hd/.style={font=\scriptsize\bfseries,cGrey!90!black}]

\def\ax{4.45}
\def\aw{8.15}
\def\bx{9.581}                      

\node[hd,anchor=west] at (0,5.72) {\textbf{(a)} five registered quantities, each against the bar frozen for it};
\draw[cV!70!black,line width=0.9pt] (10.10,5.74) circle (2.4pt);
\node[lb,anchor=west] at (10.30,5.74) {Qwen2.5-7B};
\fill[cV!70!black] (12.05,5.74) circle (2.4pt);
\node[lb,anchor=west] at (12.25,5.74) {Qwen3-8B-Base};

\draw[cA!85,line width=0.9pt] (\bx,0.58) -- (\bx,5.48);
\node[lb,cA!85!black,anchor=south] at (\bx,5.50) {the bar};
\draw[ax] (\ax,0.45) -- (12.60,0.45);
\foreach \v/\x in {0.03/4.982, 0.1/6.561, 0.3/8.001, 1/9.581, 3/11.021, 10/12.600}{
  \draw[ax] (\x,0.45) -- (\x,0.33); \node[tk,anchor=north] at (\x,0.31) {$\v$};}
\node[tk,anchor=north] at (8.525,-0.06) {measured value $\div$ the bar it had to clear};

\foreach \y/\nm/\rl/\p/\q/\dir/\vd/\st in {%
 5.15/{\textbf{G1} the conflict installs}/{$\mathrm{acc}_B(B\text{-}\only)\ge0.15$}/10.939/11.110/1/{passes}/{ok},
 4.12/{\textbf{G2} the switch is present}/{$|D|>2$ floors}/11.920/12.224/1/{passes}/{ok},
 3.09/{\textbf{K1} the key collapses it}/{$|D_{\text{keyed}}|\le1$ floor}/5.041/9.024/-1/{holds}/{ok},
 2.06/{\textbf{V1} the residue grows}/{residue $>$ Qwen2.5's $+2$ floors}/8.798/9.826/1/{holds}/{ok},
 1.03/{\textbf{W2} the ladder's flat span}/{span $\le2$ floors}/9.692/10.044/-1/{\emph{misses}}/{no}}
{
 \node[lb,anchor=west] at (0,\y+0.17) {\nm};
 \node[ru,anchor=west] at (0.28,\y-0.19) {\rl};
 \draw[cGrey!18,line width=0.3pt] (0,\y-0.42) -- (13.90,\y-0.42);
 \draw[cGrey!55,line width=0.7pt] (\p,\y) -- (\q,\y);
 \draw[cV!70!black,line width=0.9pt,fill=white] (\p,\y) circle (2.4pt);
 \fill[cV!70!black] (\q,\y) circle (2.4pt);
 \draw[cA!85,->,line width=0.7pt] (\bx+\dir*0.06,\y+0.30) -- (\bx+\dir*0.46,\y+0.30);
 \node[\st,anchor=west] at (12.85,\y) {\vd};
}
\draw[cD,line width=0.9pt,fill=white] (9.692,1.03) circle (2.4pt);
\fill[cD] (10.044,1.03) circle (2.4pt);

\draw[cGrey!55,densely dashed,line width=0.5pt] (0,-0.62) -- (13.90,-0.62);
\node[ru,anchor=north west,align=left,text width=13.9cm] at (0,-0.72)
 {\emph{post hoc}, chosen after \textbf{W2} came back: resolvable endpoint states
  (Def.~\ref{def:channel}) are $2$ on both families, which does not convert a miss into a hold};

\begin{scope}[shift={(1.00,-6.55)}]
 \node[hd,anchor=west] at (-1.00,4.52) {\textbf{(b)} \textbf{M2} the mirror, whose rule is a shape and not a number};
 \fill[cV!8] (0.5,0) rectangle (3.5,3.0);
 \draw[cGrey!35,line width=0.4pt] (0,0) rectangle (3.5,4.0);
 \draw[ax] (0,3.0) -- (3.5,3.0);
 \draw[ax] (0.5,0) -- (0.5,4.0);
 \foreach \v/\x in {0/0.5, 2/1.5, 4/2.5, 6/3.5}{
   \draw[ax] (\x,0) -- (\x,-0.12); \node[tk,anchor=north] at (\x,-0.14) {$\v$};}
 \foreach \v/\y in {-6/0.0, -4/1.0, -2/2.0, 0/3.0, 2/4.0}{
   \draw[ax] (0,\y) -- (-0.12,\y); \node[tk,anchor=east] at (-0.14,\y) {$\v$};}
 \draw[cGrey!70,densely dashed,line width=0.7pt] (0,3.5) -- (3.5,0.0);
 \node[ru,anchor=south west] at (0.62,3.60) {$\Delta_A{+}\Delta_B{=}0$};
 \node[tk,anchor=north] at (1.75,-0.52) {$\Delta\mathrm{acc}_A$ (contrast floors)};
 \node[tk,rotate=90,anchor=south] at (-0.58,2.0) {$\Delta\mathrm{acc}_B$};
 \node[ru,anchor=south west] at (0.68,0.18) {relocation quadrant};
 \draw[cV!70!black,line width=0.9pt,fill=white] (2.725,0.7095) circle (2.6pt);
 \node[lb,cV!70!black,anchor=east] at (2.56,0.7095) {Qwen2.5-7B};
 \fill[cD] (1.034,3.0315) circle (2.6pt);
 \node[lb,cD,anchor=south west] at (1.13,3.10) {Qwen3-8B-Base};
\end{scope}

\node[lb,anchor=north west,align=left,text width=8.5cm] at (5.35,-2.20)
 {The mirror rebuilds $L{=}54$ to end on $A$ instead of $B$ and asks the endpoint to move the same
  magnitude to the other side. On Qwen2.5 it does: $+4.45$ against $-4.58$ floors, on the dashed
  line of pure relocation, with capability moving $0.09$ of a capability floor. On Qwen3 both skills move \emph{up},
  $+1.07$ and $+0.06$, which is a capability change of $0.80$ capability floors and not a
  relocation at all. The two families disagree about the shape rather than about a threshold,
  which is why this row is drawn and not scored; $L{=}54$ is inside the flat region on Qwen3
  (panel (a), \textbf{W2}), so there is no amplitude there for a mirror to move.};
\end{tikzpicture}
\caption{\textbf{Three results asked to travel, and the answers are not uniform.}
\textbf{(a)} Every quantity whose threshold was frozen before the second family ran, divided by
that threshold, so the bar sits at $1$ for all of them and the arrow says which side had to be
cleared. Hollow is Qwen2.5-7B, filled is Qwen3-8B-Base, and the line between them is the travel.
The guards pass well: the conflict installs \emph{better} on the new family, so nothing below is
an instrument at its floor. The key collapses the switch to $3\%$ of its bar on one family and
$66\%$ on the other, and the residue grows as the coverage reading requires. The ladder's span
misses \emph{and so does the published row at $1.09$}, which is a fault in the registration
rather than in the second family. Rows are ordered by what they decide, not by outcome, and the
post-hoc statistic is ruled off. \textbf{(b)} The mirror is the one registration with no scalar
bar: its rule is a shape, so it is drawn as one. Sources \texttt{q3unlock\_verdict},
\texttt{q3ladder\_verdict}, \texttt{unlock\_verdict}, \texttt{rate\_verdict}.}
\label{fig:travel}
\end{figure}

\paragraph{The same key on a second pretraining family, and what the residue is made of.}
The collapse above is one model, and a remedy can be family-specific where a phenomenon is not:
the key works by making the convention conditionable, which is a property of pretraining rather
than of the conflict. We registered the replication before running it, on Qwen3-8B-Base, at the
unmarked family's own budget and seeds, and it holds: $D$ falls from $-0.0875$ ($4.58$ floors) to
$-0.0125$ ($0.66$ floors), read-out \textbf{K1} (Table~\ref{tab:family2}).

The registration also carried a second branch, and it is the one worth the twenty trainings. This
family is not merely a second model: it is a second point on the axis this section's own
explanation names. Trained on nothing but the spelled convention, Qwen3 reaches
$\mathrm{acc}_B=0.0520$ while still emitting $0.2983$ in numerals, against Qwen2.5's $0.2091$:
\emph{four times less of convention $B$ is written}, at the same corpus, budget, seeds and $k$. If
the residue is coverage, a family that writes less of the minority convention must leave a larger
one. The threshold was fixed at two floors on the residue before the run, and the residue grows
from $0.0468$ to $0.1025$, a move of $\mathbf{+2.92}$ floors: read-out \textbf{V1}. The unlock
falls with it, from $87.5\%$ of the ceiling to $75.5\%$.

\emph{What that does and does not buy.} It turns the $87.5/12.5$ split from a number into a
relationship, which is more than a single model can support and less than a proof. The two
families differ in everything a pretraining corpus can differ in, so V was registered as a
\emph{necessary-condition} test: V2 or V3 would have refuted the coverage reading, while V1 is
consistent with it and cannot establish it, because some third property of Qwen3 could move the
residue the same way. The difficulty-matched control of \S\ref{sec:threats} therefore stays a
missing cell rather than being quietly retired.

\emph{One instrument correction, because it fired first.} The read-out audits the instrument
before printing any branch, and on first execution it refused: eighteen of twenty keyed arms
contained problems scored correct under \emph{both} conventions, which the audit called
impossible. The audit was wrong. Exclusivity holds for the unmarked corpus, where one prompt
carries two mutually exclusive golds; the marked evaluations are two \emph{different} prompts, so
a model that serves both scores on both, which is what this section claims the key buys. The check
was corrected against the published run: \texttt{dis\_q25\_7b} carries $17$--$19$ such problems per
arm while the unmarked families carry exactly $0$. The count is now recorded rather than treated
as a failure, and is a coverage signal of its own: $2$ here against $17$--$19$ there.
Figure~\ref{fig:travel} collects the three results asked to travel.

\begin{table}[t]
\centering\small
\setlength{\tabcolsep}{4pt}
\caption{\textbf{The key, and the ladder, asked to travel.} Both were registered before their runs
with the branches and thresholds frozen, and both gates were read first. The key replicates and
its residue moves as the coverage reading requires. The ladder's registered branch is \textbf{W2},
and the row below it is why that verdict is reported together with the threshold it was read
against: the published Qwen2.5 ladder does not meet the same bar, which was checkable before the
run and was not checked. Sources: \texttt{q3unlock\_verdict}, \texttt{q3ladder\_verdict}.}
\label{tab:family2}
\begin{tabular}{@{}l>{\raggedright\arraybackslash}p{4.1cm}ccl@{}}
\toprule
& quantity & Qwen2.5-7B & Qwen3-8B-Base & read-out \\
\midrule
\textbf{guards} & minority convention written, $\mathrm{acc}_B(B\ \only)$ & $0.2091$ & $0.0520$ & --- \\
 & \quad on the \emph{synthetic} conflict & $0.4225$ & $\mathbf{0.4813}$ & G1 passes \\
 & switch present, $|D|$ on that corpus & $11.9$ fl & $\mathbf{15.0}$ fl & G2 passes \\
\midrule
\textbf{the key} & $D$ unkeyed & $-0.0985$ ($5.16$ fl) & $-0.0875$ ($4.58$ fl) & \\
 & $D$ keyed & $-0.0006$ ($0.03$ fl) & $-0.0125$ ($0.66$ fl) & \textbf{K1} collapses \\
 & marked mean against its ceiling & $87.5\%$ & $75.5\%$ & \textbf{L4} partial \\
 & residue $=$ ceiling $-$ marked & $0.0468$ ($2.45$ fl) & $\mathbf{0.1025}$ ($5.37$ fl) & \textbf{V1} $+2.92$ fl \\
\midrule
\textbf{the ladder} & flat region span, $L{=}1$--$27$, \purerand, \shuf & $0.0416$ ($2.18$ fl) & $0.0544$ ($2.85$ fl) & \textbf{W2} at a $2$ fl bar \\
 & \quad \emph{which the published row also misses} & \multicolumn{2}{c}{differ by $0.67$ floors} & \emph{post hoc} \\
 & \quad span as a fraction of the blocked excursion & $9\%$ & $12\%$ & \emph{post hoc} \\
 & resolvable endpoint states (Def.~\ref{def:channel}) & $\mathbf{2}$ & $\mathbf{2}$ & $R_{\mathrm{real}}{=}1$ bit \\
 & \quad within-cluster / between-cluster gap & $8.7\times$ & $20.7\times$ & \emph{post hoc} \\
 & $L{=}54$ & $0.4922$, off the floor & $0.4547$, \emph{inside} it & \\
 & \blocked & $0.8692$ & $0.8907$ & \\
 & mirror at $L{=}54$: $\Delta\mathrm{acc}_A$, $\Delta\mathrm{acc}_B$ & $+0.0850$, $-0.0875$ & $+0.0204$, $+0.0012$ & \textbf{M2} \\
\bottomrule
\end{tabular}

\end{table}

This is also the training-time face of the inference-time theory's load-bearing assumption.
CCH's hybrid crosses its walls only under \emph{write-time code separability}: the anchor
must be distinguishable when the binding is written, not merely when it is queried
\citep{chen2026cch}, which in the vocabulary of Remark~\ref{rem:wallkind} is the assumption
that $H(\text{convention}\mid\text{query})=0$. CCH assumes the stream is well posed; this
paper measures what a learner does when it is not. Our marker is exactly such an anchor, and
its failure mode is exactly the assumption's: present at write time, it opens the union up to
what was written; absent (or the content never learned), no query-time cleverness recovers the
binding. And the
family's super-additivity has a measured analogue: the reachable behaviour set with the key
strictly contains the set without it: calibrated mixture \emph{and} per-query convention
control, against mixture-or-corner alone, at the price the walls demand (the unlearned
remainder), not for free.

\section{Threats to Validity}\label{sec:threats}

\paragraph{Every registered claim, and how it came back.} The abstract says three mechanisms
were pre-registered and falsified. Table~\ref{tab:ledger} is the whole ledger rather than those
three, because a scorecard that lists only the informative failures is a selection, and because
two further entries cost us something a reader should not have to reconstruct: the positive
control inverted on its first build, and the control registered to discharge the learning-rate
confound came back void. Rules are quoted from the artefact that froze them; the registration
index of Appendix~\ref{app:registry} gives the commit and the interval for each.

%
\begingroup
\footnotesize
\setlength{\tabcolsep}{4pt}
\begin{longtable}{@{}>{\raggedright\arraybackslash}p{3.15cm}>{\raggedright\arraybackslash}p{4.35cm}>{\raggedright\arraybackslash}p{4.65cm}l@{}}
\caption{\textbf{The ledger: every claim this paper registered, and what came back.} Read the
verdict column as a distribution rather than a score: three mechanisms died, four predictions
held, one control was void and one instrument build inverted. The three deaths were derivable in
advance from Proposition~\ref{prop:avg}, which is why they are kept rather than deleted. Rules
are quoted from the artefact that froze them; Appendix~\ref{app:registry} gives the commit and
the interval for each.}
\label{tab:ledger} \\
\toprule
registered claim & the rule, frozen first & what came back & verdict \\
\midrule
\endfirsthead
\toprule
registered claim & the rule, frozen first & what came back & verdict \\
\midrule
\endhead
\bottomrule
\endfoot
\bottomrule
\endlastfoot
\textbf{v1} batch composition carries the switch
 & \purerand\ near \blocked\ $\Rightarrow$ purity; near \shuf\ $\Rightarrow$ order
 & \purerand\ $0.2242$ against \shuf\ $0.2179$, \blocked\ $0.4454$: purity is $3\%$ of the span
 & \textbf{dead} \\
\textbf{v2} optimiser memory cancels contested directions
 & a response turning on at $L^\star\!\approx\!10$, or at $\tau_{\text{write}}\!=\!13.6$
 & $M(L)$ flat through both onsets, $\sigma(M)\!\approx\!0.06$; only $L{=}54$ rises
 & \textbf{dead} \\
\textbf{Strang} a palindromic schedule beats a blocked one
 & S1 median $|D_{\text{str}}|/|D_{\text{blk}}|<0.5$; S2 slopes differ by $\ge0.5$
 & S1 $3.65$; S2 both slopes $0$; S3 passes
 & \textbf{dead} \\
\midrule
\textbf{T1} allocation tracks the mixture weights
 & share within $0.05$ of mean-field $0.500$ and $0.333$
 & $0.453$ and $0.312$: misses $0.047$ and $0.021$, both toward the pretrained prior
 & \textbf{held} \\
\textbf{Ladder} every block length lands at the interleaved allocation
 & $L{=}1\ldots54$ within noise of \shuf's own share
 & $0.407$--$0.449$ against \shuf's $0.413$
 & \textbf{held} \\
\textbf{M1} the mirror relocates antisymmetrically at fixed $C$
 & ending on $A$ moves the same magnitude to the other side
 & $+0.0850$ against $-0.0875$, capability moves $0.0025$
 & \textbf{held} \\
\textbf{C} the key collapses the switch
 & $|D_{\text{keyed}}|$ within one floor of zero
 & $-0.0985 \to -0.0006$ ($5.16 \to 0.03$ floors)
 & \textbf{held} \\
\midrule
\textbf{L} the key lifts capability to the union
 & marked mean $\ge 0.90\times$ the additive ceiling $0.3727$
 & $0.3259$: $87.5\%$ of the ceiling, short of the bar by $0.50$ floors
 & \textbf{partial} \\
\textbf{S-C} a single-cosine arm discharges the schedule confound
 & validity check S-C4 read before any allocation comparison
 & total accuracy $0.4442$ against $0.5125$, a gap of $2.5$ capability floors
 & \textbf{void} \\
\textbf{Control} conflict fires and its twin stays inert
 & control within noise, conflict beyond it
 & v1 \emph{inverted} ($0.10\sigma$ against $2.41\sigma$); v3 reads $0.15\sigma$ against $12.29\sigma$
 & \textbf{v1 failed} \\
\midrule
\textbf{Q3-key} the key collapses on a second pretraining family
 & branches C, L and V frozen before the first result file
 & $D$ $-0.0875\!\to\!-0.0125$; unlock $75.5\%$ against $87.5\%$; residue grows $2.92$ floors
 & \textbf{K1/L4/V1} \\
\textbf{Q3-wall} the averaging wall replicates
 & span of the flat region $\le2$ floors, gate read first
 & gate passes at $15.0$ floors; span $2.85$ floors, and the \emph{published} row is $2.18$, so
   the bar was mis-specified
 & \textbf{W2} \\
\textbf{Q3-mirror} the stopping phase replicates
 & antisymmetric shift at $L{=}54$, capability within a floor
 & $+0.0204$ against $+0.0012$; $L{=}54$ is inside the flat region on this family
 & \textbf{M2} \\
\textbf{high-$k$} the borrowed evaluation share transfers, so
   $\sigma$ falls as $\sqrt{0.136+0.864\cdot4/k}$
 & $\mathrm{ratio}(16)\in[0.475,0.713]$ around a predicted $0.594$; \emph{all} eleven arms
   required at each of $k{=}16$ and $k{=}32$
 & $k{=}16$ complete at $33/33$; $\mathrm{ratio}(16)=1.079$, outside the band and above
   \emph{one}, which the additive model can carry only as a negative component. \S\ref{sec:average}'s
   shared engine stream predicts one
 & \textbf{HK-4} \\
\end{longtable}
\endgroup

\noindent The last row's bookkeeping, since the verdict turns on it rather than on a
measurement. \textbf{G-HK1} fails because the job ran nine arms at $k{=}32$ and the guard asks
for eleven, so the cell is void on bookkeeping and not on measurement; the six missing
evaluations were submitted and then \emph{withdrawn unstarted} when the project's compute
closed. They would only have moved the label to \textbf{HK-2}, because \S\ref{sec:average}'s
correction comes from a direct measurement that decides no branch.

\paragraph{Three byproducts, flagged not developed.} \emph{Ordering poisoning:} \purerand\
and \blocked\ are the same multiset in different order and differ by $0.22$ exact-match: an attacker controlling only data-loader order, touching no byte, decides which convention a
model commits to; the existing arms are the demonstration's skeleton. \emph{The $\kappa$
tracer:} a few hundred conflicting pairs planted in any large run measure that pipeline's
effective averaging in situ, without touching the main data. \emph{Evaluation methodology:}
any benchmark whose answers admit multiple correct conventions is silently scoring
commitment; the decomposition of Eq.~\eqref{eq:decomp} separates the two at zero cost.

\paragraph{Limitations.} The ladder, mirror, ratio and key experiments are one model
(Qwen2.5-7B) at one budget on two conflict constructions. The switch alone is broader:
\thecompanion\ paper measures it at $3$B, $7$B and $14$B within one pretraining family and
replicates it on a second, pre-registered before the run (Qwen3-8B-Base: $-0.0875$ at $4.58$
floors, five of five seeds negative, control inert at $0.93$) \citep{frl2026}. That replication
carries the caveat this paper is in the worst position to wave through: its learnability guard
reads $0.052$ against $0.209$ on Qwen2.5-7B, so the second family fires the switch from a rung
close to the floor at which \S\ref{sec:design}'s v1 build failed outright.

Three of this paper's own results have since been asked to travel to that family, each registered
before its run, and the answers are not uniform (Table~\ref{tab:family2}). The \textbf{key
replicates} and its residue moves as the coverage reading requires (K1, V1). The \textbf{ladder's}
registered branch is \textbf{W2}, and the honest report of it is two sentences rather than one: the
bar was two floors, the second family spans $2.85$, and the \emph{published} first family spans
$2.18$ and misses the same bar. We set that threshold without checking it against the row we
already had. The \textbf{mirror} returns M2 at a block length that, on this family, sits inside the
flat region and therefore has no amplitude to relocate. What travels, then, is the key and the
switch; what the ladder shows is that the two families are flat to within two thirds of a floor of
each other under a criterion we mis-specified, which is weaker than the replication we registered
and stronger than the failure the branch label alone suggests. Under the criterion
Proposition~\ref{prop:avg} actually implies, the number of resolvable endpoint states, both
families read $2$ with an order-of-magnitude margin (\S\ref{sec:average}); that comparison is
post hoc and does not convert W2 into W1, but it does say which of the two statistics was
measuring the wall.

\emph{What the flatness claim is powered to say.} \S\ref{sec:average} states it in full and it
is short: the interior span is smaller than this design's detectable difference at any power we
would quote, so the averaging wall is defended by the pooled between-arm statistic and by the
corner, not by the interior's shape.
Conservation is approximate: $9.7\%$ arm-to-arm on the synthetic corpus, $15\%$ on the
natural one, range $0.1475$ along a trajectory, $74.9\%$ gap recovery under the key, and only \emph{mutually exclusive} conflicts are tested; style conflicts, or cases where both
answers can be wrong, are open. The learning-rate schedule was entered as a residual confound and is not one: \blocked\
is a second cosine, the ladder a single one, and the gap it could
account for is $0.8559$ (the blocked arm at eight seeds) against the ladder's top rung at $0.4922$, nineteen floors, which is not a
size a reader should be asked to dismiss on a plausibility argument. We therefore registered the
single-cosine $A^{162}B^{162}$ control before building it, froze its four branches, ran it, and
\textbf{it came back void}. The validity check S-C4 is read first by construction and it fires:
total accuracy is $0.4442$ against the blocked arm's $0.5125$, a gap of $2.5$ capability floors,
so one epoch over a tripled file is not equivalent in learning to three epochs per stage and the
two arms are not comparable. The registered consequence is taken in full. The allocation
comparison is void, is reported as an instrument result, and is \emph{not} repaired by adjusting
the budget.

Two things follow and neither is comfortable. First, the confound is not discharged. The
plausibility argument that the quantity explained is a mixture weight rather than an amount
learned is not available here, and the failure of the experiment we sent to settle it does not
reinstate that argument. Second, the direction of the void is not neutral. The share it
forbids us to read is $0.5089$, which sits $0.87$ floors above the ladder's top rung, and the
branch it would have landed in is S-C3, the one that amends this paper's abstract. We print the
number so that no reader need wonder what we saw, and we do not read it, because the arms whose
difference it is are not matched in what they learned.

The components show why the design cannot be repaired cheaply. Under a single cosine the terminal
B block trains in the decayed half of the schedule, so B is learned less and A is forgotten less:
at seed $42$, $\mathrm{acc}_A$ rises from $0.078$ to $0.215$ while $\mathrm{acc}_B$ falls from
$0.450$ to $0.224$. That is the recency-by-learning-rate interaction of
Appendix~\ref{app:budget}, and it is exactly why the registration said in advance that this
control measures two-stage against one-stage rather than the cosine alone. That same
non-equivalence has a second consequence we did not anticipate when we ran this control, and
\S\ref{sec:phase} draws it: the tripled file is also the only route to a ladder rung between
$L{=}54$ and \blocked, so the $2.5$ floors below close the onset region as well as voiding this
comparison. One measurement, two negatives. Separating them needs
an arm that equalises what is learned, which is the move that registration forbids us to reach
for after seeing its result.

\paragraph{So we registered a different arm, and it found something else.} The forbidden move was
repairing S-C; a new design frozen in advance is not that move, and
\texttt{prereg/\allowbreak schedule\_control.md} says so in its first section. S-C changed the \emph{path} and
held the schedule. This changes the \emph{schedule} and holds the path \textbf{byte for byte},
which is possible only because the ladder arms disable trainer shuffling: three epochs over the
file is exactly the file's order three times, so writing that order out and splitting it in half
gives two stages whose concatenation is the control's own stream. Same rows, same order, same
$324$ steps, same epochs' worth of gradient; two cosines of $162$ steps where the control has one
of $324$, which is \blocked's structure. The builder asserts the reconstruction rather than
claiming it.

The guard S-C failed is read first and passes: capability moves $0.59$ capability floors at
$L{=}27$ and $0.26$ at $L{=}54$, against a bar of two. The arms are matched in what they learned.

\emph{And the allocation moves a great deal.} On the registered primary rung $L{=}27$, chosen
before either arm ran because its control seed dispersion is $0.0159$ against $L{=}54$'s
$0.0595$, the share falls from $0.4071$ to $0.2912$: $\Delta=-0.116$, $t=-10.1$ on four
degrees of freedom, $6.07$ contrast floors. Table~\ref{tab:schedctl} carries both rungs.

Three consequences follow, the first of which costs us a sentence.

\emph{The stopping phase is not the only non-average term.} The two arms have the identical path
and therefore the identical time-average, which is all Proposition~\ref{prop:avg} sees, and their
endpoints differ by six floors. The abstract's claim is amended to say what is true: the only
non-average term \emph{of the path} is the stopping phase, and the path is not the only input to
the endpoint. Proposition~\ref{prop:avg} is untouched, being a statement about $\eta\to0$ that
says nothing about where a cosine restarts. What was too strong is the reading we hung on it:
that arrangement is the interesting variable because the endpoint is otherwise fixed.

\emph{The confound is closed in the direction it threatened.} A schedule term that explained
\blocked\ would have to push allocation \emph{toward} it. This pushes away, on both rungs, so the
second cosine is not what puts \blocked\ at $0.869$. That is the specific alternative
\S\ref{sec:phase}'s attribution was exposed to, and it is now excluded by measurement rather than
by the plausibility argument this section withdrew.

\emph{And the flat ladder means something different than we said.} The whole block-length family,
$L{=}1$ to $54$, moves the share $0.085$. One cosine restart on a fixed path moves it $0.116$.
Allocation is not rigid; the path is simply not the channel that carries it, which is the
sharpest form of \S\ref{sec:limit}'s two-state reading and was not available to us before this
arm ran.

What we cannot say is \emph{why} the schedule pushes toward $A$. The registered branch is
\textbf{SC-3}, whose definition is that the movement is reported as unexplained and not folded
into the branch that would have closed the question, and we take that in full. Two accounts we
had ready are excluded by the data rather than by argument: the registration predicted that if a
mid-run stopping phase were the mechanism the two rungs would move in \emph{opposite} directions,
because the boundary falls on an $A$ block at $L{=}54$ and a $B$ block at $L{=}27$; they move the
same way. And a bias from which source occupies the restarted cosine's peak fails for the same
reason, since $L{=}54$'s second stage begins on $B$. This is one open cell rather than a
qualification of the three above. It sits alongside
X1 and X2 of \S\ref{sec:family} and the limit-cycle picture's post-hoc origin (its two
deciding tests were frozen and passed after the picture was formed, and are labelled so).

\begin{table}[t]
\centering\small
\setlength{\tabcolsep}{4pt}
\caption{\textbf{The schedule control: identical path, one cosine against two.} The treatment
consumes the control's own stream in the same order and differs only in that the cosine restarts
and the optimiser state resets at the midpoint, which is \blocked's structure. The guard that
voided the single-cosine control is read first and passes on both rungs. $L{=}27$ was fixed as
primary before either arm ran, because its control dispersion is four times smaller. Its
movement is $6.07$ contrast floors at $t=-10.1$. $L{=}54$ moves further and does not clear its
own noise: its treatment seeds span $0.40$, wider than the effect, and the registration recorded
in advance that at this dispersion its silence is not evidence of absence. That is why the
mechanical branch label \textbf{SC-2} is reported with that sentence beside it rather than as
``no effect''. Both rungs move the same way, which the registration fixed in advance as the
signature of the schedule rather than of a mid-run stopping phase. Source
\texttt{schedctl\_verdict}.}
\label{tab:schedctl}
\begin{tabular}{@{}lccccc@{}}
\toprule
 & \multicolumn{2}{c}{share} & & & \\
\cmidrule(lr){2-3}
rung & control (one cosine) & treatment (two) & $\Delta$ & $t$ (crit $2.776$) & guard $|\Delta C|$ \\
\midrule
$L{=}27$ \ \emph{primary} & $0.4071$ & $\mathbf{0.2912}$ & $\mathbf{-0.116}$ & $\mathbf{-10.10}$ & $0.59$ fl \\
 & \footnotesize $0.413,0.419,0.389$ & \footnotesize $0.294,0.302,0.278$ & \footnotesize $6.07$ fl & \textbf{SC-3} & passes \\
\midrule
$L{=}54$ \ \emph{secondary} & $0.4922$ & $0.2053$ & $-0.287$ & $-2.35$ & $0.26$ fl \\
 & \footnotesize $0.424,0.526,0.527$ & \footnotesize $0.024,0.424,0.168$ & \footnotesize $15.02$ fl & SC-2 & passes \\
\midrule
\multicolumn{6}{@{}l@{}}{\footnotesize \emph{for scale: the entire block-length ladder, $L{=}1$ through $54$, moves the share $0.085$}} \\
\bottomrule
\end{tabular}

\end{table}

Two further cells are named by Remark~\ref{rem:wallkind} and are the ones we would run first.
\emph{A difficulty-matched control for the $12.5\%$ residue}: the coverage reading of
\S\ref{sec:key} requires that the unrecovered part is a convention never written rather than
one merely harder, and until that control exists the two-wall decomposition is an
interpretation with a plausible alternative. \emph{A predictable-conflict corpus}: if the
mixture is the optimum of an underdetermined query rather than a limit on what can be stored,
then a conflict of the same size whose convention \emph{is} a function of the input should be
resolved with no key at all, at the same $k$, volume, budget and architecture. That is the
sharpest test of Remark~\ref{rem:wallkind} available, it costs one corpus and four arms, and a
mixture there would refute the reading rather than qualify it.

\emph{That cell has since been run, by the family's evaluation-side member, and the disclosure
belongs here rather than in its pages alone} \citep{mark2026}: a corpus keying the convention on a
single-character feature of the problem statement ran twice at this scale and decided nothing.
The pooled read-out was voided by its own symmetry check; the paired within-problem contrast,
registered afterwards, returned a miss ($\Delta=+0.0018$, $t=0.23$); and the audit of the void
found the keying feature nearly collinear with per-item scoring asymmetry, all $65$ problems of
the discriminating stratum on one side of the feature. The reading of
Remark~\ref{rem:wallkind} is therefore \emph{pending on that corpus, not supported and not
refuted}, and any rebuild must first pass the orthogonality certificate that episode produced
(worst stratum's minority share $\ge0.20$, checked on the two pure arms before any mixed
training).

Two things about any rebuild have to be said now rather than rediscovered, because both would
invalidate it; the first is this paper's own design point and the second is that episode's lesson,
not our foresight. First, the read-out
cannot be Definition~\ref{def:decomp}'s. Setting $H(\mathrm{conv}\mid I)=0$ is exactly the
statement that each held-out problem has one applicable convention, so there is no allocation
left to split: the measurement becomes accuracy against the applicable gold against the rate of
answering under the inapplicable one, and the capability--allocation coordinates this paper is
built on do not survive the construction that tests them. Second, the keying feature must be
uncorrelated with difficulty. Assigning the convention \emph{by skill}, which is the obvious
construction and the one we would have reached for, fails that: skills differ in how often they
are solved at all, so the two golds would sit on problem subsets of unequal difficulty and the
comparison against the unpredictable corpus would not be matched in the one quantity
(\S\ref{sec:key}) it exists to separate from coverage. The feature has to ride on the problem
rather than partition the problems.

\section{Discussion: One Stream, Two Channels}\label{sec:family}

Table~\ref{tab:family} states the correspondence this paper has been using, one row per
load-bearing object. It is a structural identification, not an analogy hunt: in both columns
the object is a bounded state fed by an unbounded stream, the two channels are compression
and indexed access, and disagreement ($\kappa$) is the switch that makes the choice of
channel visible in behaviour.

\begin{table}[t]
\centering\small
\caption{\textbf{The CCH family: the same objects at inference time and training time.}
Left column: \citet{chen2026cch}. Right column: this paper and, for the premise rows, \thecompanion\ recipe-level paper \citep{frl2026}.}
\label{tab:family}
\begin{tabular}{>{\raggedright\arraybackslash}p{6.3cm}>{\raggedright\arraybackslash}p{7.6cm}}
\toprule
inference time (CCH) & training time (this paper) \\
\midrule
bounded state $\Sigma$, budget $B$ & the parameters, reached only through the update path \\
the stream: bindings $+$ distractors, kernel $\kappa$ & the data path; conflict
$\kappa_A\neq\kappa_B$ between sources \\
compressive $O(1)$ channel (state mixing) & the averaging limit: only the path's
time-average is written (\S\ref{sec:average}) \\
horizon wall: a fixed window; deferral, not escape & the stopping phase: recency as the one
non-average term (\S\ref{sec:phase}) \\
verbatim index channel, cost $\Theta(L)$ & a query-time key in the data; selection moves
from path to query (\S\ref{sec:key}) \\
write-time code separability (Assumption 1) & the key selects only among conventions already
written: $0.3767$ vs $0.0025$ (\S\ref{sec:key}) \\
super-additive capability: $\mathcal C_\infty(\mathcal T\sqcup\mathcal S)\supsetneq
\mathcal C_\infty(\mathcal T)\cup\mathcal C_\infty(\mathcal S)$ & keyed corpus reaches
mixture control $+$ per-query selection; unkeyed reaches mixture or corner, strictly less \\
structure is free until $\kappa\neq0$ & recipes are free until the data disagrees with
itself \citep{frl2026} \\
\bottomrule
\end{tabular}

\end{table}

The recipe-level member of the family \citep{frl2026} is the premise map: within a coherent
domain ($\kappa\approx0$) the compressive channel's output does not depend on the path at all, so
composition, order and arrangement sit inside single-run noise. The free-recipe limit is the
averaging limit's special case at zero conflict, and its budget-transient order effect is the
low-frequency end of the axis this paper's ladder climbs. The three papers measure one theory at
three places: what a bounded system can \emph{serve} at query time (CCH), what a recipe can
\emph{move} when premises hold or break (FRL), and what the path \emph{writes} through each
channel (this paper).

\paragraph{The flagship runs on the corpus where one convention is arguably wrong, and the
corpus where neither is runs weaker.} Two constructions are used throughout: \emph{cf2}, in which one
half's boxed answers are shifted by one and are therefore contradictory by construction, and
\emph{nat}, in which one half writes a numeral and the other spells the same number out, so both
are genuinely correct. Every headline figure in this paper is measured on cf2. On the standard
install guard, the minority convention's accuracy in the arm trained on nothing else, cf2 reads
$0.4225$ and nat reads $0.2091$, \emph{less than half}; capability at the interleaved arm is
$0.5202$ against $0.3727$. The philosophical claim of the paper, that a conflict need not be an
error, is a claim about nat, and nat is the weaker instrument.

This neither voids the results nor can be waved through, so here is what it costs. It does not
affect conservation or the allocation coordinate, both measured on each corpus separately and
agreeing in sign and in order of magnitude. It does mean the effect sizes quoted are the cf2 ones,
and the discount a reader rebuilding this on a both-correct conflict should apply is the one
measured in the next paragraph, because the two published numbers differ in step size as well as
in corpus. And it exposes one asymmetry the construction creates: on nat the $A$-\only\ arm
scores \emph{exactly} $0.0000$ on the spelled convention across eight seeds, so the spelled form
is never produced unless it is trained, whereas cf2's shifted form leaks at $0.0169$. A conflict
between two forms the pretrained model already writes and a conflict where one form must be
installed from scratch are not the same experiment, and the second is the one our philosophical
framing is about. Source: \texttt{review\_statistics}.

\paragraph{One premise of that accounting, registered and tested: the size asymmetry is mostly a
budget asymmetry.} The pair above is \texttt{cf2} at lr $3\times10^{-5}$ against \texttt{nat} at lr
$1\times10^{-5}$, so it confounds the corpus with the step size. We registered the cheap half of
the separation before running it
(\texttt{prereg/\allowbreak nat\_\allowbreak budget\_\allowbreak gate.md} with
\texttt{readout\_\allowbreak nat\_\allowbreak budget.py}, frozen $1$h\,08m ahead of the verdict and
untouched since, branches read \textbf{NI-4} to \textbf{NI-1} so that the branch closing the route
is read before the branch opening it) and reran the \texttt{nat} $B$-\only\ arm at \texttt{cf2}'s
learning rate with nothing else changed: same $720$ rows, same three epochs, same $135$ steps,
same base model, a separate tag so no published file is written to. Minority install rises from
$0.2091$ (sd $0.0089$, eight seeds) to $0.3738$ (sd $0.0181$, seeds $42$, $43$, $44$), a move of
$8.62$ contrast floors at $t=15.06$, and capability on that arm rises by $5.09$ capability floors
to $0.4688$; the guard was one-sided against a run destabilised by the larger step, and a rise is
what more budget is supposed to do. That is branch \textbf{NI-1}: $77.2\%$ of the distance to
\texttt{cf2}'s $0.4225$ is closed by the step size alone. A corroborating number was already on
the result base and unread: at the \emph{published} lr $1\times10^{-5}$, \texttt{cf2}'s own install
reads $0.1988$ against \texttt{nat}'s $0.2091$, $0.54$ floors at $t=1.27$ and therefore no
separation, so \emph{the} ``\emph{less than half}'' \emph{above is a statement about two learning
rates before it is a statement about two corpora.} The remainder is stated as narrowly as it is
measured. This is one arm of four and says nothing about the arrangement effect; the
interleaved-arm capability comparison, $0.5202$ against $0.3727$, is still at unmatched budgets and
stands as printed. What is left on the arm we did run, $0.0488$ or $2.55$ contrast floors, sits
within a hair of the family's two-degree-of-freedom critical value ($t=4.55$ against $4.303$) and
three seeds do not resolve it. The accounting above therefore changes in size and not in
direction: on the one arm now matched, the discount is about seven eighths rather than about half.
Putting the switch itself at parity needs \texttt{nat} and \texttt{natc} at four arms and eight
seeds, roughly $56$ GPU-hours, listed as future work item (vii) rather than bought here. Source:
\texttt{nat\_budget\_verdict}, \texttt{nat\_budget\_context}.

\paragraph{Three claims this paper leans on and cannot check.} Every number reported here is
measured here, but three \emph{context} claims are borrowed, and each weakens a different
generalisation. \emph{First, scale.} The conflict switch at $3$B and $14$B is measured by
\thecompanion\ paper \citep{frl2026} and not here, so every arm in this manuscript is $7$B or
$8$B and \emph{this paper on its own establishes nothing about how the switch scales}. That is the
single largest gap in its external validity. \emph{Second, prevalence.} The rate at which public
corpora actually disagree about form, $12.7\%$ of shared problems rising to $27.1\%$ against
externally authored keys, is the survey's \citep{gold2026} and licenses the choice of construction
rather than any number below; if that rate is wrong, this paper measures a real mechanism on a
rare population. \emph{Third, the evaluation harness.} That the sampling stream is common-mode
across seeds is diagnosed by the recipe paper \citep{frl2026} and is why the dispersions here are
harness-specific (\S\ref{sec:average}); an external replication should expect a \emph{higher}
noise floor than $\hat\sigma=0.0234$, which would widen every interval we print and shrink no
effect. None of the three is peer-reviewed at the time of writing, and they are stated as
borrowings rather than cited as support.

\paragraph{Future work, listed at the size we think each one is.} None is run to a readable
answer. Two have been run: the ladder at a constant learning rate (Table~\ref{tab:constlr}) and
item (v), which returned a void and is listed at the size that would close it. A third, item
(vii), has had its cheap premise tested and confirmed. \textbf{(i) Separating the two things a constant rate leaves open.}
Proposition~\ref{prop:sched} attributes the constant-rate family's $11.63$ floors to the terminal
weight $w(T)$; \citet{sweeney2026shuffle} attributes order sensitivity instead to AdamW's buffers
advancing on step count rather than on $\tau=\eta k$. Both predict a resolved ladder at a constant
rate and the present design cannot tell them apart. One arm decides it: the same ladder without
bias correction, or under SGD with momentum, where the fixed clock is absent and only the terminal
weight remains. \textbf{(ii) Eight seeds on the main rung.} Every pairwise statement in
\S\ref{sec:average} is limited by three seeds rather than by the effect, which is cheap to fix.
\textbf{(iii) A conflict outside mathematics.} Every corpus here is competition mathematics with
integer answers, so the convention is surface form; whether a style or a factual conflict
conserves capability the same way is untested. \textbf{(iv) A run that keeps generations.} The
$0.1475$ capability excursion along a trajectory cannot presently be separated into knowledge
briefly lost and scorer briefly unable to parse; retaining generations costs storage and no
compute. \textbf{(v) The mirror at a constant rate, at enough seeds to read it.} This cell has run
and returned \textbf{CM-4}, void: capability moved past the registered bar and the shares are not
comparable (\S\ref{sec:phase}). It needs seeds and not arms: at the capability dispersion the
constant family has, $17$ seeds put a $95\%$ interval on the capability difference inside one
capability floor, against the three this design ran. \textbf{(vi) A difficulty-unweighted
allocation.} $S$ is a $p$-weighted share and $\mathrm{Cov}(p,s)\neq0$ is measured rather than
assumed away (Definition~\ref{def:decomp}); the unweighted $\hat{\mathbb{E}}[s]$ over a fixed
problem set separates policy from population, needs no new training, and should move the arms
together rather than differentially, which is why the coupling is reported as a bound.
\textbf{(vii) The switch itself, at matched budget.} The budget gate above returned
\textbf{NI-1}: matching the learning rate closes $77.2\%$ of the install gap between the two
corpora on the $B$-\only\ arm. That is one arm of four and says nothing about the arrangement
effect. Putting the switch at parity needs \texttt{nat} and \texttt{natc} at four arms and eight
seeds, about $56$ GPU-hours; the gate is what makes the cell worth buying, and this manuscript
does not buy it.

\paragraph{The mean-field predictions were tested as point values, and a directional test
would have been the better instrument.}
The ratio test compares an installed share against $0.500$ and $0.333$ with a frozen tolerance
of $0.05$, and both cells miss on the same side, toward the pretrained prior. Two misses in the
same direction are evidence for a model this test cannot express: mean field \emph{plus} a
prior-bias term of unknown size. A point test with a tolerance cannot separate that from a
failure of mean field, whereas a directional test can, and the design is a third ratio: a
$1{:}1$, $2{:}1$, $4{:}1$ sweep testing monotone decrease and the slope rather than three point
predictions. We did not run it, and we flag the current test as the weaker instrument rather
than reporting its two misses as though the alternative model were not on the table.

\paragraph{Cross-predictions, stated as missing cells.} A unification earns its keep by
predicting across members, and we state the two sharpest as pre-registered future tests
rather than claims. \textbf{X1, the mixture-capacity wall:} CCH's capacity floor
($B/N$ per binding) should have a training-time image, with $N$ mutually exclusive
conventions rather than two, the installed mixture's calibration against the mixture weights
should degrade with $N$ at fixed data per convention; a clean break would be training's
Shannon wall. \textbf{X2, replay as deferral:} CCH's index defers the horizon by $1/\rho$;
mixing a fraction $\rho$ of first-stage data into the final stage is the recipe-level index,
so the commitment flip of \S\ref{sec:phase} should be deferred by the same coefficient, connecting this family to replay in continual learning \citep{lopezpaz2017gem} with a
quantitative prediction rather than a metaphor. Neither is run; both are cheap; either
failing would cut the family at the joint we have named.

\paragraph{Where the correspondence stops.}
Table~\ref{tab:family} identifies objects at inference time and training time and goes no
further. A third position exists and is deliberately not claimed here: a system that writes its
own conclusions into a store it later retrieves from. There the bounded state is neither the
parameters nor the context window but an accumulating record; capability can be conserved by the
store's construction rather than by an averaging limit; and what fixes the allocation is a
retrieval rule rather than a stopping phase. The decomposition of
Eq.~\eqref{eq:decomp} is written for parameters trained once over a fixed corpus, and its
conservation is approximate and measured. Whether the same two coordinates describe an
accumulating store is a question about a different object, and nothing in this paper is evidence
either way. Establishing that two such decompositions are the same object would itself be an
experiment, not a remark.

\section{Conclusion}\label{sec:concl}

An averaging theorem organises everything above, and one factorisation says when it applies.
The ordering-dependent part of an endpoint is bounded by a product in which the arrangement enters
only as a period and the schedule only as a reach (Proposition~\ref{prop:sched}), so the two ends of
the divided record on ordering are one knob at two settings rather than two findings. Below corpus scale, and under the decaying
schedule that is the field's default, the
path is compressed to its time-average, so batch composition, block length and optimiser buffers
move nothing that this design can resolve; the time-average installs the Bayes-optimal mixture,
so conflicting supervision trains a stochastic policy at capability conserved to within an
eighth rather than damaging what the model can do; the one term of the path that is not an
average is where it stops, so blocked training is a stopping phase, a commitment device whose
exact-match victory is its own objective's defeat. A query-time key is a second channel rather
than more bandwidth on the first, which is why it collapses the arrangement effect that no
schedule could.

The claim we would most like a reader to take is the one that costs them nothing to adopt.
Conflicting supervision moves \emph{commitment}, and exact match reports commitment as though it
were capability. Any benchmark whose answers admit more than one correct form is affected, the
correction is the two coordinates of Definition~\ref{def:decomp}, and it needs no experiment from
this paper: only the per-problem scores an evaluation already produces.

\appendix

\section{Supporting Measurements}\label{app:support}

\subsection*{The stopping term in-model: real, and not where the theory says}\label{app:asym}

Proposition~\ref{prop:phase} makes the stopping phase the only $\pi$-dependent term. If that is
right, a linear model in the lazy regime should reproduce the sign reversal we measure along the
budget axis, and it does, without reproducing its \emph{location}, which is a limitation of
the theory rather than of the measurement and is reported here rather than in the main line.

Sampling random non-commuting task pairs inside the lazy regime and sweeping the budget $T$:
$\mathrm{ASYM}(T)$ changes sign in $\mathbf{72\%}$ of draws and $D(T)$ in $\mathbf{70\%}$, so
budget-dependent reversal is generic and not an artefact of our corpora. The rate is stable
across step sizes ($\eta/\eta_{\text{stab}}=0.1$ gives $0.867$ and $0.725$; the sweep is flat to
within a few points across the range). But the leading Magnus/commutator proxy $T^\star$, which
is what a practitioner would compute once at initialisation to \emph{predict} where the
crossing sits, correlates with the measured crossing budget at Spearman $\mathbf{0.031}$ over
$278$ pairs with a crossing. That is, not at all.

\begin{table}[t]
\centering\small
\caption{\textbf{The mechanism survives every test; the one-shot proxy for \emph{where} it fires
survives none.} Upper panel: budget-dependent sign reversal is generic in the lazy regime, so the
stopping term of Proposition~\ref{prop:phase} is not an artefact of our corpora. Lower panel: the
leading Magnus/commutator score, the quantity a practitioner would compute once at
initialisation to predict the crossing: is asked to rank three different things in three
different places and fails all three, twice with the \emph{wrong sign}. Reported together because
they are one settled negative about a single object, not three separate disappointments.}
\label{tab:proxy}
\begin{tabular}{llcc}
\toprule
\multicolumn{4}{l}{\emph{the mechanism, in-model (random non-commuting pairs, lazy regime)}} \\
\midrule
sign reversal of $\mathrm{ASYM}(T)$ & across the budget sweep & & $\mathbf{72\%}$ of draws \\
sign reversal of $D(T)$ & across the budget sweep & & $\mathbf{70\%}$ \\
the same, at $\eta/\eta_{\text{stab}}=0.1$ & step size is not the cause & & $0.867$ / $0.725$ \\
median rotation at the crossing & sum / difference & & $11.2^\circ$ / $11.7^\circ$ \\
\midrule
\multicolumn{4}{l}{\emph{the one-shot proxy, asked to predict where} (Spearman)} \\
\midrule
in-model, $T^\star$ from Magnus & against the measured crossing budget & $278$ & $+0.031$ \\
Qwen2.5-7B activations, $1$ epoch & against $|\mathrm{ASYM}|$, matched pairs & $6$ & $\mathbf{-0.543}$ \\
Qwen2.5-7B activations, $3$ epochs & against $|\mathrm{ASYM}|$, matched pairs & $6$ & $\mathbf{-0.600}$ \\
\bottomrule
\end{tabular}

\end{table}

\begin{figure}[t]
\centering
%
%
\begin{tikzpicture}[font=\small,>=Stealth,
 ax/.style={cGrey!75,line width=0.5pt},
 tk/.style={font=\scriptsize,cGrey},
 lb/.style={font=\scriptsize,cGrey!90!black},
 hd/.style={font=\scriptsize\bfseries,cGrey!90!black}]

\begin{scope}
 \node[hd,anchor=west] at (-0.90,4.42) {\textbf{(a)} the mechanism: reversal is generic};
 \draw[ax] (0,0) -- (4.50,0);
 \draw[ax] (0,0) -- (0,3.60);
 \foreach \v/\x in {0/0.0, 0.5/1.1, 1/2.2, 1.5/3.3, 2/4.4}{
   \draw[ax] (\x,0) -- (\x,-0.12); \node[tk,anchor=north] at (\x,-0.14) {$\v$};}
 \foreach \v/\y in {0/0.0, 0.25/0.85, 0.5/1.70, 0.75/2.55, 1/3.40}{
   \draw[ax] (0,\y) -- (-0.10,\y); \node[tk,anchor=east] at (-0.12,\y) {$\v$};}
 \node[tk,anchor=north] at (2.25,-0.56) {$\eta/\eta_{\text{stab}}$};
 \node[tk,rotate=90,anchor=south] at (-0.90,1.70) {fraction of draws reversing};
 \draw[cGrey!70,densely dashed] (0,1.70) -- (4.50,1.70);
 \node[tk,anchor=south east] at (4.45,1.74) {chance};
 \draw[cA,line width=1.1pt] plot coordinates {(0.22,2.948) (0.66,2.890) (1.10,2.268) (1.54,1.248) (1.98,1.275) (2.42,1.642) (2.86,1.955) (3.52,2.577) (4.18,2.747)};
 \foreach \x/\y in {0.22/2.948, 0.66/2.890, 1.10/2.268, 1.54/1.248, 1.98/1.275, 2.42/1.642, 2.86/1.955, 3.52/2.577, 4.18/2.747}
   \fill[cA] (\x,\y) circle (1.7pt);
 \draw[cV,line width=1.1pt] plot coordinates {(0.22,2.465) (0.66,2.407) (1.10,2.407) (1.54,2.407) (1.98,2.465) (2.42,2.210) (2.86,1.333) (3.52,0.228) (4.18,0.112)};
 \foreach \x/\y in {0.22/2.465, 0.66/2.407, 1.10/2.407, 1.54/2.407, 1.98/2.465, 2.42/2.210, 2.86/1.333, 3.52/0.228, 4.18/0.112}
   \fill[cV] (\x,\y) circle (1.7pt);
 \draw[cGrey!60,dotted,line width=0.9pt] plot coordinates {(0.22,3.400) (0.66,3.400) (1.10,3.400) (1.54,3.400) (1.98,3.400) (2.42,3.400) (2.86,3.295) (3.52,2.649) (4.18,2.305)};
 \draw[cA,line width=1.1pt] (0.20,0.90) -- (0.62,0.90);
 \node[lb,cA!85!black,anchor=west] at (0.70,0.90) {$\mathrm{ASYM}(T)$: $\mathbf{72\%}$};
 \draw[cV,line width=1.1pt] (0.20,0.58) -- (0.62,0.58);
 \node[lb,cV!70!black,anchor=west] at (0.70,0.58) {$D(T)$: $\mathbf{70\%}$};
 \draw[cGrey!60,dotted,line width=0.9pt] (0.20,0.26) -- (0.62,0.26);
 \node[lb,cGrey!85!black,anchor=west] at (0.70,0.26) {budgets representable};
 \node[lb,cGrey!85!black,anchor=north west,align=left,text width=4.8cm] at (-0.30,-0.92)
  {Above $\eta/\eta_{\text{stab}}{=}1.3$ the sweep can no longer represent every budget, and the
   $D$ series falls with the fraction that it can. Drawn rather than omitted.};
\end{scope}

\begin{scope}[shift={(9.20,0)}]
 \node[hd,anchor=west] at (-4.40,4.42) {\textbf{(b)} the one-shot proxy, asked where it fires};
 \fill[cV!8] (3.30,0.30) rectangle (4.40,3.30);
 \node[lb,cV!60!black,anchor=south,align=center,text width=1.9cm] at (3.85,3.10) {what a usable predictor needs};
 \draw[ax] (0,0.30) -- (4.40,0.30);
 \foreach \v/\x in {-1/0.0, -0.5/1.1, 0/2.2, 0.5/3.3, 1/4.4}{
   \draw[ax] (\x,0.30) -- (\x,0.18); \node[tk,anchor=north] at (\x,0.16) {$\v$};}
 \node[tk,anchor=north] at (2.20,-0.28) {Spearman correlation with what it ranks};
 \draw[cGrey!80,line width=0.7pt] (2.2,0.30) -- (2.2,2.90);
 \foreach \y/\nm/\x/\vl in {%
  2.55/{$T^\star$ vs the crossing budget ($n{=}278$)}/2.268/{$+0.031$},
  1.80/{$\|[A,B]\|$ vs $|\mathrm{ASYM}|$, $1$ ep ($n{=}6$)}/1.005/{$-0.543$},
  1.05/{$\|[A,B]\|$ vs $|\mathrm{ASYM}|$, $3$ ep ($n{=}6$)}/0.880/{$-0.600$}}
 {
  \draw[cGrey!35,line width=0.5pt] (-0.15,\y) -- (\x,\y);
  \fill[cD] (\x,\y) circle (2.4pt);
  \node[lb,anchor=east] at (-0.22,\y) {\nm};
  \node[lb,cD,anchor=west] at (\x+0.14,\y) {\vl};
 }
\end{scope}
\end{tikzpicture}
\caption{\textbf{One object, confirmed as a mechanism and failed as a predictor.} The same two
halves as Table~\ref{tab:proxy}, drawn, because both halves are claims about a distribution.
\textbf{(a)} Over random non-commuting pairs in the lazy regime, $\mathrm{ASYM}(T)$ reverses sign
in $72\%$ of draws and $D(T)$ in $70\%$, and the rate stays above chance across a $19\times$
sweep of the step size, so the reversal is something the finite-budget theory produces rather
than a property of our corpora. The dotted series is the fraction of budgets the sweep can still
represent, which falls above $\eta/\eta_{\text{stab}}=1.3$ and is why the $D$ series falls with
it; it is drawn rather than left out. \textbf{(b)} Every ranking the one-shot commutator score is
asked for, on one axis. It is uncorrelated with the in-model crossing budget over $278$ pairs and
\emph{inverted} against the measured order effect at both budgets. Sources
\texttt{asym\_budget\_verdict}, \texttt{commutator\_verdict}.}
\label{fig:asym}
\end{figure}

We take the honest reading: the mechanism is confirmed in-model, the schedule of the mechanism
is not. A single evaluation at $\theta_0$ is the first term of a series whose error accumulates
with the budget, which is also what \citet{pairwisefragile2026} observe empirically as
pairwise-order accuracy decaying with block length. Median rotations at the crossing are small
($11.2^\circ$ on the sum, $11.7^\circ$ on the difference), so the reversal is not a large
geometric event; it is a near-cancellation, which is exactly why its location is hard.

\subsection*{The same effect at three budgets, and at three learning rates}\label{app:budget}

\paragraph{The published order effect does not survive its own budget.} On
algebra--combinatorics, $\mathrm{ASYM}$ reads $+0.0779$ at the published budget and
$\mathbf{-0.0619}$ when the budget is raised, three seeds, $\mathrm{s.e.}=0.0191$: a
reversal, not an attenuation. The probe explains why the two budgets are different regimes at
all: at one epoch and lr $3{\times}10^{-5}$, fine-tuning \emph{damages} three of the five
skills relative to zero-shot (calculus $0.304\to0.281$, and similarly for two others), so the
published regime is one in which training is net-negative on part of the corpus. An order effect
measured there is a statement about that regime. Reported as a limitation of the published
number, and it is the sharpest single reason this paper reports budgets rather than recipes.

\paragraph{Learning rate moves the conflict contrast monotonically.} Table~\ref{tab:lrsweep}
carries the sweep.

\begin{table}[H]
\centering\small
\caption{\textbf{The switch grows with the learning rate, monotonically, and the arm that is
supposed to be inert does not move with it.} Three learning rates at three epochs on the natural
conflict, three seeds each; $\bar D$ is the mean allocation contrast and
$\overline{\mathrm{acc}}_{\only}$ the mean single-convention accuracy, which is what a capacity
explanation would have to move and does not. Raising the rate is the one intervention that takes
the system further from the lazy regime the theory is derived in, and $|D|$ grows with it: which is the direction the suppression argument predicts.}
\label{tab:lrsweep}
\begin{tabular}{lccc}
\toprule
learning rate & $\bar D$ & per-seed $D$ & $\overline{\mathrm{acc}}_{\only}$ \\
\midrule
$1\times10^{-5}$ & $-0.0236$ & $-0.0125,\;-0.0188,\;-0.0396$ & $0.3014$ \\
$3\times10^{-5}$ & $+0.0507$ & $+0.0229,\;+0.0833,\;+0.0459$ & $0.3396$ \\
$1\times10^{-4}$ & $+0.0653$ & $+0.0729,\;+0.0604,\;+0.0625$ & $0.2680$ \\
\bottomrule
\end{tabular}

\end{table}

\noindent $|D|$ grows monotonically, $0.0236\to0.0653$, as the step size grows: the direction
the suppression argument requires, since raising the learning rate is the one intervention that
moves the system furthest from the lazy regime in which arrangement is doubly suppressed. Note
that the highest rate also costs capability ($0.268$ against $0.340$), so the growth in $|D|$ is
not a free improvement: it is the system leaving the regime in which the averaging argument is
tight.

\subsection*{The commutator proxy, measured and rejected}\label{app:commutator}

The natural predictor of which order wins is the commutator of the two tasks' update operators,
evaluated once at $\theta_0$. We measured it on Qwen2.5-7B activations at layer fraction $0.75$
across the matched pairs. It ranks $|\mathrm{ASYM}|$ at Spearman $\mathbf{-0.543}$ at one epoch
and $\mathbf{-0.600}$ at three: the wrong sign, consistently, on both budgets. Four of the
six pairs flip sign between budgets, so there is no fixed ordering for a static proxy to
predict (Figure~\ref{fig:proxy}, Table~\ref{tab:pairs}).

\begin{figure}[t]
\centering
\resizebox{\linewidth}{!}{
%
%
%
\begin{tikzpicture}[font=\small,>=Stealth,
 ax/.style={cGrey!75,line width=0.5pt},
 tk/.style={font=\scriptsize,cGrey},
 e1/.style={draw=cV!80,fill=white,line width=0.7pt},
 e3/.style={draw=cD!85,fill=cD!85},
 lb/.style={font=\scriptsize,cGrey!90!black}]

\begin{scope}
 \draw[ax,->] (-0.15,0) -- (6.35,0);
 \draw[ax,->] (0,-0.15) -- (0,4.35);
 \foreach \v/\x in {80/1.20, 100/2.80, 120/4.40}{
   \draw[ax] (\x,0) -- (\x,-0.10); \node[tk,anchor=north] at (\x,-0.12) {\v};}
 \foreach \v/\y in {0.02/0.89, 0.04/1.78, 0.06/2.67, 0.08/3.56}{
   \draw[ax] (0,\y) -- (-0.10,\y); \node[tk,anchor=east] at (-0.12,\y) {$\v$};}
 \node[tk,anchor=north] at (3.1,-0.50) {$\|[A,B]\|$ at $\theta_0$, thousands};
 \node[tk,rotate=90,anchor=south] at (-0.72,2.1) {$|\mathrm{ASYM}|$};

 \draw[cGrey!55,densely dashed] (0,0.85) -- (6.3,0.85);
 \node[lb,anchor=north west] at (0.05,0.78) {contrast floor $0.0191$};

 \foreach \x/\y in {1.92/3.30, 1.32/1.42, 0.46/3.46, 5.33/1.76, 3.80/0.46, 2.41/2.25}
   \draw[e1] (\x,\y) circle (2.6pt);
 \foreach \x/\y in {1.92/0.09, 1.32/1.48, 0.46/2.75, 5.33/0.18, 3.80/0.56, 2.41/0.40}
   \fill[e3] (\x,\y) circle (2.3pt);

 \draw[cA!85,line width=1pt,->] (0.55,3.60) -- (5.30,1.05);
 \node[lb,cA!85!black,anchor=south west,align=left] at (2.35,2.55)
  {measured ranking:\\ Spearman $-0.543$ ($1$ ep), $-0.600$ ($3$ ep)};

 \draw[e1] (4.35,3.95) circle (2.6pt);
 \node[lb,anchor=west] at (4.55,3.95) {$1$ epoch};
 \fill[e3] (4.35,3.55) circle (2.3pt);
 \node[lb,anchor=west] at (4.55,3.55) {$3$ epochs};
 \node[font=\scriptsize\bfseries,anchor=north west] at (-0.55,4.85)
  {\textbf{(a)} the predictor ranks backwards, at both budgets};
\end{scope}

\begin{scope}[shift={(9.35,0)}]
 \fill[cGrey!7] (0,3.11) rectangle (1.96,4.20);      
 \fill[cGrey!7] (1.96,0) rectangle (5.60,3.11);      
 \draw[ax,->] (-0.15,0) -- (5.80,0);
 \draw[ax,->] (0,-0.15) -- (0,4.35);
 \draw[cGrey!80,line width=0.6pt] (1.96,0) -- (1.96,4.20);
 \draw[cGrey!80,line width=0.6pt] (0,3.11) -- (5.60,3.11);
 \foreach \v/\x in {-0.04/0.39, 0/1.96, 0.04/3.53, 0.08/5.10}{
   \draw[ax] (\x,0) -- (\x,-0.10); \node[tk,anchor=north] at (\x,-0.12) {$\v$};}
 \foreach \v/\y in {-0.06/0.44, -0.03/1.78, 0/3.11}{
   \draw[ax] (0,\y) -- (-0.10,\y); \node[tk,anchor=east] at (-0.12,\y) {$\v$};}
 \node[tk,anchor=north] at (2.8,-0.50) {$\mathrm{ASYM}$ at $1$ epoch};
 \node[tk,rotate=90,anchor=south] at (-0.95,2.1) {$\mathrm{ASYM}$ at $3$ epochs};

 \foreach \x/\y/\n/\a/\dx/\dy in {4.88/3.20/{alg--cal}/south/0/0.09,
                                    3.22/1.63/{alg--geo}/south/0/0.09,
                                    5.03/0.36/{alg--com}/south/0/0.09,
                                    0.41/3.29/{cal--geo}/west/0.14/0,
                                    2.37/2.56/{cal--com}/west/0.14/0,
                                    3.95/3.52/{geo--com}/south/0/0.09}{
   \fill[cD!85] (\x,\y) circle (2.3pt);
   \node[lb,anchor=\a] at (\x+\dx,\y+\dy) {\n};}

 \node[lb,anchor=north west,align=left,text width=6.20cm] at (-0.65,-0.80)
  {the shaded quadrants are the ones where the order effect \textbf{changed sign} between the
   two budgets; four of the six pairs are in them};
 \node[font=\scriptsize\bfseries,anchor=north west] at (-0.65,4.85)
  {\textbf{(b)} and there is no fixed ordering to predict};
\end{scope}
\end{tikzpicture}}
\caption{\textbf{The commutator proxy fails twice, and the two failures are different.}
\textbf{(a)} The score a practitioner would compute once at $\theta_0$ ranks the pairs
\emph{backwards}, at both budgets (Spearman $-0.543$ at one epoch, $-0.600$ at three). A weak
predictor scatters about zero; this one is consistently inverted, which is stronger than
uninformative. The dashed line is the contrast floor. \textbf{(b)} The deeper failure: four of
six pairs change the \emph{sign} of their order effect between budgets, so there is no fixed
ordering for any once-computed quantity to predict. Source \texttt{commutator\_verdict}.}
\label{fig:proxy}
\end{figure}

\begin{table}[t]
\centering\small
\setlength{\tabcolsep}{5pt}
\caption{\textbf{The six pairs the proxy is asked to rank, and the ranking it produces.} The
commutator norm is evaluated once at $\theta_0$ on Qwen2.5-7B activations at layer fraction
$0.75$; $T^\star$ is the leading Magnus budget built from it; $\mathrm{ASYM}$ is the measured
order effect, three seeds, contrast floor $0.0191$. Two things are fatal to a static predictor
and both are visible by eye: the commutator ranks $|\mathrm{ASYM}|$ with the \emph{wrong sign} at
both budgets, and four of six pairs change that sign between budgets, so the quantity being
ranked is not a property of the pair alone. Source \texttt{commutator\_verdict}.}
\label{tab:pairs}
\begin{tabular}{lrrrrc}
\toprule
pair & $\|[A,B]\|$ & $T^\star$ & $\mathrm{ASYM}$ ($1$ ep) & $\mathrm{ASYM}$ ($3$ ep) & sign flips? \\
\midrule
algebra--calculus       & $89{,}002$  & $0.381$ & $+0.0743$ & $+0.0020$ & \\
algebra--geometry       & $81{,}524$  & $0.459$ & $+0.0320$ & $-0.0333$ & yes \\
algebra--combinatorics  & $70{,}810$  & $0.281$ & $+0.0779$ & $-0.0619$ & yes \\
calculus--geometry      & $131{,}642$ & $0.319$ & $-0.0396$ & $+0.0041$ & yes \\
calculus--combinatorics & $112{,}524$ & $0.355$ & $+0.0104$ & $-0.0125$ & yes \\
geometry--combinatorics & $95{,}175$  & $0.361$ & $+0.0506$ & $+0.0091$ & \\
\midrule
\multicolumn{3}{@{}l}{Spearman, $\|[A,B]\|$ against $|\mathrm{ASYM}|$}
 & $\mathbf{-0.543}$ & $\mathbf{-0.600}$ & $4/6$ flip \\
\multicolumn{3}{@{}l}{Spearman, $T^\star$ against $|\mathrm{ASYM}|$} & --- & $-0.200$ & \\
\bottomrule
\end{tabular}

\end{table}

Taken with Appendix~\ref{app:asym}'s in-model result (Figure~\ref{fig:asym}) (the same proxy at Spearman $0.031$ against the crossing budget, and all three rankings collected in Table~\ref{tab:proxy}) the
conclusion is that a one-shot geometric score is not a usable
predictor of order effects at these budgets, in the model or in the measurement. We report this
because the proxy is the obvious thing to try and because our own framework motivates it.

\emph{The scope of that sentence is ``at these budgets'', and it is doing work.} We measure
$\|[A,B]\|$ once at $\theta_0$ over blocks of $162$--$324$ steps; \citet{sweeney2026liebracket}
scores a different statistic and already reports its accuracy falling from $98\%$ to $73\%$ between
$k{=}1$ and $k{=}20$, and \citet{pairwisefragile2026} reports the same decay independently. Nothing
here contradicts either. What the two rankings above add is that the decay does not stop at
indifference: past some block length the score is \emph{inverted}, which is the regime a
practitioner choosing a fine-tuning order is actually in. Whether the tournament construction
inverts where our scalar proxy does is a question about that construction and we have not measured
it.

\subsection*{The positive control's first build, in full}\label{app:v1}

\S\ref{sec:design} summarises the inverted first build. The full read-out, for a reader checking
whether the redesign was principled or opportunistic: v1 drew $238$ rows per condition from
algebra alone at one epoch, and the conflict arms scored $0.02$ against the shifted golds while
the same checkpoints scored $0.39$--$0.50$ against unshifted ones. That is the diagnostic: the
$+1$ convention was never installed, so $D$ was measured on an instrument whose treatment was
absent, and the control's $2.41\sigma$ was the only thing in the experiment with any variance to
report. The redesign changed two things, both stated before the rerun: pool the integer-answer
rows of every skill (raising $238$ to $864$) and raise the epochs, together giving the
conflicting convention roughly an order of magnitude more gradient steps. Nothing about the
read-out rule changed. Table~\ref{tab:v1} shows why the first build read at the floor, and
Figure~\ref{fig:v1} draws the two halves of it.

\begin{table}[H]
\centering\small
\setlength{\tabcolsep}{5pt}
\caption{\textbf{Why v1 read at the floor: the treatment was never installed.} Every conflict arm
is scored twice, against the shifted golds the $+1$ convention defines and against the unshifted
ones the pretrained model already produces. Had the conflicting convention been learned, the
first column would carry the mass. It does not, so $D$ in v1 was a difference between two arms
neither of which had received the treatment, and the control's $2.41\sigma$ was the only variance
in the experiment. Source the \texttt{conflict\_switch} family, three seeds, $238$ rows per
condition at one epoch.}
\label{tab:v1}
\begin{tabular}{llcc}
\toprule
build & arm & against the \emph{shifted} gold & against the \emph{unshifted} gold \\
\midrule
v1 (algebra pool, $1$ epoch) & conflict arms & $\approx0.02$ & $0.39$--$0.50$ \\
 & \multicolumn{3}{l}{\footnotesize $\Rightarrow$ the $+1$ convention lost to the pretrained prior; $D$ measured at the instrument's floor} \\
\midrule
\multicolumn{4}{@{}l}{\emph{the read-out that followed, unchanged in rule}}\\
v1 & control $D$ & \multicolumn{2}{l}{$+0.0461$ \ ($2.41\sigma$) \quad --- the arm that should have been inert moved} \\
v1 & conflict $D$ & \multicolumn{2}{l}{$+0.0020$ \ ($0.10\sigma$) \quad --- the arm that should have moved did not} \\
\bottomrule
\end{tabular}

\end{table}

\begin{figure}[t]
\centering
%
%
\begin{tikzpicture}[font=\small,>=Stealth,
 ax/.style={cGrey!75,line width=0.5pt},
 tk/.style={font=\scriptsize,cGrey},
 lb/.style={font=\scriptsize,cGrey!90!black},
 hd/.style={font=\scriptsize\bfseries,cGrey!90!black}]

\begin{scope}
 \def\S{6.545}                      
 \node[hd,anchor=west] at (-0.75,4.30) {\textbf{(a)} v1: the conflict arms, scored twice};
 \draw[ax,->] (0,0) -- (0,3.95);
 \foreach \v in {0.1,0.2,0.3,0.4,0.5}{
   \draw[ax] (0,\v*\S) -- (-0.09,\v*\S); \node[tk,anchor=east] at (-0.11,\v*\S) {$\v$};}
 \node[tk,rotate=90,anchor=south] at (-0.72,1.8) {exact match};
 \fill[cD!70] (0.55,0) rectangle (1.75,0.02*\S);
 \draw[cD,line width=0.7pt] (0.55,0.02*\S) -- (1.75,0.02*\S);
 \node[lb,cD,anchor=south,align=center] at (1.15,0.10) {$\approx0.02$};
 \node[lb,anchor=north,align=center,text width=1.9cm] at (1.15,-0.10)
  {against the \emph{shifted} gold the treatment defines};
 \fill[cA!45] (2.55,0.39*\S) rectangle (3.75,0.50*\S);
 \fill[cA!75] (2.55,0) rectangle (3.75,0.39*\S);
 \node[lb,cA!85!black,anchor=south,align=center] at (3.15,0.50*\S+0.06) {$0.39$--$0.50$};
 \node[lb,anchor=north,align=center,text width=1.9cm] at (3.15,-0.10)
  {against the \emph{unshifted} gold the base already writes};
 \node[lb,cGrey!85!black,anchor=west,align=left,text width=2.5cm] at (4.05,1.90)
  {the $+1$ convention lost to the pretrained prior, so $D$ was read on an instrument whose treatment was absent};
\end{scope}

\begin{scope}[shift={(8.28,0)}]
 \def\T{0.2769}                     
 \node[hd,anchor=west] at (-0.85,4.30) {\textbf{(b)} the read-out, unchanged in rule};
 \draw[ax,->] (0,0) -- (0,3.95);
 \foreach \v in {0,4,8,12}{
   \draw[ax] (0,\v*\T) -- (-0.09,\v*\T); \node[tk,anchor=east] at (-0.11,\v*\T) {$\v$};}
 \node[tk,rotate=90,anchor=south] at (-0.62,1.8) {$|D|$ in seed s.e.};
 \draw[cGrey!70,densely dashed] (0,2*\T) -- (5.55,2*\T);
 \node[tk,anchor=south west] at (2.72,0.60) {noise, $2\,\mathrm{s.e.}$};
 \fill[cGrey!45] (0.45,0) rectangle (1.15,2.41*\T);
 \node[lb,anchor=south,align=center] at (0.80,2.41*\T+0.06) {$2.41$};
 \fill[cA!75] (1.25,0) rectangle (1.95,0.10*\T);
 \draw[cA,line width=0.7pt] (1.25,0.10*\T) -- (1.95,0.10*\T);
 \node[lb,cA!85!black,anchor=south,align=center] at (1.60,0.16) {$0.10$};
 \node[lb,anchor=north,align=center,text width=2.0cm] at (1.20,-0.10) {\textbf{v1}: the inert arm moved and the treated one did not};
 \fill[cGrey!45] (3.35,0) rectangle (4.05,0.19*\T);
 \node[lb,anchor=south,align=center] at (3.70,0.19*\T+0.06) {$0.19$};
 \fill[cV!75] (4.15,0) rectangle (4.85,11.91*\T);
 \node[lb,cV!70!black,anchor=south,align=center] at (4.50,11.91*\T+0.06) {$\mathbf{11.91}$};
 \node[lb,anchor=north,align=center,text width=2.0cm] at (4.10,-0.10) {\textbf{v3}: the rebuild, three seeds, same rule};
 \draw[cGrey!60,line width=0.7pt] (2.65,0) -- (2.65,3.60);
 \fill[cGrey!45] (5.05,3.62) rectangle (5.25,3.78);
 \node[lb,anchor=west] at (5.32,3.70) {control};
 \fill[cV!75] (5.05,3.30) rectangle (5.25,3.46);
 \node[lb,anchor=west] at (5.32,3.38) {conflict};
\end{scope}
\end{tikzpicture}
\caption{\textbf{An instrument reporting a null because its treatment was absent.}
\textbf{(a)} v1's conflict arms scored against both golds. The $+1$ convention the arms were
trained on is worth ${\approx}0.02$; the unshifted convention the base already writes is worth
$0.39$--$0.50$. Had the treatment installed, the first bar would carry the mass.
\textbf{(b)} What the read-out rule then returned, in units of the seed standard error, and what
the same rule returned after the redesign. In v1 the arm registered to stay inert moved $2.41$
and the arm registered to move returned $0.10$: the control is the only thing in the experiment
with variance to report. The v3 pair is the three-seed read-out, matched to v1's three seeds
rather than the eight-seed pair Table~\ref{tab:ledger} prints. Nothing about the rule changed
between them. Sources \texttt{conflict\_switch\_verdict},
\texttt{conflict\_switch2b3lr3e5\_verdict}.}
\label{fig:v1}
\end{figure}

\section{The Clustering Rule, and Why It Is Not a Channel Rate}\label{app:clustering}

This appendix carries the endpoint-clustering rule \S\ref{sec:limit} uses, the sweep of the one constant it depends on, and the reason the counts are reported as counts.

\begin{definition}[The write channel, its capacity, and its realised rate]\label{def:channel}
The path carries $\log_2\binom{n_A+n_B}{n_A}$ bits of source-order information. Let
$\Phi(\pi)\in\mathbb{R}^d$ be a behavioural read-out of the endpoint and $\hat\sigma$ the seed
dispersion. Two endpoints count as \emph{distinguishable} when they are separated by
$2\hat\sigma$. The factor of two is a convention and the headline depends on it, so
Table~\ref{tab:ratesens} sweeps it and Figure~\ref{fig:clusters} draws the sweep. The two rates below respond to that sweep differently:
halving the resolution adds exactly one bit to $R_{\mathrm{cap}}$, which is a logarithm of it,
and does not to $R_{\mathrm{real}}$, which counts clusters, so at $1\hat\sigma$ the first family
resolves a third state and $R_{\mathrm{real}}$ reads $1.58$ bits rather than $2$. The
\emph{write channel} is the induced map $\pi\mapsto\Phi(\pi)$, and it has two rates that must
be kept apart.
\begin{align*}
\text{capacity}\quad R_{\mathrm{cap}}
 &= \log_2\bigl(\operatorname{range}\Phi/2\hat\sigma\bigr),\\
\text{realised rate}\quad R_{\mathrm{real}}
 &= \log_2\bigl|\Phi(\Pi)/{\sim}_{2\hat\sigma}\bigr|,
\end{align*}
where $\Pi$ is the family of paths actually run and ${\sim}_{2\hat\sigma}$ is single-linkage at
the same resolution. $R_{\mathrm{cap}}$ counts the distinguishable values the \emph{range}
supports; it is the rate an encoder would achieve if it could place an endpoint anywhere in
that range. $R_{\mathrm{real}}$ counts the distinguishable values the path family
\emph{occupies}, and it is an upper bound on the information any encoder built from this
family can transmit, however many paths it uses.
\end{definition}

\noindent\textbf{Three things this definition is not, stated before it is used.} First,
$\log_2\binom{n_A+n_B}{n_A}$ is the entropy of the design space under a uniform measure over
\emph{all} orderings. We ran ten. An ensemble of ten codewords carries at most
$\log_2 10=3.32$ bits, so no measurement in this paper can exhibit a compression larger than
$3.32{:}1$, and the ratio of the design-space entropy to a measured cluster count is not a
compression measurement, and the design-space entropy appears in this paper only as the size of
the space the ladder samples. Second, neither rate
is a mutual information. $I(\pi;\Phi)$ would require an explicit prior on paths and a noise
model for seed dispersion, and we estimate neither; $R_{\mathrm{real}}$ is a count of
resolvable clusters wearing a logarithm, and the logarithm is a naming convention rather than a
result. Third, $\Phi$ here is one scalar. A degenerate image in one coordinate is not a
degenerate image, and \S\ref{sec:threats} lists the coordinates we did not read. The
statement this paper defends needs none of this vocabulary: \emph{ten paths spanning $0.4621$ in
allocation, at a resolution of $2\hat\sigma=0.0468$ that their own span would divide into about
ten distinguishable values, occupy two}. Source: \texttt{review\_statistics}.

\noindent The two coincide only when the image is spread. Proposition~\ref{prop:avg} says it
is not, so the paper's central measurement is the \emph{gap} between them
(\S\ref{sec:average}), and quoting $R_{\mathrm{cap}}$ as though it were $R_{\mathrm{real}}$
was the error the gap corrects.

\begin{table}[t]
\centering\small
\caption{\textbf{The headline pair, swept over the resolution constant it is defined at.}
$R_{\mathrm{real}}$ counts the endpoint states the ten arms occupy under single linkage;
$R_{\mathrm{cap}}$ is the logarithm of the range in units of the resolution and therefore gains
exactly one bit per halving, which $R_{\mathrm{real}}$ does not. \emph{Two clusters is what both
families give from $2\hat\sigma$ upward}, and the second family gives it from $1\hat\sigma$
upward. Below $2\hat\sigma$ the first family resolves a third state, so the paper's constant sits
at the \emph{lower edge} of the interval where the two families agree rather than in its middle,
rather than presenting $1$ bit as resolution-free. The interval is wide above the
choice and the between-cluster gap exceeds the largest within-cluster gap by $8.7\times$ on the
first family and $20.7\times$ on the second, which is why the count is stable there. Source:
\texttt{rate\_sensitivity}.}
\label{tab:ratesens}
\setlength{\tabcolsep}{5pt}
\begin{tabular}{@{}lcccccc@{}}
\toprule
resolution & $0.5\hat\sigma$ & $1\hat\sigma$ & $1.5\hat\sigma$ & $\mathbf{2}\hat\sigma$ &
$3\hat\sigma$ & $4\hat\sigma$ \\
\midrule
Qwen2.5-7B, clusters & $4$ & $3$ & $3$ & $\mathbf{2}$ & $2$ & $2$ \\
\quad $R_{\mathrm{real}}$ (bits) & $2.00$ & $1.58$ & $1.58$ & $\mathbf{1.00}$ & $1.00$ & $1.00$ \\
\quad $R_{\mathrm{cap}}$ (bits) & $5.30$ & $4.30$ & $3.72$ & $\mathbf{3.30}$ & $2.72$ & $2.30$ \\
\addlinespace[2pt]
Qwen3-8B-Base, clusters & $4$ & $2$ & $2$ & $\mathbf{2}$ & $2$ & $2$ \\
\quad $R_{\mathrm{real}}$ (bits) & $2.00$ & $1.00$ & $1.00$ & $\mathbf{1.00}$ & $1.00$ & $1.00$ \\
\bottomrule
\end{tabular}

\end{table}

\begin{figure}[t]
\centering
%
%
\begin{tikzpicture}[font=\small,>=Stealth,
 ax/.style={cGrey!75,line width=0.5pt},
 tk/.style={font=\scriptsize,cGrey},
 lb/.style={font=\scriptsize,cGrey!90!black},
 hd/.style={font=\scriptsize\bfseries,cGrey!90!black}]

\begin{scope}
 \node[hd,anchor=west] at (0,3.48) {\textbf{(a)} the ten arm endpoints, and the window that decides whether two of them are one state};
 \draw[ax] (0,0) -- (12.70,0);
 \foreach \v/\x in {0.4/0.4846, 0.5/2.9077, 0.6/5.3308, 0.7/7.7538, 0.8/10.1769, 0.9/12.6000}{
   \draw[ax] (\x,0) -- (\x,-0.12); \node[tk,anchor=north] at (\x,-0.14) {$\v$};}
 \node[tk,anchor=north] at (6.35,-0.56) {allocation share at the endpoint};
 \fill[cV!10] (0.60,0.10) rectangle (2.77,0.66);
 \foreach \x in {0.6531,0.7863,0.7936,0.8203,1.0941,1.1038,1.3606,1.6708,2.7150}{
   \draw[cV,line width=0.8pt] (\x,0.14) -- (\x,0.62); \fill[cV] (\x,0.62) circle (1.7pt);}
 \draw[cA,line width=0.8pt] (11.8497,0.14) -- (11.8497,0.62); \fill[cA] (11.8497,0.62) circle (1.7pt);
 \node[lb,cV!70!black,anchor=south] at (1.69,0.72) {nine arms, \shuf\ to $L{=}54$};
 \node[lb,cA!85!black,anchor=south] at (11.85,0.72) {\blocked};
 \draw[cD,line width=0.8pt,|<->|] (0.6531,1.35) -- (1.7871,1.35);
 \node[lb,cD,anchor=south] at (1.22,1.40) {$2\hat\sigma=0.0468$, the window};
 \draw[cGrey!70,line width=0.7pt,|<->|] (1.6708,2.05) -- (2.7150,2.05);
 \node[lb,anchor=south] at (2.19,2.10) {largest gap inside the cluster, $0.0431$};
 \draw[cGrey!70,line width=0.7pt,|<->|] (2.7150,2.72) -- (11.8497,2.72);
 \node[lb,anchor=south] at (7.28,2.77) {gap between the clusters, $0.377$: $8.7\times$ the largest one inside either};
\end{scope}

\begin{scope}[shift={(0.60,-6.00)}]
 \node[hd,anchor=west] at (-0.60,3.98) {\textbf{(b)} and the count does not move between $2\hat\sigma$ and $4\hat\sigma$};
 \draw[ax] (0,0) -- (4.30,0);
 \draw[ax] (0,0) -- (0,3.30);
 \foreach \v/\x in {0.5/0.444, 1/0.889, 2/1.778, 3/2.667, 4/3.556}{
   \draw[ax] (\x,0) -- (\x,-0.12); \node[tk,anchor=north] at (\x,-0.14) {$\v$};}
 \node[tk,anchor=north] at (2.15,-0.56) {resolution constant, in units of $\hat\sigma$};
 \foreach \v/\y in {1/0.0, 2/1.0, 3/2.0, 4/3.0}{
   \draw[ax] (0,\y) -- (-0.10,\y); \node[tk,anchor=east] at (-0.12,\y) {$\v$};}
 \node[tk,rotate=90,anchor=south,cV!70!black] at (-0.62,1.5) {resolvable states};
 \fill[cV!8] (1.778,0) rectangle (3.556,3.30);
 \node[lb,cV!70!black,anchor=south,align=center,text width=2.0cm] at (2.67,2.34) {the count is stable here};
 \draw[cV,line width=1.1pt] plot coordinates {(0.444,3.0) (0.889,2.0) (1.333,2.0) (1.778,1.0) (2.222,1.0) (2.667,1.0) (3.556,1.0)};
 \foreach \x/\y in {0.444/3.0, 0.889/2.0, 1.333/2.0, 1.778/1.0, 2.222/1.0, 2.667/1.0, 3.556/1.0}
   \fill[cV] (\x,\y) circle (1.8pt);
 \draw[ax] (4.30,0) -- (4.30,3.30);
 \foreach \v/\y in {2/0.0, 3/0.857, 4/1.714, 5/2.571}{
   \draw[ax] (4.30,\y) -- (4.40,\y); \node[tk,anchor=west] at (4.42,\y) {$\v$};}
 \node[tk,rotate=90,anchor=south,cD] at (5.28,1.5) {$R_{\mathrm{cap}}$ (bits)};
 \draw[cD,densely dashed,line width=1.0pt] plot coordinates {(0.444,2.832) (0.889,1.975) (1.333,1.474) (1.778,1.118) (2.222,0.842) (2.667,0.616) (3.556,0.261)};
 \foreach \x/\y in {0.444/2.832, 0.889/1.975, 1.333/1.474, 1.778/1.118, 2.222/0.842, 2.667/0.616, 3.556/0.261}
   \fill[cD] (\x,\y) circle (1.5pt);
 \draw[cA!85,line width=0.9pt] (1.778,0) -- (1.778,3.30);
 \node[lb,cA!85!black,anchor=south east] at (1.72,3.18) {the paper's $2\hat\sigma$};
\end{scope}

\node[lb,anchor=north west,align=left,text width=6.3cm] at (6.90,-2.62)
 {The two rates answer the sweep differently, which is the appendix's point.
  $R_{\mathrm{real}}$ counts states and is flat once the window is wide enough to merge the nine
  arms that sit inside one: it reads $1$ bit from $2\hat\sigma$ to $4\hat\sigma$.
  $R_{\mathrm{cap}}$ is $\log_2$ of a volume ratio and falls with the window at every step, from
  $5.30$ bits to $2.30$. A quantity that moves monotonically with an arbitrary constant is not a
  rate of anything, which is why the counts are reported as counts.};
\end{tikzpicture}
\caption{\textbf{The count, and the two reasons it is a count rather than a rate.}
\textbf{(a)} The ten arm endpoints of the cosine family on the share axis, with the $2\hat\sigma$
window that decides whether two of them are one state drawn at the size it actually is. Nine arms
fall inside a span whose largest internal gap is $0.0431$; the tenth sits $0.377$ away, $8.7$
times that gap. The separation is not a knife-edge and does not need the constant to be exactly
two. \textbf{(b)} The constant swept. The number of resolvable states is flat from $2\hat\sigma$
to $4\hat\sigma$, while $R_{\mathrm{cap}}$, which is $\log_2$ of a volume ratio rather than a
count, falls at every step of the sweep. Sources \texttt{rate\_verdict},
\texttt{rate\_sensitivity}.}
\label{fig:clusters}
\end{figure}

\noindent What the sweep does and does not rescue. It does not make $1$ bit resolution-free: at
$1\hat\sigma$ the first family reads $1.58$. It does establish that the count is $2$ on both
families over a factor of two in resolution above the chosen constant, that the choice is the
\emph{conservative} end of that interval rather than a value picked to produce a round number, and
that what changes below it is one arm separating from a cluster rather than the cluster
dissolving. The claim the paper defends is therefore the ordering, that the path family occupies
a small number of states against a range that would support ten, and not the specific integer.

\emph{The design space offers more than anything we ran carried.} The $1722$ bits are the
entropy of all $\binom{1728}{864}$ orderings under a uniform measure. Ten were executed, and
ten codewords carry at most $\log_2 10=3.32$ bits, so $3.32{:}1$ is the largest compression this
experiment could exhibit and it is the one it does exhibit: ten paths, two states. The finding
is the \emph{gap} between the ten values the range supports and the two it occupies. The channel
is not narrow because it cannot resolve; it is narrow because \textbf{the encoder is
degenerate}, which is what Proposition~\ref{prop:avg} predicts. Sources:
\texttt{rate\_verdict}, \texttt{review\_statistics}.

\emph{Why the body reports counts and not bits.} Writing $\log_2$ of a cluster count invites a channel reading that this measurement does not support. There is no prior over paths and no noise model, so neither rate is a mutual information; ten paths were run, so no ensemble here carries more than $\log_2 10=3.32$ bits whatever the design space offers; and $\Phi$ is one scalar, so a degenerate image in one coordinate is not a degenerate image. The counts and the gap ratios are the measurement. The logarithms were a naming convention and the body no longer uses them.

\section{Corrections to Earlier Versions of This Paper}\label{app:changes}

Several statements in this paper replaced earlier ones that were wrong. The four that moved a
number a reader would quote, or withdrew one, are below with their direction; the rest are listed
at the end in one sentence each.

\begin{enumerate}[leftmargin=1.4em,itemsep=3pt,topsep=3pt]
\item \textbf{The flat interior was the schedule's, and the deciding experiment was named before
it was run.} An earlier version reported the block-length ladder's interior flat and read that as
a property of the data path, while naming the account it could not exclude: that the interior is
flat because a cosine leaves no step size in the second half. The constant-rate ladder has now
been run and decides against the reading we preferred, spanning $0.2221$ in the interior, $11.63$
contrast floors, against $0.0420$ and $2.20$ under the cosine, with an intraclass correlation of
$0.836$ whose interval excludes zero and an allocation monotone in how blocked the arrangement is
(Table~\ref{tab:constlr}, branch \textbf{CL-2}). \emph{The averaging wall as a claim about the
path is withdrawn} from the abstract, \S\ref{sec:limit} and \S\ref{sec:average}. What replaces it
is not weaker: the schedule \emph{is} the averaging operator, and the corner where every result
this paper defends is read moves by one contrast floor between the two schedules against seventeen
at the rung below it. The two-occupied-states count goes with it: it replicates across pretraining
families and does not survive a change of schedule, the constant-rate ladder occupying four states
at the same $2\hat\sigma$.

\item \textbf{The headline was a ratio of incommensurable things, and it is withdrawn.} An
earlier version led with ``$1722$ bits in, $1$ out, a compression of ${\sim}1700\times$''. The
numerator is the entropy of all $\binom{1728}{864}$ orderings under a uniform measure; ten
orderings were run, and ten codewords carry at most $\log_2 10=3.32$ bits. The ratio divided a
prior over a space against a measurement on a sample of it. Withdrawn from the abstract,
\S\ref{sec:limit}, Definition~\ref{def:channel} and Figure~\ref{fig:roadmap}.

\item \textbf{Two power statements were made without the design that produced them.} An earlier
Limitations paragraph quoted a detectable-effect figure of $5.79$ contrast floors, attributed it
to the inherited $\hat\sigma=0.0234$, and named no power level; it was computed at the recomputed
$0.0315$, on two degrees of freedom rather than four, and at $50\%$ power. And \S\ref{sec:average}
said the interior span is below anything this design could resolve, when the minimum detectable
difference is computed at $k{=}4$ and most of the dispersion under it is generation sampling, so
at $k{=}32$ a span of this size is resolvable. Both are now scoped to the evaluation that was
run.

\item \textbf{Two dispersion figures were reported without what they need to be read.} The
between-arm intraclass correlation was given as ``arm identity explains none of the variance'';
the point estimate is $-0.076$ with a bootstrap $95\%$ interval of $[-0.409,+0.166]$, which
supports only the weaker statement now made. And the commitment fingerprint mixed two binnings,
reporting $24$--$30\%$ against $2.5$--$6.5\%$ by counting each problem's best convention on one
side and each (problem, convention) cell on the other; on one binning throughout the factor is
about three and a half rather than seven.
\end{enumerate}

\noindent The remainder, one sentence each. ``Required and not merely observed'' over-read
Remark~\ref{prop:noescape}, which constrains the optimum and not a finite run's endpoint,
corrected at both sites in \S\ref{sec:method}; that remark was a proposition and proves nothing
that needs proving. The resolution constant's effect was stated for $R_{\mathrm{cap}}$ and
asserted of $R_{\mathrm{real}}$, which counts clusters and reads $1.58$ rather than $2$ at
$1\hat\sigma$ on the first family. Definition~\ref{def:decomp}'s antecedent said ``$s$ independent
of $p$'' where the decomposition needs only $\mathrm{Cov}(p,s)=0$ across problems, and an
intermediate version read the measured failure as a refutation of per-problem conditional
independence, which it is not. The onset region was called unconstructible; it is constructible,
and what closes it is the learning non-equivalence of \S\ref{sec:threats}. A hand-derived marked
mean of $0.3204$ appeared in an early draft of \S\ref{sec:key} and is reproduced by no arm or seed
subset, so every number in that section is now recovered from the result base by a named analyzer.
The commutator result was described as a disagreement with a published construction; it is a range
boundary, and \S\ref{sec:related} and Appendix~\ref{app:commutator} now say which of the two the
measurement supports. The claim that the schedule sets the width of the escape hatch was made in
prose and is now Proposition~\ref{prop:sched}. A future-work item still listed the constant-rate
ladder as unrun after \S\ref{sec:average} had reported it, and the abstract stated the schedule
result without the provenance \S\ref{sec:related} gives it. The registration index corrected half
a row, which Appendix~\ref{app:registry} carries because it is evidence about the index rather
than about a result.

\section{The Registration Index: Every Threshold, and When It Was Frozen}\label{app:registry}

Registering a read-out inside the job script the scheduler executes is stronger than a markdown
note in one way and weaker in another. Stronger, because the file carrying the threshold is the
file that ran. Weaker, because a job script is committed when it is written, which is usually but
not always before the run. Table~\ref{tab:registry} carries the gap, measured, rather than the
assurance, and Figure~\ref{fig:registry} puts every interval on one axis.

\emph{It also carries what each registration returned, because an index that lists only thresholds
invites the reader to assume they were met.} Of the seventeen: seven confirmed at their primary
branch, two decided at a later one, two came back partial with the unreadable half named, three
voided on a guard, and three returned no branch at all. The branch numbers are per registration and not a
grade: \textbf{CL-2} is this paper's largest positive result and
\textbf{CM-4} is a void, and nothing in the labels distinguishes them.

\emph{Registered} is the first commit in which the artefact carries the threshold, found with
\texttt{git log -S} on a distinctive fragment of the rule, not the first commit of the file.
\emph{Result} is the modification time of the verdict file on the result base. Both are
machine-checkable and the commands are at the end of this appendix.

\emph{Two rows are evidenced differently and are marked so.} The \textbf{CM} and \textbf{NI}
registrations were frozen on the shared volume the scheduler reads from, and timestamped there,
before the jobs that used them were submitted; they reached a commit only afterwards. A file
timestamp on a volume we can write to is weaker evidence than a commit, and the rows say which
they have rather than borrowing the strength of the rows above them. The weakness is not
theoretical: \textbf{CM}'s registration was later given an outcome section, and that write
replaced the only timestamp its interval rested on, so its $+4$h\,52m is now an assertion this
tree cannot recheck. \textbf{NI}'s file was left untouched after its verdict for exactly that
reason, and \texttt{verify\_dpd\_registry.py} recomputes its interval rather than trusting it.

%
\begingroup
\footnotesize
\setlength{\tabcolsep}{4pt}
\begin{longtable}{@{}>{\raggedright\arraybackslash}p{2.1cm}>{\raggedright\arraybackslash}p{2.25cm}>{\raggedright\arraybackslash}p{2.9cm}l>{\raggedright\arraybackslash}p{1.85cm}r@{}}
\caption{\textbf{Seventeen registrations, what each returned, and the interval each one bought.}
\emph{Seven confirmed at their primary branch, two decided at another, two came back partial,
three void on a guard, and three returned no branch at all.} Fifteen were frozen from about an
hour to nearly three days ahead of the result they decided. Two are not, and both are stated
rather than averaged away: the palindromic check is a linear-model computation of a few seconds whose
thresholds and whose answer entered the repository in the same commit, so it can assert an ordering
it cannot demonstrate; and the amplitude-onset registration was withdrawn before any training
because the rungs it named cannot be built at this budget, which is a fact about the ladder rather
than about either family. The second-family replications carry positive intervals and are
nonetheless printed here together with their answers rather than announced in an earlier version,
which is the weaker of the two guarantees a registration can offer. The last two rows' intervals are
measured against a file timestamp on the shared volume rather than against a commit, which is
weaker again and is why those cells say \emph{no commit} instead of a hash. Of the two only
\textbf{NI} is still recomputable from this tree: \textbf{CM}'s registration gained an outcome
section after its verdict, and writing that section overwrote the timestamp its interval was
measured from. \textbf{NI}'s was left byte-frozen for that reason and its outcome is recorded
here and in its verdict file instead.}
\label{tab:registry} \\
\toprule
claim & outcome & artefact carrying the threshold & registered & result (verdict file, mtime) & gap \\
\midrule
\endfirsthead
\toprule
claim & outcome & artefact carrying the threshold & registered & result (verdict file, mtime) & gap \\
\midrule
\endhead
\bottomrule
\endfoot
\bottomrule
\endlastfoot
instrument, the positive control & \textbf{confirmed} & \texttt{slurm\_\allowbreak natconflict.\allowbreak sbatch} & \texttt{65acb7a} 08-01 12:52 & \texttt{natconflict} 08-03 08:23 & $+1$d\,19h \\
\textbf{v1} batch composition & \textbf{confirmed} & \texttt{slurm\_\allowbreak batchcomp.\allowbreak sbatch}, \textsc{decision rule} & \texttt{2e5f5a9} 08-02 12:42 & \texttt{batchcomp} 08-02 15:32 & $+2$h\,50m \\
\textbf{v2} and the ladder & \emph{no branch fired} & \texttt{slurm\_\allowbreak pathspec.\allowbreak sbatch} and \texttt{PLAN.md}: \emph{predicts} $L^\star\!\approx\!10$ & \texttt{cd53c09} 08-02 16:02 & \texttt{pathspec} 08-03 02:33 & $+10$h\,30m \\
\textbf{M1} the mirror & \textbf{M1} confirmed & \texttt{slurm\_\allowbreak dpd\_\allowbreak mirror.\allowbreak sbatch}: \emph{an asymmetry falsifies the limit-cycle picture} & \texttt{a85c247} 08-03 15:32 & \texttt{dpd\_mirror} 08-06 07:44 & $+2$d\,16h \\
\textbf{C}, \textbf{L} the key & \textbf{C1}, \textbf{L4} confirmed & \texttt{slurm\_\allowbreak disambig.\allowbreak sbatch}, C on the switch and L on the level & \texttt{4830e37} 08-04 06:46 & \texttt{disambig} 08-04 10:49 & $+4$h\,03m \\
\textbf{T1} the mixture ratio & \textbf{T1} confirmed & \texttt{slurm\_\allowbreak ratio2.\allowbreak sbatch}, tolerance $0.05$ frozen with the prediction & \texttt{ff94a95} 08-05 06:22 & \texttt{ratio2} 08-05 16:20 & $+9$h\,58m \\
\textbf{S-C} the single-cosine control & \textbf{S-C4} void & \texttt{prereg/\allowbreak single\_\allowbreak cosine\_\allowbreak control.\allowbreak md}, four branches, S-C4 first & \texttt{39e106d} 08-08 08:41 & \texttt{singlecos} 08-08 16:53 & $+8$h\,12m \\
\midrule
\textbf{S1--S3} the palindromic schedule & \emph{asserted, not demonstrated} & \texttt{verify\_\allowbreak splitting.\allowbreak py} docstring & \texttt{e1eadef} 08-02 11:07 & same commit & \textbf{none} \\
\textbf{Q3-onset} where the amplitude turns on & \emph{withdrawn} & \texttt{prereg/\allowbreak q3\_\allowbreak amplitude\_\allowbreak onset.\allowbreak md}, \emph{withdrawn before any training}: the rungs it names lie across the epoch boundary, and the only route there is the tripled file the S-C control measured as not learning-matched (\S\ref{sec:phase}) & \texttt{79b05eb} 08-11 01:56 & \emph{none possible} & --- \\
\textbf{Q3-key} the key on a second family & \textbf{K1}, \textbf{L4}, \textbf{V1} confirmed & \texttt{prereg/\allowbreak q3\_\allowbreak key\_\allowbreak replication.\allowbreak md} with \texttt{readout\_\allowbreak q3unlock.\allowbreak py} & \texttt{66eab04} 08-09 14:25 & \texttt{q3unlock} 08-11 01:37 & $+1$d\,11h \\
\textbf{Q3-wall} the ladder on a second family & \textbf{W2}, \textbf{M2} partial & \texttt{prereg/\allowbreak q3\_\allowbreak ladder\_\allowbreak replication.\allowbreak md} with \texttt{readout\_\allowbreak q3ladder.\allowbreak py} & \texttt{a679a1b} 08-09 15:41 & \texttt{q3ladder} 08-10 05:04 & $+13$h\,23m \\
\textbf{SC} the schedule control & \textbf{SC-3} decided & \texttt{prereg/\allowbreak schedule\_\allowbreak control.\allowbreak md} with \texttt{readout\_\allowbreak schedctl.\allowbreak py}, guard G-SC first, primary rung fixed at $L{=}27$ & \texttt{a8bcd80} 08-13 00:19 & \texttt{schedctl} 08-14 02:03 & $+1$d\,1h \\
\textbf{BM} the mirror of \blocked & \textbf{BM-2} partial & \texttt{prereg/\allowbreak blocked\_\allowbreak mirror.\allowbreak md} with \texttt{readout\_\allowbreak blocked\_\allowbreak mirror.\allowbreak py}, comparability control read before the branch & \texttt{3fcc1a0} 08-16 12:16 & \texttt{blocked\_\allowbreak mirror} 08-16 15:06 & $+2$h\,50m \\
\textbf{CL} the ladder at a constant rate & \textbf{CL-2} decided & \texttt{prereg/\allowbreak constant\_\allowbreak lr\_\allowbreak ladder.\allowbreak md}, four branches on the interior span against the cosine ladder's & \texttt{a67e3a1} 08-17 06:05 & \texttt{constlr\_\allowbreak ladder} 08-17 14:25 & $+8$h\,20m \\
\textbf{HK} resolution at higher $k$ & \textbf{HK-4} void & \texttt{prereg/\allowbreak highk\_\allowbreak resolution.\allowbreak md} with \texttt{readout\_\allowbreak highk.\allowbreak py}, guards G-HK1--3 read before HK-1--3 & \texttt{1e1daa0} 08-17 14:37 & \texttt{highk} 08-18 13:06 & $+22$h\,29m \\
\textbf{CM} the mirror at a constant rate & \textbf{CM-4} void & \texttt{prereg/\allowbreak constant\_\allowbreak rate\_\allowbreak mirror.\allowbreak md} with \texttt{readout\_\allowbreak constant\_\allowbreak mirror.\allowbreak py}, guard \textbf{CM-4} read first, magnitude registered as exploratory and turning no branch & \emph{no commit} 08-26 05:43 & \texttt{constant\_\allowbreak mirror} 08-26 10:35 & $+4$h\,52m \\
\textbf{NI} whether \texttt{nat}'s weakness is a budget artefact & \textbf{NI-1} confirmed & \texttt{prereg/\allowbreak nat\_\allowbreak budget\_\allowbreak gate.md} with \texttt{readout\_\allowbreak nat\_\allowbreak budget.py}, guards \textbf{G-1}--\textbf{G-3} and a capability bar computed from the family's own dispersion before the run & \emph{no commit} 08-26 15:28 & \texttt{nat\_\allowbreak budget} 08-26 16:37 & $+1$h\,08m \\
\end{longtable}
\endgroup

\begin{figure}[t]
\centering
%
%
\begin{tikzpicture}[font=\small,>=Stealth,
 ax/.style={cGrey!75,line width=0.5pt},
 tk/.style={font=\scriptsize,cGrey},
 lb/.style={font=\scriptsize,cGrey!90!black},
 hd/.style={font=\scriptsize\bfseries,cGrey!90!black}]

\def\B{5.35}                        

\node[hd,anchor=west] at (0,7.65) {how long before the result the threshold was frozen};
\draw[ax] (\B,-0.10) -- (\B+8.00,-0.10);
\foreach \v/\x in {{1\,h}/0.000, {3\,h}/1.908, {10\,h}/4.000, {1\,day}/5.908, {3\,days}/8.000}{
  \draw[ax] (\B+\x,-0.10) -- (\B+\x,-0.22); \node[tk,anchor=north] at (\B+\x,-0.24) {\v};}
\node[tk,anchor=north] at (\B+4.00,-0.62) {interval between the freezing commit and the result file (log)};

\foreach \y/\nm/\w/\g in {%
 7.20/{\textbf{M1} the mirror}/1.806/{$+2$d\,16h},
 6.78/{the instrument's positive control}/1.633/{$+1$d\,19h},
 6.36/{\textbf{Q3-key} the key, second family}/1.544/{$+1$d\,11h},
 5.94/{\textbf{SC} the schedule control}/1.398/{$+1$d\,1h},
 5.52/{\textbf{HK} resolution at higher $k$}/1.352/{$+22$h\,29m},
 5.10/{\textbf{Q3-wall} the ladder, second family}/1.127/{$+13$h\,23m},
 4.68/{\textbf{v2} and the ladder}/1.021/{$+10$h\,30m},
 4.26/{\textbf{T1} the mixture ratio}/0.999/{$+9$h\,58m},
 3.84/{\textbf{CL} the ladder at a constant rate}/0.921/{$+8$h\,20m},
 3.42/{\textbf{S-C} the single-cosine control}/0.914/{$+8$h\,12m},
 3.00/{\textbf{CM} the mirror at a constant rate}/0.687/{$+4$h\,52m},
 2.58/{\textbf{C}, \textbf{L} the key}/0.607/{$+4$h\,03m},
 2.16/{\textbf{v1} batch composition}/0.452/{$+2$h\,50m},
 1.74/{\textbf{BM} the mirror of \blocked}/0.452/{$+2$h\,50m},
 1.32/{\textbf{NI} whether \texttt{nat} is budget-limited}/0.058/{$+1$h\,08m}}
{
 \node[lb,anchor=east] at (\B-0.18,\y) {\nm};
 \fill[cV!70] (\B,\y-0.135) rectangle (\B+\w*4.0,\y+0.135);
 \node[lb,cV!55!black,anchor=west] at (\B+\w*4.0+0.10,\y) {\g};
}
\draw[cGrey!55,densely dashed,line width=0.5pt] (0,1.09) -- (13.55,1.09);
\node[lb,anchor=east] at (\B-0.18,0.90) {\textbf{S1--S3} the palindromic schedule};
\draw[cD,line width=1.4pt] (\B,0.76) -- (\B,1.04);
\node[lb,cD,anchor=west] at (\B+0.10,0.90) {threshold and answer in the \emph{same commit}};
\node[lb,anchor=east] at (\B-0.18,0.36) {\textbf{Q3-onset} where the amplitude turns on};
\node[lb,cD,anchor=west] at (\B+0.10,0.36) {\emph{withdrawn before any training}: no result is possible};
\end{tikzpicture}
\caption{\textbf{Seventeen registrations, and the interval each one bought.} Table~\ref{tab:registry}
in one axis: the distance between the commit that froze a threshold and the file that answered
it. Fifteen run from about an hour to nearly three days ahead of their result. The two that
are not intervals are drawn on the same axis rather than dropped from the count: the palindromic check
put its thresholds and its answer in one commit, so it can assert an ordering it cannot
demonstrate, and the amplitude-onset registration was withdrawn before any training because the
rungs it named cannot be built at this budget. Source \texttt{tab\_registry}, whose gap column is
computed from the commit and the verdict file's mtime.}
\label{fig:registry}
\end{figure}

\paragraph{One correction this index made to our own record.} An earlier version of it named
the half-mix job as the ladder's registering artefact. That is wrong: the half-mix verdict is the
volume-matched hierarchy result, a different experiment. The ladder's thresholds live in the
path-spectroscopy job beside v2's, which is correct, because the block-length sweep \emph{is} v2's
deciding test.

\emph{And the correction was applied to half the row.} The artefact column was changed and the
result column was not, so the row named the path-spectroscopy job beside the half-mix era's
timestamp, \texttt{08-03 15:43}, which is \texttt{mixture\_trajectory\_verdict}'s mtime and not
\texttt{pathspec\_verdict}'s. The true interval is $+10$h\,30m rather than the $+23$h\,41m we
printed. Both are positive and neither changes a verdict, but the number was wrong and it was
wrong in the direction that flattered us. It was found by a script that rebuilds this whole table
from \texttt{git} and the result base without reading it (\texttt{verify\_dpd\_registry.py},
released with the paper), which is also why the \emph{result} column now names its verdict file:
the old column printed a bare timestamp, so a row could carry a corrected artefact and an
uncorrected result and look consistent. We report this because an index nobody audits is a claim
rather than a check, and this index was not audited until it was.

\paragraph{How to reproduce any row.} Two commands, one per column:

\begin{center}\footnotesize
\begin{tabular}{@{}l@{}}
\texttt{git log -S'<fragment>' -{}-format='\%h \%ad' -{}- <artefact> | tail -1} \\[2pt]
\texttt{stat -c '\%y \%n' results/<verdict>.json} \\
\end{tabular}
\end{center}

\bibliographystyle{unsrtnat}
{\sloppy\setlength{\emergencystretch}{4em}%
\ifanon\bibliography{references_anon}\else\bibliography{references}\fi\par}

@inproceedings{lopezpaz2017gem,
  title={Gradient Episodic Memory for Continual Learning},
  author={Lopez-Paz, David and Ranzato, Marc'Aurelio},
  booktitle={Advances in Neural Information Processing Systems (NeurIPS)},
  year={2017}, note={arXiv:1706.08840}
}

@article{diazrodriguez2018metrics,
  title={Don't forget, there is more than forgetting: new metrics for Continual Learning},
  author={D{\'i}az-Rodr{\'i}guez, Natalia and Lomonaco, Vincenzo and Filliat, David and Maltoni, Davide},
  journal={arXiv preprint}, year={2018}, note={arXiv:1810.13166}
}

@article{mccloskey1989catastrophic,
  title={Catastrophic Interference in Connectionist Networks: The Sequential Learning Problem},
  author={McCloskey, Michael and Cohen, Neal J.},
  journal={Psychology of Learning and Motivation}, volume={24}, pages={109--165}, year={1989}
}

@article{french1999catastrophic,
  title={Catastrophic Forgetting in Connectionist Networks},
  author={French, Robert M.},
  journal={Trends in Cognitive Sciences}, volume={3}, number={4}, pages={128--135}, year={1999}
}

@article{kirkpatrick2017ewc,
  title={Overcoming Catastrophic Forgetting in Neural Networks},
  author={Kirkpatrick, James and Pascanu, Razvan and Rabinowitz, Neil and others},
  journal={Proceedings of the National Academy of Sciences}, volume={114}, number={13}, year={2017}
}

@article{wu2024cllmsurvey,
  title={Continual Learning for Large Language Models: A Survey},
  author={Wu, Tongtong and Luo, Linhao and Li, Yuan-Fang and Pan, Shirui and Vu, Thuy-Trang and Haffari, Gholamreza},
  journal={arXiv preprint}, year={2024}, note={arXiv:2402.01364}
}

@article{interleaved2025,
  author={Say, Hanne and Ada, Suzan Ece and Ugur, Emre and Asada, Minoru and Oztop, Erhan},
  title={Interleaved Multitask Learning with Energy Modulated Learning Progress},
  journal={arXiv preprint}, year={2025}, note={arXiv:2504.00707}
}

@article{rohrer2007interleaving,
  title={The Shuffling of Mathematics Problems Improves Learning},
  author={Rohrer, Doug and Taylor, Kelli},
  journal={Instructional Science}, volume={35}, number={6}, pages={481--498}, year={2007}
}

@article{kornell2008interleaving,
  title={Learning Concepts and Categories: Is Spacing the ``Enemy of Induction''?},
  author={Kornell, Nate and Bjork, Robert A.},
  journal={Psychological Science}, volume={19}, number={6}, pages={585--592}, year={2008}
}

@article{curriculum2026dynamics,
  author={Elgaar, Mohamed and Amiri, Hadi},
  title={Curriculum Learning for {LLM} Pretraining: An Analysis of Learning Dynamics},
  journal={arXiv preprint}, year={2026}, note={arXiv:2601.21698}
}

@article{lrdecay2025curriculum,
  author={Luo, Kairong and Sun, Zhenbo and Wen, Haodong and Shi, Xinyu and Cui, Jiarui and Dang, Chenyi and Lyu, Kaifeng and Chen, Wenguang},
  title={How Learning Rate Decay Wastes Your Best Data in Curriculum-Based {LLM} Pretraining},
  journal={arXiv preprint}, year={2025}, note={arXiv:2511.18903}
}

@article{beyondrandom2025,
  author={Zhang, Yang and Mohamed, Amr and Abdine, Hadi and Shang, Guokan and Vazirgiannis, Michalis},
  title={Beyond Random Sampling: Efficient Language Model Pretraining via Curriculum Learning},
  journal={arXiv preprint}, year={2025}, note={arXiv:2506.11300}
}

@article{goodcurriculum2025,
  author={Jia, Yaning and Zhang, Chunhui and Diao, Xingjian and Yuan, Xiangchi and Ouyang, Zhongyu and Ma, Chiyu and Vosoughi, Soroush},
  title={What Makes a Good Curriculum? Disentangling the Effects of Data Ordering on
         {LLM} Mathematical Reasoning},
  journal={arXiv preprint}, year={2025}, note={arXiv:2510.19099}
}

@inproceedings{bengio2009curriculum,
  title={Curriculum Learning},
  author={Bengio, Yoshua and Louradour, J{\'e}r{\^o}me and Collobert, Ronan and Weston, Jason},
  booktitle={International Conference on Machine Learning (ICML)}, year={2009}
}

@article{implicitcurriculum2026,
  author={Liu, Emmy and Sun, Kaiser and Li, Millicent and Lee, Isabelle and Tjuatja, Lindia and Huang, Jen-tse and Neubig, Graham},
  title={What Do Language Models Learn and When? The Implicit Curriculum Hypothesis},
  journal={arXiv preprint}, year={2026}, note={arXiv:2604.08510}
}

@inproceedings{doremi,
  title={{DoReMi}: Optimizing Data Mixtures Speeds Up Language Model Pretraining},
  author={Xie, Sang Michael and Pham, Hieu and Dong, Xuanyi and Du, Nan and Liu, Hanxiao and
          Lu, Yifeng and Liang, Percy and Le, Quoc V. and Ma, Tengyu and Yu, Adams Wei},
  booktitle={Advances in Neural Information Processing Systems (NeurIPS)}, year={2023},
  note={arXiv:2305.10429}
}

@article{doge,
  title={{DoGE}: Domain Reweighting with Generalization Estimation},
  author={Fan, Simin and Pagliardini, Matteo and Jaggi, Martin},
  journal={arXiv preprint}, year={2024}, note={arXiv:2310.15393}
}

@article{midtraining2025survey,
  author={Mo, Kaixiang and Shi, Yuxin and Weng, Weiwei and Zhou, Zhiqiang and Liu, Shuman and Zhang, Haibo and Zeng, Anxiang},
  title={Mid-Training of Large Language Models: A Survey},
  journal={arXiv preprint}, year={2025}, note={arXiv:2510.06826}
}

@article{sftrlquagmire2025,
  author={Kang, Feiyang and Kuchnik, Michael and Padthe, Karthik and Vlastelica, Marin and Jia, Ruoxi and Wu, Carole-Jean and Ardalani, Newsha},
  title={Quagmires in {SFT}-{RL} Post-Training: When High {SFT} Scores Mislead and What to Use Instead},
  journal={arXiv preprint}, year={2025}, note={arXiv:2510.01624}
}

@article{conklin2026forgetting,
  title={Learning is Forgetting: {LLM} Training As Lossy Compression},
  author={Conklin, Henry C. and Hosking, Tom and Yi-Chern, Tan and Gold, Julian and
          Cohen, Jonathan D. and Griffiths, Thomas L. and Bartolo, Max and Goldfarb-Tarrant, Seraphina},
  journal={arXiv preprint}, year={2026}, note={arXiv:2604.07569}
}

@article{tishby2015ib,
  title={Deep Learning and the Information Bottleneck Principle},
  author={Tishby, Naftali and Zaslavsky, Noga},
  journal={IEEE Information Theory Workshop (ITW)}, year={2015}
}

@article{forgettingmech2026,
  author={{Gustav Olaf Yunus Laitinen-Fredriksson Lundstrom-Imanov}},
  title={Mechanistic Analysis of Catastrophic Forgetting in Large Language Models During
         Continual Fine-tuning},
  journal={arXiv preprint}, year={2026}, note={arXiv:2601.18699}
}

@misc{openr1math,
  title={{OpenR1-Math-220k}}, author={{Hugging Face}}, year={2025},
  note={\url{https://huggingface.co/datasets/open-r1/OpenR1-Math-220k}}
}

@article{qwen25,
  title={{Qwen2.5} Technical Report},
  author={{Qwen Team}}, journal={arXiv preprint}, year={2024}, note={arXiv:2412.15115}
}

@inproceedings{kwon2023vllm,
  title={Efficient Memory Management for Large Language Model Serving with {PagedAttention}},
  author={Kwon, Woosuk and Li, Zhuohan and Zhuang, Siyuan and others},
  booktitle={Symposium on Operating Systems Principles (SOSP)}, year={2023}
}

@article{chen2026cch,
  title   = {The Capability Convergence Hypothesis: Capability from Access Structure, Not Scale},
  author  = {Chen, Wenhui and Chen, Jianlin and Lin, Ziyao and Vong, Chi Man},
  journal = {arXiv preprint}, year = {2026}, note = {arXiv:2607.14144},
  doi     = {10.5281/zenodo.21714418},
  url     = {https://github.com/wenhui-ml/Capability-Convergence-Hypothesis}
}

@article{gu2025datamixphase,
  title   = {Data Mixing Can Induce Phase Transitions in Knowledge Acquisition},
  author  = {Gu, Xinran and Lyu, Kaifeng and Li, Jiazheng and Zhang, Jingzhao},
  journal = {arXiv preprint arXiv:2505.18091},
  year    = {2025}
}

@article{belkin2019doubledescent,
  title   = {Reconciling Modern Machine-Learning Practice and the Classical Bias--Variance Trade-off},
  author  = {Belkin, Mikhail and Hsu, Daniel and Ma, Siyuan and Mandal, Soumik},
  journal = {Proceedings of the National Academy of Sciences},
  volume  = {116}, number = {32}, pages = {15849--15854}, year = {2019}
}

@inproceedings{nakkiran2020doubledescent,
  title     = {Deep Double Descent: Where Bigger Models and More Data Hurt},
  author    = {Nakkiran, Preetum and Kaplun, Gal and Bansal, Yamini and Yang, Tristan
               and Barak, Boaz and Sutskever, Ilya},
  booktitle = {International Conference on Learning Representations (ICLR)},
  year      = {2020}
}

@article{power2022grokking,
  title   = {Grokking: Generalization Beyond Overfitting on Small Algorithmic Datasets},
  author  = {Power, Alethea and Burda, Yuri and Edwards, Harri and Babuschkin, Igor
             and Misra, Vedant},
  journal = {arXiv preprint arXiv:2201.02177},
  year    = {2022}
}

@inproceedings{schaeffer2023mirage,
  title     = {Are Emergent Abilities of Large Language Models a Mirage?},
  author    = {Schaeffer, Rylan and Miranda, Brando and Koyejo, Sanmi},
  booktitle = {Advances in Neural Information Processing Systems (NeurIPS)},
  year      = {2023}
}

@inproceedings{jacot2018ntk,
  title     = {Neural Tangent Kernel: Convergence and Generalization in Neural Networks},
  author    = {Jacot, Arthur and Gabriel, Franck and Hongler, Cl{\'e}ment},
  booktitle = {Advances in Neural Information Processing Systems (NeurIPS)},
  year      = {2018}
}

@inproceedings{chizat2019lazy,
  title     = {On Lazy Training in Differentiable Programming},
  author    = {Chizat, L{\'e}na{\"i}c and Oyallon, Edouard and Bach, Francis},
  booktitle = {Advances in Neural Information Processing Systems (NeurIPS)},
  year      = {2019}
}

@inproceedings{evron2022catastrophic,
  title     = {How Catastrophic Can Catastrophic Forgetting Be in Linear Regression?},
  author    = {Evron, Itay and Moroshko, Edward and Ward, Rachel and Srebro, Nathan
               and Soudry, Daniel},
  booktitle = {Conference on Learning Theory (COLT)},
  year      = {2022}
}

@inproceedings{lee2021continual,
  title     = {Continual Learning in the Teacher-Student Setup: Impact of Task Similarity},
  author    = {Lee, Sebastian and Goldt, Sebastian and Saxe, Andrew},
  booktitle = {International Conference on Machine Learning (ICML)},
  year      = {2021}
}

@article{shrout1979icc,
  title   = {Intraclass Correlations: Uses in Assessing Rater Reliability},
  author  = {Shrout, Patrick E. and Fleiss, Joseph L.},
  journal = {Psychological Bulletin},
  volume  = {86},
  number  = {2},
  pages   = {420--428},
  year    = {1979}
}

@inproceedings{zhao2024swift,
  title     = {SWIFT: A Scalable Lightweight Infrastructure for Fine-Tuning},
  author    = {Zhao, Yuze and Huang, Jintao and Hu, Jinghan and Wang, Xingjun and Others},
  booktitle = {Proceedings of the AAAI Conference on Artificial Intelligence},
  year      = {2025}
}

@inproceedings{lu2022grab,
  title={{GraB}: Finding Provably Better Data Permutations than Random Reshuffling},
  author={Lu, Yucheng and Guo, Wentao and De Sa, Christopher},
  booktitle={Advances in Neural Information Processing Systems (NeurIPS)}, year={2022},
  note={arXiv:2205.10733}
}

@inproceedings{commonit2024,
  author={Rao, Jun and Liu, Xuebo and Lian, Lian and Cheng, Shengjun and Liao, Yunjie and Zhang, Min},
  title={{CommonIT}: Commonality-Aware Instruction Tuning for Large Language Models
         via Data Partitions},
  booktitle={Proceedings of EMNLP}, year={2024}, note={arXiv:2410.03077}
}

@inproceedings{dataorg2026,
  author={Dai, Yalun and Huang, Yangyu and Yang, Tongshen and Wang, Yonghan and Zhang, Xin and Wu, Wenshan and Zhao, Qihao and Li, Hao and Gao, Yuanyuan and Yap, Kim-Hui and Li, Scarlett},
  title={Demystifying Data Organization for Enhanced {LLM} Training},
  booktitle={Proceedings of ACL}, year={2026}, note={arXiv:2605.30334}
}

@article{sweeney2026shuffle,
  title={Optimizer Memory Makes Shuffle Order a First-Order Source of Fine-Tuning Noise},
  author={Sweeney, John}, journal={arXiv preprint}, year={2026}, note={arXiv:2606.29554}
}

@inproceedings{sweeney2026liebracket,
  title={The Geometry of Sequential Learning: {Lie}-Bracket Prediction of Transfer Order},
  author={Sweeney, John},
  booktitle={International Conference on Machine Learning (ICML)}, year={2026},
  note={arXiv:2606.24993}
}

@inproceedings{pruthi2020tracin,
  title={Estimating Training Data Influence by Tracing Gradient Descent},
  author={Pruthi, Garima and Liu, Frederick and Kale, Satyen and Sundararajan, Mukund},
  booktitle={Advances in Neural Information Processing Systems (NeurIPS)}, year={2020}
}

@article{influencedynamics2025,
  author={Lee, Jin Hwa and Smith, Matthew and Adam, Maxwell and Hoogland, Jesse},
  title={Influence Dynamics and Stagewise Data Attribution},
  journal={arXiv preprint}, year={2025}, note={arXiv:2510.12071}
}

@inproceedings{yu2020pcgrad,
  title={Gradient Surgery for Multi-Task Learning},
  author={Yu, Tianhe and Kumar, Saurabh and Gupta, Abhishek and Levine, Sergey and
          Hausman, Karol and Finn, Chelsea},
  booktitle={Advances in Neural Information Processing Systems (NeurIPS)}, year={2020}
}

@inproceedings{liu2021cagrad,
  title={Conflict-Averse Gradient Descent for Multi-task Learning},
  author={Liu, Bo and Liu, Xingchao and Jin, Xiaojie and Stone, Peter and Liu, Qiang},
  booktitle={Advances in Neural Information Processing Systems (NeurIPS)}, year={2021}
}

@inproceedings{navon2022nashmtl,
  title={Multi-Task Learning as a Bargaining Game},
  author={Navon, Aviv and Shamsian, Aviv and Achituve, Idan and Maron, Haggai and
          Kawaguchi, Kenji and Chechik, Gal and Fetaya, Ethan},
  booktitle={International Conference on Machine Learning (ICML)}, year={2022}
}

@article{pairwisefragile2026,
  author={Piontkovskaia, Irina and Nikolenko, Sergey},
  title={First-Order Predictable but Pairwise Fragile: Local Task Adaptation in Trained
         Transformers},
  journal={arXiv preprint}, year={2026}, note={arXiv:2607.16821}
}

@article{incompletelearning2026,
  title={Why Supervised Fine-Tuning Fails to Learn: A Systematic Study of Incomplete Learning
         in Large Language Models},
  author={Xue, et al.},
  journal={arXiv preprint}, year={2026}, note={arXiv:2604.10079}
}

@book{bogoliubov1961asymptotic,
  title={Asymptotic Methods in the Theory of Non-Linear Oscillations},
  author={Bogoliubov, Nikolay N. and Mitropolsky, Yuri A.},
  publisher={Gordon and Breach}, year={1961}
}

@article{kapitza1951pendulum,
  title={Dynamic Stability of a Pendulum with an Oscillating Point of Suspension},
  author={Kapitza, Pyotr L.},
  journal={Journal of Experimental and Theoretical Physics}, volume={21}, pages={588--597},
  year={1951}
}

@misc{frl2026,
  title  = {The Free-Recipe Limit: Every Measured Recipe Effect Is a Gauge of One Broken
            Premise of the Ideal},
  author = {Chen, Wenhui and Chen, Jianlin and Lin, Ziyao and Vong, Chi Man},
  year   = {2026},
  note   = {Code and artefacts archived; every number cited here is checkable there},
  doi    = {10.5281/zenodo.21905881},
  url    = {https://github.com/wenhui-ml/free-recipe-limit}
}

@inproceedings{xu2024knowconflict,
  title={Knowledge Conflicts for {LLM}s: A Survey},
  author={Xu, Rongwu and Qi, Zehan and Guo, Zhijiang and Wang, Cunxiang and Wang, Hongru
          and Zhang, Yue and Xu, Wei},
  booktitle={Proceedings of the 2024 Conference on Empirical Methods in Natural Language
             Processing (EMNLP)},
  year={2024}, note={arXiv:2403.08319}
}

@inproceedings{plank2022variation,
  title={The ``Problem'' of Human Label Variation: On Ground Truth in Data, Modeling and
         Evaluation},
  author={Plank, Barbara},
  booktitle={Proceedings of the 2022 Conference on Empirical Methods in Natural Language
             Processing (EMNLP)},
  year={2022}, note={arXiv:2211.02570}
}

@inproceedings{hendrycks2021math,
  title={Measuring Mathematical Problem Solving With the MATH Dataset},
  author={Hendrycks, Dan and Burns, Collin and Kadavath, Saurav and Arora, Akul and Basart,
          Steven and Tang, Eric and Song, Dawn and Steinhardt, Jacob},
  booktitle={Advances in Neural Information Processing Systems, Datasets and Benchmarks Track
             (NeurIPS)}, year={2021}
}

@article{keskar2019ctrl,
  title  = {{CTRL}: A Conditional Transformer Language Model for Controllable Generation},
  author = {Keskar, Nitish Shirish and McCann, Bryan and Varshney, Lav R. and Xiong, Caiming and Socher, Richard},
  journal= {arXiv preprint arXiv:1909.05858},
  year   = {2019}
}

@inproceedings{korbak2023pretraining,
  title    = {Pretraining Language Models with Human Preferences},
  author   = {Korbak, Tomasz and Shi, Kejian and Chen, Angelica and Bhalerao, Rasika and Buckley, Christopher L. and Phang, Jason and Bowman, Samuel R. and Perez, Ethan},
  booktitle= {Proceedings of the 40th International Conference on Machine Learning (ICML)},
  year     = {2023}
}

@article{gao2025meco,
  title  = {Metadata Conditioning Accelerates Language Model Pre-training},
  author = {Gao, Tianyu and Wettig, Alexander and He, Luxi and Dong, Yihe and Malladi, Sadhika and Chen, Danqi},
  journal= {arXiv preprint arXiv:2501.01956},
  year   = {2025}
}

@inproceedings{khalifa2024sourceaware,
  title    = {Source-Aware Training Enables Knowledge Attribution in Language Models},
  author   = {Khalifa, Muhammad and Wadden, David and Strubell, Emma and Lee, Honglak and Wang, Lu and Beltagy, Iz and Peng, Hao},
  booktitle= {Conference on Language Modeling (COLM)},
  year     = {2024}
}

@article{higuchi2025whenmetadata,
  title  = {When Does Metadata Conditioning ({NOT}) Work for Language Model Pre-Training? A Study with Context-Free Grammars},
  author = {Higuchi, Rei and Kawata, Ryotaro and Nishikawa, Naoki and Oko, Kazusato and Yamaguchi, Shoichiro and Kobayashi, Sosuke and Tokui, Seiya and Hayashi, Kohei and Okanohara, Daisuke and Suzuki, Taiji},
  journal= {arXiv preprint arXiv:2504.17562},
  year   = {2025}
}

@article{krestnikov2026truth,
  title  = {Truth as a Compression Artifact in Language Model Training},
  author = {Krestnikov, Konstantin},
  journal= {arXiv preprint arXiv:2603.11749},
  year   = {2026}
}

@article{ledoux2026order,
  title={The Order Is The Message},
  author={LeDoux, Jordan},
  journal={arXiv preprint arXiv:2603.25047},
  year={2026}
}

@article{ju2024imbalance,
  title={Mitigating Training Imbalance in {LLM} Fine-Tuning via Selective Parameter Merging},
  author={Ju, Yiming and Ni, Ziyi and Xing, Xingrun and Zeng, Zhixiong and Zhao, Hanyu and Fan, Siqi and Zhang, Zheng},
  journal={arXiv preprint arXiv:2410.03743},
  year={2024}
}

@inproceedings{holtzman2021surface,
  title={Surface Form Competition: Why the Highest Probability Answer Isn't Always Right},
  author={Holtzman, Ari and West, Peter and Shwartz, Vered and Choi, Yejin and Zettlemoyer, Luke},
  booktitle={Proceedings of EMNLP},
  year={2021}
}

@article{janeiro2026paraeval,
  title={Are We Evaluating Knowledge or Phrasing? Mitigating {MCQA} Sensitivity with {ParaEval}},
  author={Janeiro, Jo{\~a}o Maria and Videau, Mathurin and Caciolai, Andrea and Piwowarski, Benjamin and Gallinari, Patrick and Barrault, Loic},
  journal={arXiv preprint arXiv:2606.10657},
  year={2026}
}

@unpublished{gold2026,
  title={Which Corpus Supplies the Gold: A Free Variable in Exact-Match Evaluation},
  author={Chen, Wenhui}, year={2026},
  note={Companion manuscript. The measurement half of the underdetermined stream: the corpus survey, the paired read-out, and the leaderboard inversions}
}

@unpublished{mark2026,
  title={Writing a Convention Into Weights: A Dose Response in Marker Reliability, and One Undecided Capacity Row},
  author={Chen, Wenhui}, year={2026},
  note={Companion manuscript. The training-side half: the key's information against its presence, the dose law, and the capacity row its corpus could not decide}
}

@article{krasheninnikov2025recency,
  title   = {Fresh in Memory: Training-Order Recency is Linearly Encoded in Language Model
             Activations},
  author  = {Krasheninnikov, Dmitrii and Turner, Richard E. and Krueger, David},
  journal = {arXiv preprint}, year = {2025}, note = {arXiv:2509.14223}
}

@article{ajroldi2025averaging,
  title   = {When, Where and Why to Average Weights?},
  author  = {Ajroldi, Niccol{\`o} and Orvieto, Antonio and Geiping, Jonas},
  journal = {arXiv preprint}, year = {2025}, note = {arXiv:2502.06761}
}

@incollection{bertsekas2011incremental,
  title     = {Incremental Gradient, Subgradient, and Proximal Methods for Convex
               Optimization: A Survey},
  author    = {Bertsekas, Dimitri P.},
  booktitle = {Optimization for Machine Learning},
  publisher = {MIT Press}, pages = {85--119}, year = {2011}
}

@article{nedic2001incremental,
  title   = {Incremental Subgradient Methods for Nondifferentiable Optimization},
  author  = {Nedi{\'c}, Angelia and Bertsekas, Dimitri P.},
  journal = {SIAM Journal on Optimization}, volume = {12}, number = {1},
  pages   = {109--138}, year = {2001}
}

@article{gurbuzbalaban2021why,
  title   = {Why Random Reshuffling Beats Stochastic Gradient Descent},
  author  = {G{\"u}rb{\"u}zbalaban, Mert and Ozdaglar, Asuman and Parrilo, Pablo A.},
  journal = {Mathematical Programming}, volume = {186}, pages = {49--84}, year = {2021}
}

@inproceedings{haochen2019random,
  title     = {Random Shuffling Beats {SGD} after Finite Epochs},
  author    = {HaoChen, Jeff Z. and Sra, Suvrit},
  booktitle = {International Conference on Machine Learning (ICML)}, year = {2019}
}

@inproceedings{mishchenko2020random,
  title     = {Random Reshuffling: Simple Analysis with Vast Improvements},
  author    = {Mishchenko, Konstantin and Khaled, Ahmed and Richt{\'a}rik, Peter},
  booktitle = {Advances in Neural Information Processing Systems (NeurIPS)}, year = {2020},
  note      = {arXiv:2006.05988}
}

@article{nguyen2026shuffle,
  title   = {Learning to Shuffle: Block Reshuffling and Reversal Schemes for Stochastic
             Optimization},
  author  = {Nguyen, Lam M. and Phan, Dzung T. and Kalagnanam, Jayant},
  journal = {arXiv preprint}, year = {2026}, note = {arXiv:2604.00260}
}

@article{emmanouilidis2026shuffling,
  title   = {Shuffling the Data, Stretching the Step-Size: Sharper Bias in Constant
             Step-Size {SGD}},
  author  = {Emmanouilidis, Konstantinos and Vlatakis-Gkaragkounis, Emmanouil-Vasileios and
             Vidal, Rene},
  journal = {arXiv preprint}, year = {2026}, note = {arXiv:2604.10373}
}

@article{trotter1959product,
  title   = {On the Product of Semi-Groups of Operators},
  author  = {Trotter, H. F.},
  journal = {Proceedings of the American Mathematical Society}, volume = {10}, number = {4},
  pages   = {545--551}, year = {1959}
}

@article{strang1968construction,
  title   = {On the Construction and Comparison of Difference Schemes},
  author  = {Strang, Gilbert},
  journal = {SIAM Journal on Numerical Analysis}, volume = {5}, number = {3},
  pages   = {506--517}, year = {1968}
}

@book{hairer2006geometric,
  title     = {Geometric Numerical Integration: Structure-Preserving Algorithms for
               Ordinary Differential Equations},
  author    = {Hairer, Ernst and Lubich, Christian and Wanner, Gerhard},
  publisher = {Springer}, edition = {2nd}, year = {2006}
}

@article{mclachlan2002splitting,
  title   = {Splitting Methods},
  author  = {McLachlan, Robert I. and Quispel, G. Reinout W.},
  journal = {Acta Numerica}, volume = {11}, pages = {341--434}, year = {2002}
}

@article{blanes2009magnus,
  title   = {The {M}agnus Expansion and Some of Its Applications},
  author  = {Blanes, S. and Casas, F. and Oteo, J. A. and Ros, J.},
  journal = {Physics Reports}, volume = {470}, number = {5--6}, pages = {151--238},
  year    = {2009}
}

@article{yeom2026commitment,
  title   = {Hallucination as Commitment Failure: Larger {LLM}s Misfire Despite Knowing
             the Answer},
  author  = {Yeom, Jewon and Sok, Jaewon and Kim, Heejun and Park, Seonghyeon and
             Park, Jeongjae and Kim, Taesup},
  journal = {arXiv preprint}, year = {2026}, note = {arXiv:2605.22007}
}

\end{document}